\newtheorem{assumption}{Assumption}
\newcommand{\eqmathboxl}[2]{\eqmakebox[#1][l]{$\displaystyle#2$}}
\newcommand{\resultstablescaling}{0.85}
\newcommand\blfootnote[1]{%
  \begingroup
  \renewcommand\thefootnote{}\footnote{#1}%
  \addtocounter{footnote}{-1}%
  \endgroup
}
\def\assumptionsref{}
\title{UQGAN: A Unified Model for Uncertainty Quantification of Deep Classifiers trained via Conditional GANs}
\author[1]{Philipp Oberdiek\thanks{\href{mailto:philipp.oberdiek@cs.tu-dortmund.de}{philipp.oberdiek@cs.tu-dortmund.de}}}
\author[1]{Gernot A. Fink\thanks{\href{mailto:gernot.fink@cs.tu-dortmund.de}{gernot.fink@cs.tu-dortmund.de}}}
\author[2,3]{Matthias Rottmann\thanks{\href{mailto:matthias.rottmann@epfl.ch}{matthias.rottmann@epfl.ch}}}
\affil[1]{TU Dortmund University}
\affil[2]{School of Computer and Communication Sciences, EPFL}
\affil[3]{School of Mathematics and Natural Sciences, University of Wuppertal}
\affil[ ]{}
\date{}
\begin{document}

\maketitle

\blfootnote{Accepted for publication at the 36th Conference on Neural Information Processing Systems (NeurIPS 2022).}

\begin{abstract}
  We present an approach to quantifying both aleatoric and epistemic uncertainty for deep neural networks in image classification, based on generative adversarial networks (GANs). While most works in the literature that use GANs to generate out-of-distribution (OoD) examples only focus on the evaluation of OoD detection, we present a GAN based approach to learn a classifier that produces proper uncertainties for OoD examples as well as for false positives (FPs). Instead of shielding the entire in-distribution data with GAN generated OoD examples which is state-of-the-art, we shield each class separately with out-of-class examples generated by a conditional GAN and complement this with a one-vs-all image classifier. In our experiments, in particular on CIFAR10, CIFAR100 and Tiny ImageNet, we improve over the OoD detection and FP detection performance of state-of-the-art GAN-training based classifiers. Furthermore, we also find that the generated GAN examples do not significantly affect the calibration error of our classifier and result in a significant gain in model accuracy.
The code and pre-trained weights can be found here: \url{https://github.com/RonMcKay/UQGAN}
\end{abstract}

\begin{figure}[h!]
    \centering
    \includegraphics[width=0.34\textwidth,keepaspectratio]{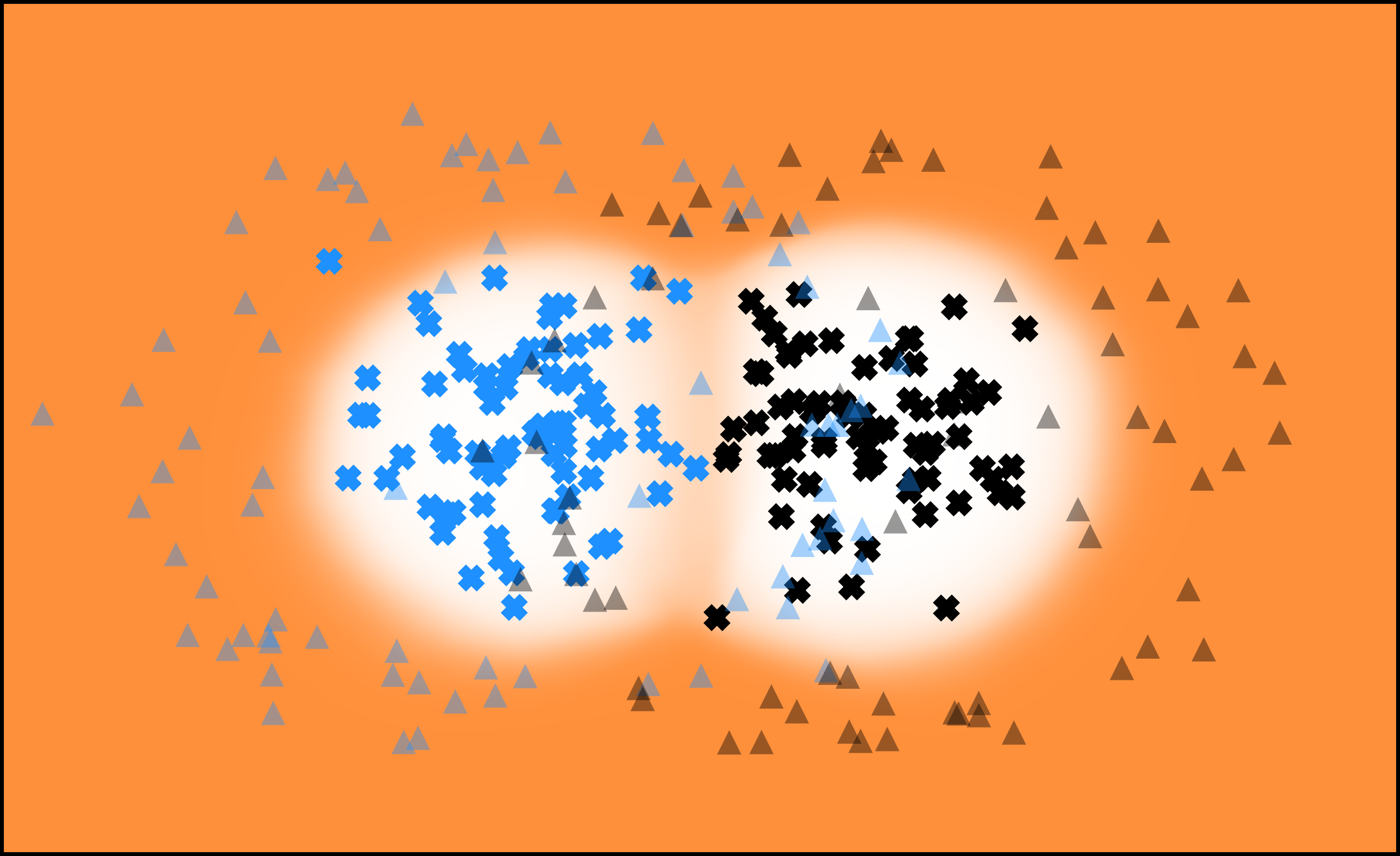}
    \hspace{0.1cm}
    \includegraphics[width=0.34\textwidth,keepaspectratio]{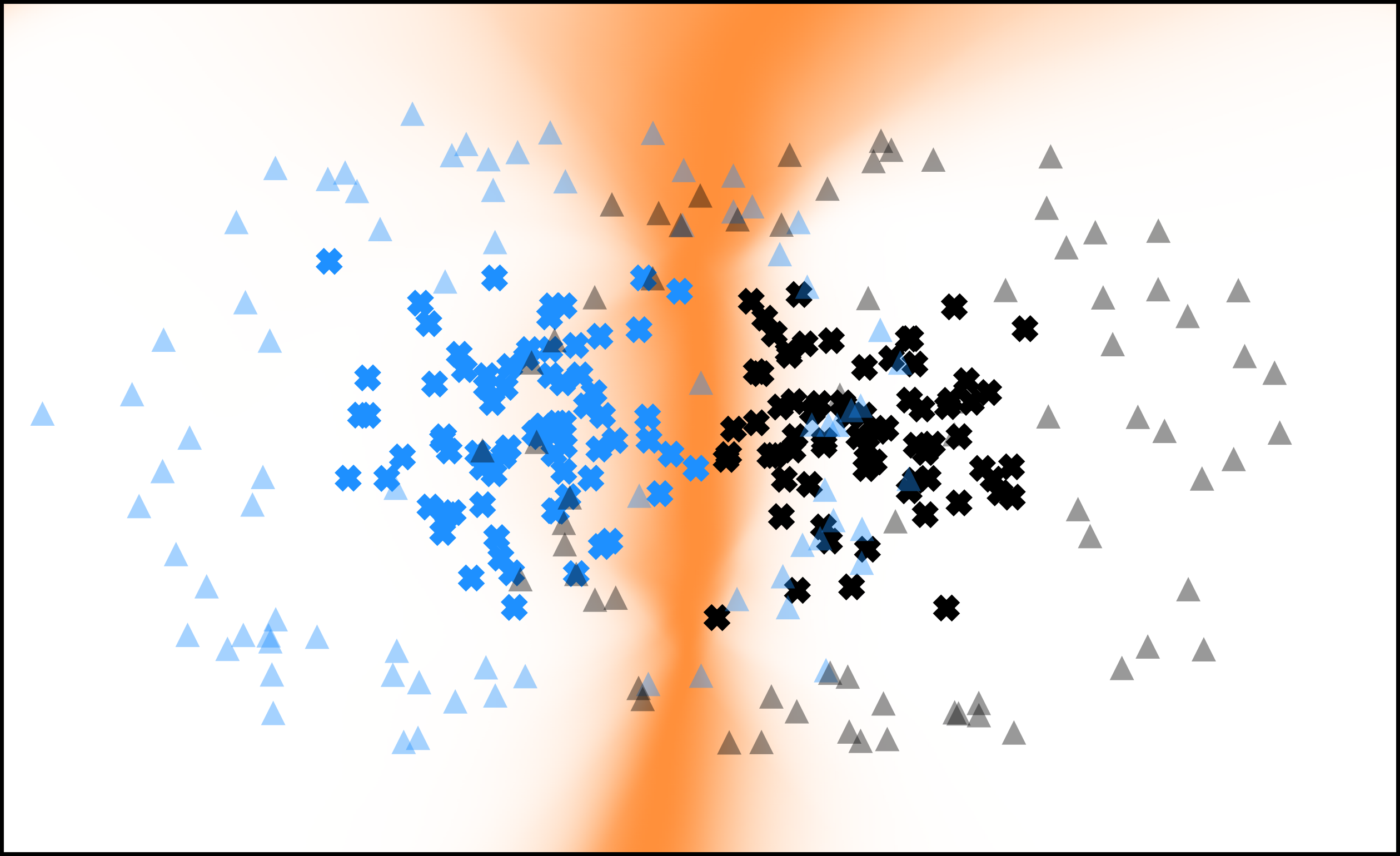}
    \caption{Toy example of two slightly overlapping Gaussian distributions. From left to right: 1. OoD heatmap with orange indicating a high probability of being OoD and white of in-distribution; 2. Aleatoric uncertainty (entropy over \Cref{eq:classifier}) with orange indicating high and white low uncertainty. Triangles indicate GAN-generated out-of-class examples and crosses correspond to the in-distribution data, while their color is coding the class membership.}
    \label{fig:toy_example}
\end{figure}

\section{Introduction}

Deep learning has shown outstanding performance in image classification tasks \cite{KrizhevskySH12,SimonyanZ14a,HeZRS16}.
%Universal approximation paired with huge model capacity and efficient computation have enabled the success of deep neural networks (DNNs).
However, due to the enormous capacity and the inability of rejecting examples, deep neural networks (DNNs) are not capable of expressing their uncertainties appropriately. DNNs have demonstrated the tendency to overfit training data \cite{SrivastavaHKSS14} and to be easily fooled into wrong class predictions with high confidence \cite{GoodfellowSS14,GuoPSW17}. Also, importantly, ReLU networks have been proven to show high confidence far away from training data \cite{HeinAB19}. This is contrary to the behavior that would be expected naturally by humans, which is to be uncertain when being confronted with new data examples that have not been observed during training. %One reason for this is that neural networks lack the capability to signal for a given data example not to belong to any of the learned classes.

Several approaches to predictive uncertainty quantification have been introduced in recent years, considering uncertainty in a Bayesian sense \cite{BlundellCKW15,GalG15a,KendallG17} as well as from a frequentist's point of view \cite{HendrycksG17,DeVriesT18,PadhyNRLSL20}. A common evaluation protocol is to discriminate between true and false positives (FPs) by means of a given uncertainty quantification. For an introduction to uncertainty in machine learning, we refer to \cite{HullermeierW21}, for a survey on uncertainty quantification methods for DNNs see \cite{GawlikowskiTALH21}.

By design of ordinary DNNs for image classification, their uncertainty is often studied on in-distribution examples \cite{AshukhaLMV20}. The task of out-of-distribution (OoD) detection (or novelty detection) is oftentimes considered separately from uncertainty quantification \cite{LiangLS18,SnoekOFLNSDRN19,MundtPMR19}. Thus, OoD detection in deep learning has spawned an own line of research and method development. Among others, changes in architecture \cite{DeVriesT18}, loss function \cite{SensoyKK18,AmersfoortSTG20} and the incorporation of data serving as OoD proxy \cite{HendrycksMD19} have been considered in the literature. Generative adversarial networks (GANs) have been used to replace that proxy by artificially generated data examples. In \cite{LeeLLS18}, examples of OoD data are created such that they shield the in-distribution regime from the OoD regime. Note that this often requires pre-training of the generator. E.g.\ in the aforementioned work, the authors constructed OoD data in their 2D example to pretrain the generator. While showing promising results, such GAN-based approaches mostly predict a single score for OoD detection and do not yield a principled approach to uncertainty quantification distinguishing in-distribution uncertainty between classes and out-of-distribution uncertainty.

In this work, we propose to use GANs to, instead of shielding all classes at once, shield each class separately from the out-of-class (OoC) regime (also cf.\ \cref{fig:toy_example}). Instead of maximizing an uncertainty measure, like softmax entropy, we combine this with a one-vs-all classifier in the final DNN layer. This is learned jointly with a class-conditional generator for out-of-class data %end-to-end
in an adversarial framework. The resulting classifiers are used to model (class conditional) likelihoods. Via Bayes rule we define posterior class probabilities in a principled way. Our work thus makes the following novel contributions:
\begin{enumerate}
    \setlength\itemsep{-0.2em}
    \item We introduce a GAN-based model yielding a classifier with complete uncertainty quantification.
    \item Our model allows to distinguish uncertainty between classes (in large sample limit, if properly learned, approaching aleatoric uncertainty) from OoD uncertainty (approaching epistemic uncertainty).
    \item By a conditional GAN trained with a Wasserstein-based loss function, we achieve class shielding in low dimensions without any pre-training of the generator.
    \item In higher dimensions, we use a class conditional autoencoder and train the GAN on the latent space. This is coherent with the conditional GAN, allows us to use less complex generators and critics and reduces the influence of adversarial directions.
    \item We improve over the OoD detection and FP detection performance of state-of-the-art GAN-training based classifiers.
\end{enumerate}
We present in-depth numerical experiments with our method on MNIST, CIFAR10, CIFAR100 and Tiny ImageNet, accompanied with various OoD datasets. We outperform other approaches, also GAN-based ones, in terms of OoD detection and FP detection performance on CIFAR10. Also on MNIST, CIFAR100 and Tiny ImageNet we achieve superior OoD detection performance. Noteworthily, on the more challenging CIFAR10, CIFAR100 and Tiny ImageNet datasets, we achieve significantly stronger model accuracy compared to other approaches based on the same network architecture.

\section{Related Work} \label{sec:relwork}

In this section we give an overview of common uncertainty quantification methods as well as publications related to the different parts of our method. In this context the task of uncertainty quantification is to assign scalar values to predictions, quantifying aleatoric (in-distribution) and epistemic (out-of-distribution) uncertainty.

% Paragraph about baseline methods
% \subsection*{Baselines}

\textbf{(Baselines)} The works by \cite{XiaCWHS15,HendrycksG17,DeVriesT18} can be considered as early baseline methods for the task of OoD detection. They are frequentist approaches relying on confidence scores gathered from model outputs. The problem can oftentimes be attributed to the usage of the softmax activation function, which leads to overconfident predictions, in particular far away from training data, or to the decoupling of the confidence score from the original classification model during test time. Our proposed method does not make use of the softmax activation function and can produce unified uncertainty estimates during test time without the requirement of auxiliary confidence scores.

\textbf{(Conformal methods)} Methods associated with the term conformal predictions \cite{ParkBML2020,CauchoisGD2021,MessoudiRD2020} are predicting, for a given confidence value, a set of classes that is likely to contain the real class. As such, they can give a more intuitive explanation to which classes might be getting confused. In essence, the uncertainty of a prediction can be derived from the size of the predicted set, assigning higher prediction uncertainty to larger sets. They are however not able to quantify both, aleatoric and epistemic uncertainty, and assigning a scalar uncertainty value to predictions, as in our setting, might not be straight forward.

% Paragraph about methods using perturbed samples and auxiliary outlier datasets
% \subsection*{Data perturbation and auxiliary data}

\textbf{(Data perturbation and auxiliary data)} Many methods use perturbed training examples \cite{LeeLLS18Unified,RenLFSPDDL19,LiangLS18} or auxiliary outlier datasets \cite{HendrycksMD19, KongR21}. \cite{LeeLLS18Unified} use them for their confidence score based on class conditional Gaussian distributions, while \cite{LiangLS18} and \cite{RenLFSPDDL19} are utilizing them to increase the separability between in- and out-of-distribution examples. \cite{HendrycksMD19} use a hand picked auxiliary outlier dataset during model training and \cite{KongR21} use it for selecting a suitable GAN discriminator during training which then serves for OoD detection.
The common problem with using perturbed examples is the sensitivity to hyperparameter selection which might render the resulting examples uninformative.
%If the applied perturbation is too high, the resulting examples might not be useful enough for shielding the in-distribution regime. Weaker perturbations increase the risk of residing in the in-distribution regime.
Additionally, auxiliary outlier datasets cannot always be considered readily available and pose the problem of covering only a small proportion of the real world.
In contrast, our method is able to produce OoC examples that are very close to the in-distribution but still distinguishable from it, thus we do not require explicit data perturbation or any auxiliary outlier datasets.

\textbf{(Bayesian methods)} Bayesian approaches \cite{GalG15a,BlundellCKW15,LakshminarayananPB17} provide a strong theoretical foundation for uncertainty quantification and OoD detection. \cite{LakshminarayananPB17} propose deep ensembles that approximate a distribution over models by averaging predictions of multiple independently trained models. \cite{GalG15a} are utilizing Monte-Carlo (MC) sampling with dropout applied to each layer. %, effectively introducing a Bernoulli distribution over the network's parameters.
In \cite{BlundellCKW15} a variational learning algorithm for approximating the intractable posterior distribution over network weights has been proposed. While the theoretical foundation is strong, these methods often require changing the architecture, restricting the model space and/or increased computational cost. While making use of Bayes-rule, we are staying in a frequentist setting and are not dependent on sampling or ensemble techniques. This reduces computational cost and enables our model to produce high quality aleatoric and epistemic uncertainty estimates with a single forward pass. Also, our proposed framework does not change the network architecture, except for the output layer activation function, and thus makes it compatible with previously published techniques.

%A deep ensemble is analysed by \cite{LakshminarayananPB17}, where the entropy of the averaged predictions of multiple models is used for out-of-distribution detection. A strong theoretical foundation is provided by Bayesian inference methods. \cite{GalG15a} approximate the intractable variational posterior distribution using MC sampling with dropout applied to each layer, effectively applying a Bernoulli distribution over the network's parameters. \textit{Bayes by Backprop}, which is a backpropagation-compatible variational learning algorithm has been proposed by \cite{BlundellCKW15}. Using the variational free energy (or expected lower bound) as objective function together with a reparameterisation trick they were able to backpropagate loss gradients to the parameters of distributions over the weights of the network.
%Changing the network architecture or applying constraints on the model space might be inadmissible in certain situations. The framework proposed by us does not change the network architecture, except for the output layer activation function, and thus makes it compatible with previously published techniques. Furthermore we are not dependent on sampling or ensemble techniques like in the mentioned works. This reduces training as well as test time computational cost and enables our model to produce high quality uncertainty estimates with a single forward pass and without requiring additional external models.

%Paragraph about One-vs-All methods
% \subsection*{One-vs-All methods}

\textbf{(One-vs-All methods)} One-vs-All methods in the context of OoD detection have been recently studied by \cite{FranchiBADB20,PadhyNRLSL20,SaitoS21}. In the work by \cite{FranchiBADB20} an ensemble of binary neural networks is trained to perform one-vs-all classification on the in-distribution data which are then weighted by a standard softmax classifier. \cite{PadhyNRLSL20} use a DNN with a single sigmoid binary output for every class and explore the possibility of training the one-vs-all network with a distance based loss function instead of the binary cross entropy. Domain adaptation is considered in \cite{SaitoS21}, where they utilize a one-vs-all classifier for a first OoD detection step before classifying into known classes with a second model. Their training objective is also accompanied by a hard negative mining on the in-distribution data. All these methods use the maximum predicted probability as a score for OoD detection and/or classification and do not aggregate the other probabilities into a single score like the method proposed by us. They also do not distinguish into different kinds of uncertainties as in our work. Lastly their training objectives are only based on in-distribution data. Generated OoC data as used in the present work is not considered.

% Paragraph about generative methods
% \subsection*{Generative methods}

\textbf{(Generative methods)} More recently generative model based methods \cite{SchleglSWSL17,LeeLLS18,SricharanS18,SunZL19,VernekarGDPASC19} have shown strong performance on the task of out-of-distribution detection by supplying classification models with synthesized out-of-distribution examples. \cite{SchleglSWSL17} utilize the latent space of a GAN by gradient based reconstruction of an input example. In the work by \cite{LeeLLS18}, a GAN architecture with an additional classification model is built. The classification model is trained to output a uniform distribution over classes on GAN examples close to the in-distribution. This approach is further improved by \cite{SricharanS18} who show improvements on the task of out-of-distribution detection. The generalization to distant regions in the sample space and the quality of generated boundary examples is however questionable \cite{VernekarGDPASC19}. A similar approach using a normalizing flow and randomly sampled latent vectors is proposed by \cite{GrcicBS21}.
%This was shown by \cite{VernekarGDPASC19}, were they also provide empirical evidence that an additional \textit{reject class} in the classification model improves generalization to distant regions. Our method applies a similar technique in the way that we have as many reject classes as classes present in the training dataset.
The high level idea of the architectures proposed in the previously mentioned works is similar to the one proposed by us. However, other works are not able to approximate the boundary of data distributions with multiple modes as shown by \cite{VernekarGDPASC19}. Due to the fact that our GAN is class conditional and trained on a low dimensional latent space, we are able to follow multiple distribution modes resulting from different classes. We improve the in-distribution shielding by using a low-dimensional regularizer and have an additional advantage in terms of computational cost as our cGAN model architecture can be chosen with considerably smaller size due to it being trained in the latent space. Furthermore, these methods do not yield separate uncertainty scores for FP and OoD examples.

% Paragraph for methods using variational Autoencoders
Generating OoD data based on lower dimensional latent representations has been explored in \cite{VernekarGADSC19a,SensoyKCS20}. \cite{VernekarGADSC19a} utilize a variational Autoencoder (vAE) to produce examples that are inside the encoded manifold (type I) as well as outside of it (type II).
%Type I is resembling data which is semantically similar to the training data while the latter results in OoD examples that are similar to adversarial examples.
\cite{SensoyKCS20} (GEN) also use a vAE and train a GAN in the resulting latent space, assigning generated examples to the OoD domain in order to estimate a Dirichlet distribution on the class predictions.
Utilizing a vAE has the advantage that one can make assumptions on the distribution of the latent space but also results in slightly blurry reconstructions.
% The downside, however, is that vAE are harder to train due to a vanishing Kullback-Leibler-Divergence (KL-Divergence), which is commonly applied to keep the posterior close to the prior distribution \cite{FuLLGCC19}.
While the work of \cite{SensoyKCS20} is the most similar to our method, we improve on several shortcomings. First of all we employ class conditional models to improve diversity and class shielding.
% Additionally, we do not rely on a vAE which makes our approach more stable.
Additionally, we are able to distinguish aleatoric and epistemic uncertainty while the method by \cite{SensoyKCS20} is not. It assigns the same type of uncertainty to OoD and FP examples.

\section{Method}\label{sec:method}

% In this section we introduce our one-vs-all classifier and the GAN architecture including its losses and regularizers.

\subsection{One-vs-All Classification}\label{subsection:onevsall}

We start by formulating our classification model as an ensemble of one-vs-all classifiers. Let $C(o|x,y)$ model the probability that for a given class $y \in \mathcal{Y} = \{1,\ldots,n\}$, an example with features $x\in \mathcal{X} = \mathbb{R}^d$ is OoC. Analogously, $C(i|x,y)=1-C(o|x,y)$ for a given class $y$ models the probability of $x$ being in-class. Let $S\subseteq \mathcal{X}\times\mathcal{Y}$ be our training dataset with $\hat{p}(y)$, $y\in\mathcal{Y}$, the estimated relative class frequencies. To model $C(i|x,y)$, we use a DNN with $n \geq 2$ output neurons equipped with sigmoid activations. For each class output $y$, the data corresponding to class $y$ serves as in-class data and all other data as OoC data. Hence, a basic variant of our training objective is given by a weighted empirical binary cross entropy
\begin{equation}\label{eq:classifier_loss}
    \min_C\,\frac{1}{|S|}\sum_{(x, y)\in S} \left[-\log(C(i|x, y)) -\frac{1}{n-1}\sum_{y' \in \mathcal{Y} \setminus \{y\} }\frac{\hat{p}(y)}{\hat{p}(y')}\log(C(o|x, y'))\right]\,.
\end{equation}
Therein, $\frac{\hat{p}(y)}{\hat{p}(y')}$ is weighting the OoC loss to counter potential class imbalance. Similarly, $\frac{1}{n-1}$ weighs the OoC loss compared to the in-class loss. Applying the transformation
\begin{equation}
    \tilde{C}(i|x,y) = \frac{\frac{1}{n}C(i|x,y)}{\frac{1}{n}C(i|x,y)+\frac{n-1}{n}C(o|x,y)}
\end{equation}
after training, we obtain an appropriate classifier, formalized as follows:
\begin{restatable}[Class posterior]{lemma}{onevsall}\label{lemma:onevsall}
    Under
    \ifdefined\assumptionsref
        typical assumptions of statistical learning theory
    \else
        \cref{assumptions:onevsall}
    \fi
    and training $C$ on \cref{eq:classifier_loss} it holds that for $|S| \to \infty$
    \begin{equation}\label{eq:classifier}
        \hat{p}(y|x)=\frac{\tilde{C}(i|x,y)\hat{p}(y)}{\sum_{y' \in \mathcal{Y}}\tilde{C}(i|x,y')\hat{p}(y')} \longrightarrow p(y|x)\,.
    \end{equation}
\end{restatable}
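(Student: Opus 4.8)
The plan is to split the argument into a statistical part --- the minimizer of the empirical objective \eqref{eq:classifier_loss} approaching the minimizer of the corresponding population risk as $|S|\to\infty$ --- and a deterministic part: identifying that population minimizer in closed form, pushing it through the transformation $\tilde C$, and checking that Bayes' rule collapses the result to $p(y\mid x)$. For the statistical part I would first use the law of large numbers to get $\hat p(y)\to p(y)$ for every $y\in\mathcal Y$, and then invoke the stated assumptions (consistency of empirical risk minimization over the chosen hypothesis class, together with realizability) so that the trained $C$ converges, as $|S|\to\infty$, to the minimizer $C^\star$ of the population risk
\[
 R[C]=\mathbb E_{(x,y)\sim p}\!\left[-\log C(i\mid x,y)-\frac{1}{n-1}\sum_{y'\neq y}\frac{p(y)}{p(y')}\,\log C(o\mid x,y')\right].
\]
The one point to keep track of is that in \eqref{eq:classifier_loss} a sample of label $y$ feeds the OoC loss of \emph{every} output $y'\neq y$, so the output neurons are coupled only through the expectation; this is exactly what makes the pointwise minimization below separable.

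Next I would identify $C^\star$ pointwise. Writing $\mathbb E_{(x,y)\sim p}$ as an integral against $p(x,y)=p(y)\,p(x\mid y)$ and collecting, for a fixed output class $k$, all terms of $R$ that involve $C(\cdot\mid\cdot,k)$, the $k$-th contribution is a weighted binary cross-entropy $-a_k(x)\log C(i\mid x,k)-b_k(x)\log C(o\mid x,k)$, with $a_k(x)=p(k)\,p(x\mid k)$ coming from the in-class term and $b_k(x)\propto\tfrac{1}{n-1}\sum_{y\neq k}p(x\mid y)$ coming from the OoC terms --- the class-prior reweighting being precisely what cancels the OoC sampling frequencies $p(y)$ and rebalances the in-class mass against the total OoC mass per output. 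Since $t\mapsto-a\log t-b\log(1-t)$ is strictly convex on $(0,1)$ with unique minimizer $a/(a+b)$, this fixes $C^\star(i\mid x,k)$ and $C^\star(o\mid x,k)$ for $p(x)$-a.e.\ $x$ (the off-support null set is irrelevant): with the $p(k)$ also cancelling, $C^\star(i\mid x,k)=p(x\mid k)\big/\bigl(p(x\mid k)+\tfrac{1}{n-1}\sum_{y\neq k}p(x\mid y)\bigr)$ and $C^\star(o\mid x,k)$ is its complement.

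Now I would substitute into the transformation. A short computation gives $C^\star(i\mid x,k)+(n-1)C^\star(o\mid x,k)=\sum_{y}p(x\mid y)\big/\bigl(p(x\mid k)+\tfrac{1}{n-1}\sum_{y\neq k}p(x\mid y)\bigr)$, whence $\tilde C(i\mid x,k)=p(x\mid k)\big/\sum_{y}p(x\mid y)$; the normaliser $\sum_{y}p(x\mid y)$ is independent of $k$ and therefore cancels in \eqref{eq:classifier}, leaving
\[
 \hat p(y\mid x)=\frac{\tilde C(i\mid x,y)\,\hat p(y)}{\sum_{y'}\tilde C(i\mid x,y')\,\hat p(y')}=\frac{p(x\mid y)\,\hat p(y)}{\sum_{y'}p(x\mid y')\,\hat p(y')}\ \xrightarrow[|S|\to\infty]{}\ \frac{p(x\mid y)\,p(y)}{\sum_{y'}p(x\mid y')\,p(y')}=p(y\mid x),
\]
the last equality being Bayes' rule. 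To pass from the convergence $C\to C^\star$ to that of $\hat p(y\mid x)$ I would note that on the support of $p(x)$ the map $C\mapsto\hat p(\cdot\mid\cdot)$ is continuous: the denominators $C(i\mid x,k)+(n-1)C(o\mid x,k)\in[1,n-1]$ and $\sum_{y'}\tilde C(i\mid x,y')\hat p(y')>0$ are bounded away from zero, so convergence is preserved in whatever mode the assumptions deliver it (e.g.\ pointwise, or in $L^1(p)$).

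The genuinely delicate step is the statistical one, i.e.\ spelling out the phrase \emph{typical assumptions of statistical learning theory}. One needs a consistency / uniform-convergence statement for the empirical risk minimizer over the DNN class, which is nonstandard here because the $-\log$ losses are unbounded --- so some integrability or truncation control, or a capacity bound adapted to this loss, is required --- together with realizability (the ideal sigmoid outputs $C^\star$ lie in the closure of the hypothesis class) and enough regularity that $C$ itself, and not merely the risk value, converges. Everything downstream --- the pointwise minimization, the cancellation of priors under $\tilde C$, and the Bayes identity --- is then routine algebra.
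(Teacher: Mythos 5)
Your proposal is correct and in substance follows the same route as the paper's proof: identify the large-sample limit of each binary head as the balanced one-vs-rest posterior $p(x\mid k)\big/\bigl(p(x\mid k)+\tfrac{1}{n-1}\sum_{y\neq k}p(x\mid y)\bigr)$, then check that the transformation $\tilde C$ followed by the renormalization in \cref{eq:classifier} collapses to Bayes' rule. The only methodological difference is how that limit is obtained: the paper uses a class-rebalancing device (resample class $y^*$ versus its complement with masses $\tfrac12$ and $\tfrac{1}{2(n-1)}$ so the weighted loss becomes an ordinary balanced binary cross-entropy with known population minimizer), whereas you write the population risk directly, decouple it per output head, and minimize $t\mapsto-a\log t-b\log(1-t)$ pointwise; your version is more self-contained and, usefully, makes explicit the statistical caveats (unbounded log loss, mode of convergence of $C$ to $C^\star$) that the paper absorbs into Assumptions 1--3. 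One point worth flagging, which applies equally to both arguments: for the prior reweighting to cancel the class frequencies as you use it (and as the paper uses it in the substitution $\frac{1/2}{1/(2(n-1))}=n-1$), the weight attached to a sample of true class $y$ in the OoC term of output $y'$ must be $\hat p(y')/\hat p(y)$; the loss \cref{eq:classifier_loss} as printed carries the reciprocal $\hat p(y)/\hat p(y')$, under which the OoC mass for head $k$ would come out proportional to $\sum_{y\neq k}\hat p(y)^2p(x\mid y)/\hat p(k)$ and the stated limit would fail for imbalanced classes. Your argument is therefore exactly as valid as the paper's; both implicitly prove the lemma for the corrected weighting (and for balanced classes the two weightings coincide, so nothing is lost there).
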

See \cref{app:theoretic_argument} for the detailed assumptions and the proof, explaining our choice of the classification model.
We distinguish between aleatoric and epistemic uncertainty. While aleatoric uncertainty is considered to be inherent to all observations and irreducible (for a fixed $\mathcal{X}$), epistemic uncertainty is induced by the model choice and the amount of data and can be reduced by additional data and appropriate model selection.
%
% We can consider $C(i|x,y)$ as a likelihood proxy for $p(x|y)$ and predict class labels by applying Bayes-rule:
% %
% \begin{equation}\label{eq:bayes_intro}
%     p(y|x) = \frac{p(x|y)p(y)}{\sum_{y'=1}^n p(x|y')p(y')} \approx \frac{C(i|x,y)\hat{p}(y)}{\sum_{y'=1}^n C(i|x,y')\hat{p}(y')} \, ,
% \end{equation}
%
Using $\hat{p}(y|x)$, we estimate the probability of an example $x$ being in-distribution by defining
\begin{equation} \label{eq:indist_prob}
    \tilde{C}(i|x) = \sum_{y\in\mathcal{Y}} \tilde{C}(i|x,y)\hat{p}(y|x) = \sum_{y\in\mathcal{Y}} \frac{\tilde{C}(i|x,y)^2\hat{p}(y)}{\sum_{y'\in\mathcal{Y}} \tilde{C}(i|x,y')\hat{p}(y')} \, ,
\end{equation}
which yields a quantification of epistemic uncertainty via $\tilde{C}(o|x) = 1 - \tilde{C}(i|x)$. For aleatoric uncertainty estimation, we consider the Shannon entropy of the predicted class probabilities
\begin{equation} \label{eq:entropy}
H(x) = -\sum_{y\in\mathcal{Y}} \hat{p}(y|x) \log( \hat{p}(y|x) ) \, .
\end{equation}
%
%In a large sample limit under the assumption of learnability, we would obtain $C(i|x,y) = \frac{p(i|x,y)}{p(i|x,y)+p(o|x,y)} = p(i|x,y)$ since $\{ i,o \}$ is finite. Hence, we can write $p(i|x,y) = \frac{ p(x,y|i) p(i) }{p(x,y)}  $.   Assuming that only in-distribution data $(x,y) \sim p$ is presented to $C$ , the density of out of class data for $(x,y)$ is given by the density of all other classes $y' \neq y$.
%
This is a sensible definition since we showed that $\hat{p}(y|x)\to p(y|x)$ for $|S|\to\infty$. If the real $p(y|x)$ is close to a uniform distribution over the classes (which maximizes the Shannon entropy) we have a high uncertainty and vice versa. For epistemic uncertainty $\tilde{C}(i|x)$ can be considered as a proxy for $p(i|x)$, which results in the epistemic uncertainty by $1-\tilde{C}(i|x)$.
In addition, we generate for each class OoC examples with a conditional GAN. A joint training of classifier and GAN is introduced in the upcoming \cref{subsection:gan-architecture}.
%For comparison, both model architectures, with and without MC-Dropout, are discussed in the numerical experiments. Note that this demonstrates the compatibility of our model with existing methods.

\subsection{GAN Architecture}\label{subsection:gan-architecture}

\begin{figure*}[t]
    \centering
    \includegraphics[width=0.77\textwidth]{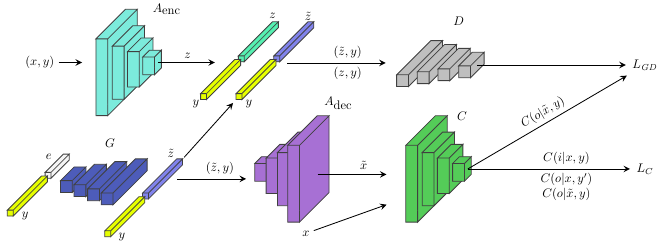}
    \caption{Overview of the proposed architecture. Before training the GAN objective together with the classifier, the cAE is pretrained on the in-distribution training dataset. After that the weights of $A_{\textrm{enc}}$ and $A_{\textrm{dec}}$ are frozen. The flow of information is from left to right, with the top attributed to the GAN loss $L_\mathit{GD}$ and the bottom to the classifier loss $L_C$.}
    \label{fig:overview}
\end{figure*}

Similar to \cite{LeeLLS18}, we combine our classification model with a GAN and train both alternatingly.
The assumptions that we made for \cref{lemma:onevsall} are standard in statistical learning theory, however it is important to notice that we exclude the GAN-generated data from the statement.
By incorporating GAN-generated data into the training of our classification model introduced in \cref{subsection:onevsall}, we are violating this assumption. However, in \cref{sec:experiments} we demonstrate empirically that this does not harm our classification model performance-wise.
The Wasserstein GAN with gradient penalty proposed by \cite{GulrajaniAADC17} serves a basis for our conditional GAN (cGAN). Additionally, we condition the generator as well as the critic on the class labels to generate class specific OoC examples. Inspired by \cite{SensoyKCS20,VernekarGADSC19a} where the latent space of a variational autoencoder (vAE) is utilized for GAN training, we proceed analogously, however using a vanilla conditional autoencoder (cAE) as we observed reconstructions with higher visual quality for this choice. Training the cGAN on a latent space, we do not generate adversarial noise examples and can use less complex generators and critics.
%In contrast to those works, we use a conditional vanilla autoencoder (cAE) instead of the vAE as we observed a more stable training.
Prior to the cGAN training, we train the cAE on in-distribution data and then freeze the weights during the cGAN and classifier training. The optimization objective of the cAE is given as pixel-wise binary cross-entropy
\begin{equation}
    \min_{A} \frac{1}{|S|}\sum_{(x,y)\in S} -\frac{1}{N_x} \sum_{i=1}^{N_x} x_i\cdot\log(\hat{x}_i) + (1-x_i)\cdot\log(1-\hat{x}_i) \, ,
\end{equation}
with $\hat{x}=A_{\textrm{dec}}(z,y)$ the decoded latent variable, $z=A_{\textrm{enc}}(x,y)$ being the encoded example, $x_i$ the $i$-th pixel of example $x$ and $N_x$ the number of pixels belonging to $x$. Therein, the pixel values are assumed to be in the interval $[0,1]$, while $0\cdot\log(0)=0$.
The cGAN is trained using the objective function
\begin{equation}
    \min_G\max_D\quad \frac{1}{|S|}\sum_{(x, y)\in S} D(z| y) - D(\tilde{z}| y) + \lambda_{gp} \cdot L_{gp} \, ,
\end{equation}
with $D$ the conditional critic, $z=A_{\textrm{enc}}(x,y)$, $\tilde{z}=G(e,y)$ the latent embedding produced by the conditional generator, $e \sim U(0,1)$ noise from a uniform distribution, $y$ a class label and $L_{gp}$ the gradient penalty from \cite{GulrajaniAADC17}. Integrating the classification objective into the cGAN objective, we alternate between
\begin{equation}\label{eq:gan_complete_gd}
    \min_G\max_D\,\overbrace{\frac{1}{|S|}\sum_{(x, y)\in S} D(z| y) - D(\tilde{z}| y) + \lambda_{gp} \cdot L_{gp} -\lambda_{cl}\cdot\log(C(o|\tilde{x},y))+\lambda_R L_R}^{L_\mathit{GD}}
\end{equation}
and
% \begin{equation}\label{eq:gan_complete_c}
%         \min_C\,\overbrace{\frac{1}{|S|}\sum_{(x, y)\in S}
%         \left[-\log(C(i|x, y))-\frac{\mu}{n-1}\left(\sum_{y'\in\mathcal{Y}\setminus\{y\}}\frac{\hat{p}(y)}{\hat{p}(y')}\log(C(o|x, y'))\right)
%         -(1-\mu)\cdot\log(C(o|\tilde{x}, y))\right]}^{L_C} \,,
% \end{equation}
\begin{align}\label{eq:gan_complete_c}
\begin{split}
        \min_C\,&\overbrace{\eqmathboxl{A}{\frac{1}{|S|}\sum_{(x, y)\in S}
        \left[-\log(C(i|x, y))-\frac{\lambda_{\textrm{real}}}{n-1}\left(\sum_{y'\in\mathcal{Y}\setminus\{y\}}\frac{\hat{p}(y)}{\hat{p}(y')}\log(C(o|x, y'))\right)\right.}}^{L_C}\\
        &\eqmathboxl{A}{\hphantom{\frac{1}{|S|}\sum_{(x, y)\in S}}\left.\vphantom{\frac{\lambda_{\textrm{real}}}{n-1}\left(\sum_{y'\in\mathcal{Y}\setminus\{y\}}\frac{\hat{p}(y)}{\hat{p}(y')}\log(C(o|x, y'))\right)}-(1-\lambda_{\textrm{real}})\cdot\log(C(o|\tilde{x}, y))\right] \,,}
\end{split}
\end{align}
with $\lambda_{\textrm{real}}\in[0,1]$ being an interpolation factor between real and generated OoC examples and $L_R$ being an additional regularization loss for the generated latent codes with hyperparameter $\lambda_R \geq 0$, which we introduce in \cref{subsection:regloss}.
%Therein, $\lambda_{\textrm{real}}<0.5$ gives more weight on the generated examples and $\lambda_{\textrm{real}}>0.5$ to the real ones.
The latent embeddings produced by the cGAN are decoded with the pretrained cAE, thus $\tilde{x} = A_{\textrm{dec}}(\tilde{z},y) = A_{\textrm{dec}}(G(e,y),y)$. That is, the cGAN is trained on the latent space while the classification model is trained on the original feature space. Our entire GAN-architecture is visualized in \cref{fig:overview}.

\subsection{Low-Dimensional Regularizer}\label{subsection:regloss}

In low dimensional latent spaces we found it to be advantageous to apply an additional regularizer to the generated latent embeddings $\tilde{z}$ to improve class shielding. Let $(z,\,y)$ be the latent embedding of an example $x$ with its corresponding class label $y$ and $\mathcal{Z}(z,y)=\{\tilde{z}-z|\,\tilde{z}=G(e,y),\,e\sim U(0,1)\}=\left\{\bar{z}^1,\ldots,\bar{z}^{N_y^z}\right\}$ all generated latent codes with the same class label and normalized to origin $z$. We encourage the generator to produce latent codes that more uniformly shield the class $y$ by maximizing the average angular distance between all $\bar{z}\in\mathcal{Z}(z,y)$, which corresponds to minimizing
\begin{equation}\label{eq:singleregloss}
    l_R(z,y)=\frac{2}{N_y^z\cdot(N_y^z-1)}\cdot \sum_{\substack{\bar{z}^i, \bar{z}^j \in \mathcal{Z}(z,y)\\ i< j}} -\log\left(\arccos\left(\frac{\bar{z}^i\ast\bar{z}^j}{\lVert\bar{z}^i\rVert\cdot\lVert\bar{z}^j\rVert}\right)\cdot\frac{1}{\pi}\right) \, ,
\end{equation}
with $\ast$ being the dot-product.
% and $N_y^z=|\mathcal{Z}(z,y)|$.
The logarithm introduces an exponential scaling for very small angular distances, encouraging a more evenly spread distribution of the generated latent codes. This loss is then averaged over all class labels and training data examples
\begin{equation} \label{eq:regloss}
    L_R = \frac{1}{n} \sum_{y\in\mathcal{Y}} \left[\frac{1}{N_y} \sum_{(x,y)\in S} l_R(A_{\textrm{enc}}(x,y),y)\right] \, ,
\end{equation}
with $N_y=|\{(x,\,y') | y'=y\}|$ the number of examples with class label $y$. In \cref{sec:regularizer_explained} a more detailed explanation of this regularization loss is provided.

During our experiments we studied different regularizer losses such as Manhattan/Euclidean distance, infinity norm and standard cosine similarity. In experiments, we found that for our purpose \cref{eq:regloss} performed best. We argue that this can be attributed to the independence of the latent space value range, which can have a large impact on the $p$-norm distance metrics, and to the exponential scaling for very small angular distances.

To this end, the basic variant of our method does not account for model uncertainty. Thus, we also include results with MC-Dropout, which also demonstrates the compatibility of our model with existing methods. Another limitation of our approach is that we are not explicitly accounting for adversarial attacks because during training in the cAE latent space we are removing any adversarial directions.

\section{Experiments} \label{sec:experiments}

We compare our method with a number of related approaches (cf.~\cref{tab:rel_methods}).
% We compare our method with the related works stated in \cref{tab:rel_methods}.
While the first two methods are simple baselines, the subsequent three ones are Bayesian ones and the final two methods are GAN-based (cf.~\cref{sec:relwork}).
\begin{figure}
    \centering
    \begin{minipage}{0.37\textwidth}
        \centering
        % \scalebox{0.75}{
            \begin{tabular}{lc}
                \toprule
                Method & Reference\\
                \midrule
                Maximum softmax & \cite{HendrycksG17} \\
                Entropy & \\
                Bayes-by-Backprop & \cite{BlundellCKW15}\\
                MC-Dropout & \cite{GalG15a}\\
                Deep-Ensembles & \cite{LakshminarayananPB17}\\
                Confident Classifier & \cite{LeeLLS18}\\
                GEN & \cite{SensoyKCS20}\\
                \bottomrule
            \end{tabular}
        % }
        \captionof{table}{Baselines and related methods used for comparing our work.}
        \label{tab:rel_methods}
    \end{minipage}\hspace{0.04\textwidth}%
    \begin{minipage}{0.59\textwidth}
        \centering
        \begin{minipage}{0.45\textwidth}
            \centering
            \includegraphics[width=\textwidth,keepaspectratio]{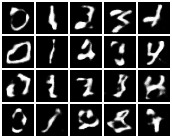}
            % \vspace{-0.5cm}
            a)
        \end{minipage}\hspace{0.02\textwidth}%
        \begin{minipage}{0.45\textwidth}
            \centering
            \includegraphics[width=\textwidth,keepaspectratio]{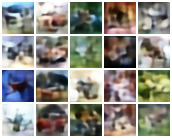}
            % \vspace{-0.5cm}
            b)
        \end{minipage}
        \captionof{figure}{Generated OoC examples by our approach. a) MNIST examples for digit classes 0-4 (left to right). b) CIFAR10 examples for classes \textit{airplane}, \textit{automobile}, \textit{bird}, \textit{cat} and \textit{deer} (left to right). See \cref{fig:samples_different_lambda_reg} for more examples.}
        \label{fig:ood_samples}
    \end{minipage}
\end{figure}
Following the publications \cite{SensoyKCS20,VernekarGADSC19a,RenLFSPDDL19,HendrycksG17}, we consider four experimental setups using the MNIST ($28\times28$, $10$ classes) \cite{LeCunBBH98}, CIFAR10 ($32\times32$, $10$ classes) \cite{krizhevsky2009}, CIFAR100 ($32\times32$, $100$ classes) \cite{krizhevsky2009} and Tiny ImageNet ($64\times64$, $200$ classes) \cite{LeY2015} datasets as in-distribution, respectively. Similar to \cite{SensoyKCS20,VernekarGADSC19a} and others, we split the datasets class-wise into two non-overlapping sets, i.e., MNIST 0-4 / 5-9, CIFAR10 0-4 / 5-9, CIFAR100 0-49 / 50-99 and Tiny ImageNet 0-99 / 100-199. While the first half serves as in-distribution data, the second half constitutes  OoD cases close to the training data and therefore difficult to detect. For the MNIST 0-4 dataset, we consider the MNIST 5-9, EMNIST-Letters \cite{CohenATS17}, Fashion-MNIST \cite{XiaoRV2017}, Omniglot \cite{LakeST2015}, SVHN \cite{NetzerWCB2011} and CIFAR10 datasets as OoD examples. For the CIFAR10 0-4 datasets, we use CIFAR10 5-9, LSUN \cite{YuZSS2015}, SVHN, Fashion-MNIST and MNIST as OoD examples. The same OoD datasets are used for CIFAR100 except CIFAR10 5-9 being replaced by CIFAR100 50-99. For the Tiny ImageNet 0-99 dataset we use Tiny ImageNet 100-199, SVHN, Fashion-MNIST and MNIST as OoD examples. These selections yield compositions of training and OoD examples with strongly varying difficulty for state-of-the-art OoD detection. Besides that, we examine our method's behavior on a 2D toy example with two overlapping Gaussians (having trivial covariance structure), see~\cref{fig:toy_example}.
Additionally, we split the official training sets into $80\%$ / $20\%$ training / validation sets, where the latter are used for hyperparameter tuning and model selection.
% based on the maximum classification accuracy.
%
Like related works, we utilize the LeNet-5 architecture on MNIST and a small ResNet on CIFAR10, CIFAR100 and Tiny ImageNet as classification models. To ensure fair conditions, we re-implemented all aforementioned methods while following the authors recommendations for hyperparameters and their reference implementations. For methods involving more complex architectures, e.g.\ a GAN or a VAE as in \cite{LeeLLS18,SensoyKCS20}, we used the proposed architectures for those components, while for the sake of comparability sticking to our choice of classifier models. All implementations are using PyTorch \cite{PaszkeGMLB19} and can be found in the associated repository\footnote{\url{https://github.com/RonMcKay/UQGAN}}. For each method, we selected the network checkpoint with maximal validation accuracy during training. For a more detailed overview of the hyperparameters used in our experiments, we refer to \cref{app:hyperparams} or the implementation.

For evaluation we use the following well established metrics:
\begin{itemize}
    \setlength\itemsep{-0.2em}
    \item Classification accuracy on the in-distribution datasets.
    \item Area under the Receiver Operating Characteristic Curve (AUROC). We apply the AUROC to the binary classification tasks in-/out-of-distribution (via the score from \cref{eq:indist_prob}) and TP/FP (Success/Failure) (via the score from \cref{eq:entropy}).
    \item Expected Calibration Error (ECE) \cite{NaeiniCH15} applied to the estimated class probabilities $\hat{p}(y|x)$ for in-distribution examples $x$, computed on $15$ bins.
    \item Area under the Precision Recall Curve (AUPR) w.r.t.\ the binary in-/out-of-distribution decision (via the score from \cref{eq:indist_prob}).
    % As the AUPR is sensitive to the choice of the positive class of that binary classification problem,
    We further distinguish between AUPR-In and AUPR-Out. For AUPR-in the in-distribution class is the positive one, while for AUPR-Out the out-of-distribution class is the positive one.
    %influenced by which class is set as the positive one and the number of samples, we distinguish further in AUPR-In and AUPR-Out, where for AUPR-Out the scores and targets are flipped.
    \item FPR @ 95\% TPR computes the False Positive Rate (FPR) at the decision threshold on the OoD score from \cref{eq:indist_prob} ensuring a True Positive Rate (TPR) of 95\%.
\end{itemize}
%
% \begin{wrapfigure}{l}{0.5\textwidth}
%     \centering
%
% \end{wrapfigure}

Before discussing our results on MNIST, CIFAR10, CIFAR100 and Tiny ImageNet, we briefly discuss our findings on the 2D example.
%To begin with we tested the proposed framework on a toy dataset of two slightly overlapping Gaussian distributions.
As can be seen in \cref{fig:toy_example}, the generated OoC examples are nicely shielding the respective in-distribution classes. OoC examples of one class can be in-distribution examples of other classes. This is an intended feature and to this end, the loss term for the synthesized OoC examples in \cref{eq:gan_complete_c} is class conditional. This feature is supposed to make our one-vs-all classifier predict a high OoC probability in the OoC regime. One can also observe that the estimated epistemic and aleatoric uncertainties are complementary, resulting in a high aleatoric uncertainty in the overlapping region of the Gaussians while also having a low epistemic uncertainty there. This is one of the main advantages that sets our approach apart from related methods.
Results for a slightly more challenging 2D toy example on the two moons dataset and on a grid of Gaussians are presented in \cref{app:additional_toy}.
We now demonstrate that this result generalizes to higher dimensional problems.

\begin{table}[t]
    \setlength{\tabcolsep}{1pt}
    \caption{Results for MNIST (0-4) as in-distribution vs \{MNIST (5-9), EMNIST-Letters, Omniglot, Fashion-MNIST, SVHN, CIFAR10\} as out-of-distribution datasets}
    \label{tab:res_mnist}
    \centering
    \scalebox{\resultstablescaling}{
    \begin{tabular}{l @{\hskip 16pt} ccc @{\hskip 16pt} cccc}
        \toprule
        \multirow{2}{*}[-5pt]{Method} & \multicolumn{3}{@{}c@{\hskip 16pt}}{In-Distribution} & \multicolumn{4}{c}{Out-of-Distribution}\vspace{5pt}\\
         & Accuracy $\uparrow$ & AUROC S/F $\uparrow$ & ECE $\downarrow$ & AUROC $\uparrow$ & AUPR-In $\uparrow$ & AUPR-Out $\uparrow$ & FPR@95\% TPR $\downarrow$\\
         \midrule
         Ours & $99.74\,(0.05)$ & $99.35\,(0.31)$ & $0.15\,(0.05)$ & $98.03\,(0.28)$ & $80.05\,(2.65)$ & $99.87\,(0.02)$ & $8.73\,(1.47)$\\
         Ours with MC-Dropout & $99.80\,(0.04)$ & $99.42\,(0.11)$ & $1.38\,(0.11)$ & $\bm{98.58\,(0.25)}$ & $\bm{83.71\,(2.40)}$ & $\bm{99.91\,(0.02)}$ & $\bm{5.60\,(0.77)}$\\
         \midrule
         One-vs-All Baseline & $99.84\,(0.06)$ & $\bm{99.84\,(0.06)}$ & $0.12\,(0.04)$ & $97.12\,(0.17)$ & $66.68\,(1.93)$ & $99.81\,(0.01)$ & $9.45\,(0.56)$\\
         Max. Softmax \cite{HendrycksG17} & $99.87\,(0.02)$ & $99.68\,(0.13)$ & $\bm{0.11\,(0.02)}$ & $97.07\,(0.12)$ & $69.00\,(1.65)$ & $99.81\,(0.01)$ & $9.71\,(0.37)$\\
         Entropy & $99.87\,(0.02)$ & $99.66\,(0.14)$ & $\bm{0.11\,(0.02)}$ & $97.13\,(0.12)$ & $68.76\,(1.94)$ & $99.81\,(0.01)$ & $9.65\,(0.39)$\\
         Bayes-by-Backprop \cite{BlundellCKW15} & $99.67\,(0.02)$ & $99.50\,(0.06)$ & $0.78\,(0.05)$ & $95.46\,(0.26)$ & $67.09\,(2.70)$ & $99.60\,(0.03)$ & $17.33\,(1.06)$\\
         MC-Dropout \cite{GalG15a} & $99.91\,(0.02)$ & $99.62\,(0.12)$ & $0.83\,(0.10)$ & $97.69\,(0.16)$ & $72.82\,(2.55)$ & $99.86\,(0.01)$ & $8.28\,(0.39)$\\
         Deep-Ensembles \cite{LakshminarayananPB17} & $\bm{99.89\,(0.03)}$ & $99.74\,(0.08)$ & $0.15\,(0.01)$ & $97.70\,(0.03)$ & $73.09\,(0.62)$ & $99.86\,(0.00)$ & $7.81\,(0.21)$\\
         Confident Classifier \cite{LeeLLS18} & $99.82\,(0.02)$ & $99.62\,(0.15)$ & $0.16\,(0.01)$ & $98.15\,(0.13)$ & $78.31\,(2.01)$ & $99.88\,(0.01)$ & $7.65\,(0.46)$\\
         GEN \cite{SensoyKCS20} & $99.70\,(0.03)$ & $98.13\,(0.45)$ & $1.98\,(0.85)$ & $97.78\,(0.70)$ & $69.77\,(10.20)$ & $99.86\,(0.04)$ & $8.98\,(1.90)$\\
         \midrule
         Entropy Oracle & $99.79\,(0.06)$ & $98.93\,(0.68)$ & $0.82\,(0.06)$ & $99.90\,(0.02)$ & $98.66\,(0.28)$ & $99.99\,(0.00)$ & $0.43\,(0.10)$\\
         One-vs-All Oracle & $99.77\,(0.03)$ & $99.47\,(0.16)$ & $0.14\,(0.02)$ & $99.90\,(0.01)$ & $98.50\,(0.12)$ & $99.99\,(0.00)$ & $0.40\,(0.03)$\\
         \bottomrule
    \end{tabular}
    }
    \vspace{-1em}
\end{table}
\begin{table}[t]
    \setlength{\tabcolsep}{1pt}
    \caption{Results for CIFAR10 (0-4) as in-distribution vs \{CIFAR10 (5-9), LSUN, SVHN, Fashion-MNIST, MNIST\} as out-of-distribution datasets}
    \label{tab:res_cifar10}
    \centering
    \scalebox{\resultstablescaling}{
    \begin{tabular}{l @{\hskip 16pt} ccc @{\hskip 16pt} cccc}
        \toprule
        \multirow{2}{*}[-5pt]{Method} & \multicolumn{3}{@{}c@{\hskip 16pt}}{In-Distribution} & \multicolumn{4}{c}{Out-of-Distribution}\vspace{5pt}\\
         & Accuracy $\uparrow$ & AUROC S/F $\uparrow$ & ECE $\downarrow$ & AUROC $\uparrow$ & AUPR-In $\uparrow$ & AUPR-Out $\uparrow$ & FPR@95\% TPR $\downarrow$\\
         \midrule
         Ours & $87.26\,(0.29)$ & $84.71\,(0.52)$ & $10.35\,(0.20)$ & $86.49\,(0.63)$ & $49.08\,(1.05)$ & $98.72\,(0.08)$ & $45.78\,(2.90)$\\
         Ours with MC-Dropout & $\bm{90.26\,(0.22)}$ & $\bm{89.19\,(0.13)}$ & $\bm{2.34\,(0.33)}$ & $\bm{89.64\,(0.23)}$ & $\bm{53.15\,(0.27)}$ & $\bm{99.01\,(0.03)}$ & $\bm{43.54\,(1.56)}$\\
         \midrule
         One-vs-All Baseline & $82.82\,(0.62)$ & $82.19\,(0.81)$ & $8.62\,(4.55)$ & $72.52\,(2.16)$ & $32.24\,(2.14)$ & $96.01\,(0.42)$ & $88.74\,(1.86)$\\
         Max. Softmax \cite{HendrycksG17} & $82.42\,(0.31)$ & $83.29\,(0.89)$ & $11.34\,(0.83)$ & $72.52\,(0.51)$ & $30.52\,(0.97)$ & $96.10\,(0.05)$ & $87.68\,(0.43)$\\
         Entropy & $82.42\,(0.31)$ & $83.41\,(0.88)$ & $11.34\,(0.83)$ & $72.85\,(0.49)$ & $30.43\,(0.87)$ & $96.21\,(0.06)$ & $85.41\,(0.89)$\\
         Bayes-by-Backprop \cite{BlundellCKW15} & $84.05\,(0.33)$ & $85.22\,(0.40)$ & $9.17\,(0.41)$ & $74.23\,(0.96)$ & $29.91\,(1.61)$ & $96.48\,(0.19)$ & $83.97\,(1.47)$\\
         MC-Dropout \cite{GalG15a} & $85.08\,(0.56)$ & $83.91\,(0.49)$ & $9.90\,(0.42)$ & $77.56\,(1.27)$ & $38.75\,(1.40)$ & $96.85\,(0.17)$ & $82.35\,(1.08)$\\
         Deep-Ensembles \cite{LakshminarayananPB17} & $85.43\,(0.22)$ & $85.29\,(0.57)$ & $3.10\,(0.29)$ & $74.24\,(0.73)$ & $32.81\,(1.50)$ & $96.43\,(0.10)$ & $85.07\,(0.81)$\\
         Confident Classifier \cite{LeeLLS18} & $83.58\,(0.11)$ & $85.08\,(0.18)$ & $9.31\,(0.80)$ & $73.33\,(0.53)$ & $32.32\,(1.09)$ & $96.29\,(0.11)$ & $85.04\,(1.09)$\\
         GEN \cite{SensoyKCS20} & $82.46\,(0.35)$ & $82.88\,(0.49)$ & $6.71\,(1.81)$ & $86.01\,(1.60)$ & $42.32\,(2.81)$ & $98.66\,(0.17)$ & $45.39\,(3.26)$\\
         \midrule
         Entropy Oracle & $83.41\,(0.58)$ & $80.73\,(0.54)$ & $7.53\,(0.27)$ & $95.44\,(0.28)$ & $68.51\,(0.57)$ & $99.57\,(0.04)$ & $17.27\,(1.44)$\\
         One-vs-All Oracle & $83.70\,(0.50)$ & $81.27\,(1.03)$ & $8.85\,(0.49)$ & $91.38\,(0.74)$ & $53.75\,(1.49)$ & $99.15\,(0.08)$ & $35.94\,(3.79)$\\
         \bottomrule
    \end{tabular}
    }
\end{table}
For MNIST and CIFAR10, \cref{fig:ood_samples} shows the OoC examples produced at the end of the generator training. Due to using a conditional GAN and an AE, we are able to generate OoC examples (instead of only out-of-distribution as in related works) during test time. It can be seen that the resulting examples resemble a lot of semantic similarities with the original class while still being distinguishable from them.

%In our numerical experiments we not only include results for our method but also for the very same extended with MC-Dropout \cite{GalG15a}, which shows a considerable performance gain and demonstrates that our approach is also compatible with previously published methods.

All presented results are computed on the respective (official) test sets of the datasets. We also conducted an extensive parameter study on the validation sets, which is summarized in \cref{app:parameter_study}. The conclusion of this parameter study is that the performance of our framework is in general stable w.r.t.\ the choice of the hyperparameters. Increasing $\lambda_{\textrm{reg}}$ positively impacts the model performance up to a certain maximum. The best performance is obtained by choosing latent dimensions such that the cAE is able to compute reconstructions of good visual quality. Across all datasets, choosing $\lambda_{\textrm{real}} \in [0.5,0.6]$ achieves the best detection scores, also indicating a positive influence of the generated OoC examples on the model's classification accuracy.

\Cref{tab:res_mnist,tab:res_cifar10,tab:res_cifar100,tab:res_tinyimagenet} present the results of our experiments. The scores in the columns of the section \textit{In-Distribution} are solely computed on the respective in-distribution dataset.
% , i.e., MNIST 0-4, CIFAR10 0-4 and CIFAR100 0-49, respectively.
The scores in the columns of the section \textit{Out-of-Distribution} are displaying the OoD detection performance when presenting examples from the respective in-distribution dataset as well as from the entirety of all assigned OoD datasets.
Note that we did not apply any balancing in the OoD datasets but included the respective test sets as is (see \cref{app:hyperparams} for the sizes of the test sets). As an upper bound on the OoD detection performance we also show results for two oracle models, supplied with the real OoD training datasets they are evaluated on. One of them is trained with the standard softmax and binary-cross-entropy to maximize entropy on OoD examples and the other one with our proposed loss function, cf.\ \cref{eq:gan_complete_c}.

\begin{table}[t]
    \setlength{\tabcolsep}{1pt}
    \caption{Results for CIFAR100 (0-49) as in-distribution vs \{CIFAR100 (50-99), LSUN, SVHN, Fashion-MNIST, MNIST\} as out-of-distribution datasets}
    \label{tab:res_cifar100}
    \centering
    \scalebox{\resultstablescaling}{
    \begin{tabular}{l @{\hskip 16pt} ccc @{\hskip 16pt} cccc}
        \toprule
        \multirow{2}{*}[-5pt]{Method} & \multicolumn{3}{@{}c@{\hskip 16pt}}{In-Distribution} & \multicolumn{4}{c}{Out-of-Distribution}\vspace{5pt}\\
         & Accuracy $\uparrow$ & AUROC S/F $\uparrow$ & ECE $\downarrow$ & AUROC $\uparrow$ & AUPR-In $\uparrow$ & AUPR-Out $\uparrow$ & FPR@95\% TPR $\downarrow$\\
         \midrule
         Ours & $56.60\,(0.73)$ & $73.61\,(0.33)$ & $34.32\,(0.42)$ & $80.11\,(1.40)$ & $28.81\,(1.32)$ & $97.98\,(0.19)$ & $\bm{55.23\,(3.20)}$\\
         Ours with MC-Dropout & $\bm{64.53\,(0.47)}$ & $80.38\,(0.40)$ & $10.92\,(0.24)$ & $\bm{80.75\,(1.19)}$ & $\bm{31.75\,(1.53)}$ & $\bm{98.04\,(0.14)}$ & $58.10\,(2.11)$\\
         \midrule
         One-vs-All Baseline & $50.90\,(0.83)$ & $76.21\,(0.91)$ & $23.89\,(2.16)$ & $62.99\,(1.59)$ & $15.69\,(2.69)$ & $94.53\,(0.25)$ & $92.44\,(0.76)$\\
         Max. Softmax \cite{HendrycksG17} & $56.12\,(0.60)$ & $80.68\,(0.29)$ & $19.10\,(3.23)$ & $67.68\,(1.56)$ & $23.03\,(1.71)$ & $95.47\,(0.30)$ & $88.42\,(1.42)$\\
         Entropy & $56.12\,(0.60)$ & $81.16\,(0.26)$ & $19.10\,(3.23)$ & $69.43\,(1.71)$ & $23.75\,(1.85)$ & $95.77\,(0.32)$ & $87.08\,(1.74)$\\
         Bayes-by-Backprop \cite{BlundellCKW15} & $56.02\,(0.50)$ & $81.90\,(0.55)$ & $14.71\,(0.31)$ & $69.74\,(0.76)$ & $24.60\,(1.25)$ & $95.86\,(0.12)$ & $87.01\,(0.75)$\\
         MC-Dropout \cite{GalG15a} & $59.88\,(0.81)$ & $82.57\,(0.60)$ & $21.94\,(0.58)$ & $67.75\,(1.15)$ & $22.31\,(1.14)$ & $95.40\,(0.22)$ & $89.38\,(1.30)$\\
         Deep-Ensembles \cite{LakshminarayananPB17} & $62.36\,(0.43)$ & $\bm{82.77\,(0.47)}$ & $\bm{2.72\,(0.37)}$ & $74.29\,(0.50)$ & $29.38\,(0.67)$ & $96.53\,(0.13)$ & $83.37\,(1.41)$\\
         Confident Classifier \cite{LeeLLS18} & $54.16\,(0.13)$ & $81.07\,(0.44)$ & $27.19\,(5.45)$ & $68.66\,(0.48)$ & $22.51\,(0.20)$ & $95.75\,(0.08)$ & $86.17\,(0.55)$\\
         GEN \cite{SensoyKCS20} & $51.16\,(0.26)$ & $77.83\,(0.80)$ & $41.26\,(2.05)$ & $77.43\,(2.58)$ & $26.12\,(2.41)$ & $97.64\,(0.33)$ & $61.48\,(4.33)$\\
         \midrule
         Entropy Oracle & $53.66\,(0.39)$ & $81.39\,(0.69)$ & $19.82\,(0.34)$ & $86.16\,(0.41)$ & $38.63\,(0.58)$ & $98.68\,(0.05)$ & $44.67\,(1.46)$\\
         One-vs-All Oracle & $51.82\,(1.81)$ & $76.43\,(0.83)$ & $15.26\,(4.27)$ & $92.91\,(0.28)$ & $47.68\,(1.37)$ & $99.39\,(0.03)$ & $22.35\,(0.71)$\\
         \bottomrule
    \end{tabular}
    }
    \vspace{-1em}
\end{table}
\begin{table}[t]
    \setlength{\tabcolsep}{1pt}
    \caption{Results for Tiny ImageNet (0-99) as in-distribution vs \{Tiny ImageNet (100-199), SVHN, Fashion-MNIST, MNIST\} as out-of-distribution datasets.}
    \label{tab:res_tinyimagenet}
    \centering
    \scalebox{\resultstablescaling}{
    \begin{tabular}{l @{\hskip 16pt} ccc @{\hskip 16pt} cccc}
        \toprule
        \multirow{2}{*}[-5pt]{Method} & \multicolumn{3}{@{}c@{\hskip 16pt}}{In-Distribution} & \multicolumn{4}{c}{Out-of-Distribution}\vspace{5pt}\\
         & Accuracy $\uparrow$ & AUROC S/F $\uparrow$ & ECE $\downarrow$ & AUROC $\uparrow$ & AUPR-In $\uparrow$ & AUPR-Out $\uparrow$ & FPR@95\% TPR $\downarrow$\\
         \midrule
         Ours & $34.28\,(0.37)$ & $71.90\,(0.47)$ & $48.94\,(0.66)$ & $79.25\,(1.61)$ & $26.35\,(2.25)$ & $97.66\,(0.20)$ & $47.22\,(1.85)$\\
         Ours with MC-Dropout & $\bm{45.60\,(0.43)}$ & $79.18\,(0.42)$ & $5.92\,(0.38)$ & $\bm{94.96\,(0.13)}$ & $\bm{59.76\,(0.64)}$ & $\bm{99.51\,(0.01)}$ & $\bm{13.72\,(0.30)}$\\
         \midrule
         One-vs-All Baseline & $35.18\,(0.26)$ & $76.23\,(0.43)$ & $11.59\,(4.41)$ & $55.19\,(2.29)$ & $17.97\,(2.41)$ & $90.97\,(0.69)$ & $97.32\,(0.62)$\\
         Max. Softmax \cite{HendrycksG17} & $36.06\,(0.30)$ & $78.56\,(0.68)$ & $26.01\,(7.25)$ & $61.53\,(1.04)$ & $21.07\,(0.98)$ & $93.29\,(0.29)$ & $92.51\,(0.87)$\\
         Entropy & $36.06\,(0.30)$ & $79.39\,(0.74)$ & $26.01\,(7.25)$ & $62.44\,(1.31)$ & $21.90\,(0.86)$ & $93.16\,(0.46)$ & $93.79\,(1.28)$\\
         Bayes-by-Backprop \cite{BlundellCKW15} & $32.31\,(0.43)$ & $78.44\,(0.91)$ & $19.39\,(0.62)$ & $68.05\,(2.29)$ & $21.24\,(2.14)$ & $95.23\,(0.52)$ & $81.70\,(3.53)$\\
         MC-Dropout \cite{GalG15a} & $43.48\,(0.53)$ & $80.63\,(0.30)$ & $\bm{2.79\,(0.35)}$ & $63.35\,(4.23)$ & $27.11\,(2.29)$ & $92.78\,(1.06)$ & $95.86\,(1.28)$\\
         Deep-Ensembles \cite{LakshminarayananPB17} & $42.48\,(0.22)$ & $\bm{81.29\,(0.38)}$ & $16.50\,(6.94)$ & $67.76\,(0.27)$ & $30.85\,(0.42)$ & $93.93\,(0.07)$ & $93.79\,(0.23)$\\
         Confident Classifier \cite{LeeLLS18} & $36.07\,(0.53)$ & $78.66\,(0.67)$ & $35.48\,(2.13)$ & $59.99\,(1.84)$ & $20.58\,(1.61)$ & $92.66\,(0.39)$ & $94.49\,(0.51)$\\
         GEN \cite{SensoyKCS20} & $30.54\,(0.84)$ & $73.40\,(0.91)$ & $28.79\,(0.79)$ & $84.65\,(5.42)$ & $36.86\,(8.25)$ & $98.28\,(0.69)$ & $39.67\,(12.50)$\\
         \midrule
         Entropy Oracle & $37.18\,(0.50)$ & $79.50\,(0.63)$ & $17.05\,(2.82)$ & $85.43\,(1.90)$ & $41.95\,(2.31)$ & $98.27\,(0.29)$ & $48.74\,(6.63)$\\
         One-vs-All Oracle & $34.92\,(0.56)$ & $75.10\,(0.92)$ & $20.61\,(3.50)$ & $95.75\,(0.09)$ & $59.69\,(0.50)$ & $99.59\,(0.01)$ & $10.30\,(0.46)$\\
         \bottomrule
    \end{tabular}
    }
    \vspace{-1em}
\end{table}
We first discuss the in-distribution performance of our method. W.r.t.\ MNIST, the results given in the left section of \cref{tab:res_mnist} show that we are on par with state-of-the-art GAN-based approaches while still having a similar ECE, only being surpassed by Deep-Ensembles and the other baselines by a fairly small margin. However, considering the respective CIFAR10 results in the left section of \cref{tab:res_cifar10}, we clearly outperform state-of-the-art GAN-based methods as well as all other baseline methods by a large margin. Noteworthily, we achieve an accuracy of $90.26\%$ which is $5$ to $8$ percent points (pp.) above the other classifiers and an AUROC S/F of $89.19\%$ which is $4$ to $6$ pp.\ higher than for the other methods. This corresponds to a relative improvement of $33\%$ in accuracy and $26\%$ in AUROC S/F compared to the second best method. A similar improvement in accuracy can be observed for CIFAR100 and Tiny ImageNet (\cref{tab:res_cifar100,tab:res_tinyimagenet}) where the AUROC S/F is slightly lower but still on par with other approaches. Considering ECE on CIFAR10/100 and Tiny ImageNet, our MC-Dropout variant is outperforming nearly all related methods, except for CIFAR100 and Tiny ImageNet where it is ranked second. This signals that, although we incorporate generated OoC examples impurifying the distribution of training data presented to the classifier, empirically there is no evidence that this harms the learned classifier, neither w.r.t.\ calibration, nor w.r.t.\ separation.
% This shows that our model can utilize the GAN-generated OoC examples indeed to better localize aleatoric uncertainty, therefore better separating success and failure, and at the same time improving the classifier's generalization.
% Furthermore, we observe that the calibration errors yield good results compared to the other methods.

Considering the OoD results from the right-hand sections of \cref{tab:res_mnist,tab:res_cifar10,tab:res_cifar100,tab:res_tinyimagenet}, the superiority of our method compared to the other ones is now consistent over all four in-distribution datasets. On the MNIST dataset we are outperforming previously published works, especially considering the AUPR-In and FPR @ 95\% TPR metrics, with a $25\%$ and $27\%$ relative improvement over the second best method, respectively. This is consistent with the results for CIFAR10 and CIFAR100 as in-distribution datasets where we achieve for both AUROC and AUPR-In a relative improvement of $19\%$--$26\%$ and $3\%$--$15\%$, respectively, over the second best method. On the higher dimensional Tiny ImageNet dataset, which has $20\times$ more classes than MNIST/CIFAR10 and $4\times$ more input pixels, the results are even stronger with an $36\%$--$67\%$ relative improvement over the second best competitor.

Comparing our results with the ones of the oracles, two observations become apparent. Firstly, in some OoD experiments the GAN-generated OoC examples achieve results fairly close to or even surpassing the ones of the oracles while also in some of them there is still room for improvement left (in particular w.r.t.\ FPR@95\%{}TPR).
Secondly, GAN-generated OoC examples can help improve generalization (in terms of classification accuracy) while real OoD data might be too far away from the in-distribution data.
%
%As a final experiment, we perform CIFAR10-based OoD and FP detection in the wild, i.e., we perform both tasks jointly while presenting in-distribution and OoD data in the same mix as in \cref{tab:res_cifar10} to the classifier. To this end, we applied gradient boosting to the uncertainty scores provided by the respective methods to predict TP, FP and OoD. For the Bayesian approaches we considered the sample mean's entropy and variance. The corresponding results are given in \cref{tab:res_boosting} in terms of class-wise accuracy. The main observations are that our method outperforms the other GAN-based methods and that our method including dropout achieves the overall best performance. It can be observed that the Entropy Oracle performs very strong while using only a single uncertainty score. At a second glance, this is not surprising since an FP mostly involves the confusion of two classes while training the DNN to output maximal entropy on OoD examples is likely to result in the confusion of up to five classes, therefore yielding different entropy levels. We present a more detailed discussion of this experiment in \cref{app:boosting}.
%In a more detailed analysis of confusion matrices, we observed that also the oracles have problems to distinguish OoD and FP cases. This might indicate that the remaining FPs of our model live in low density regimes of the feature space.
%
An OoD-dataset-wise breakdown of the results in \cref{tab:res_mnist,tab:res_cifar10,tab:res_cifar100,tab:res_tinyimagenet} is provided in \cref{app:ood_datasets}. For MNIST, this breakdown reveals that our method performs particularly well in the difficult task of separating MNIST 0-4 and MNIST 5-9. On the other MNIST-related tasks we achieve mid-tier results, being slightly behind the other GAN-based methods. The same holds for CIFAR100 0-49 and CIFAR100 50-99.
%Let us note that on these comparisons while being slightly behind on MNIST evaluations we are improving significantly on the hard MNIST 0-4 vs MNIST 5-9 case, compared to other GAN based training approaches.
However, with regards to CIFAR10 and Tiny ImageNet we are consistently outperforming the other methods by large margins.
% For CIFAR100 we have a similar situation as for MNIST, where our approach performs particularly well on CIFAR100 0-49 vs. CIFAR100 50-99 while achieving mid-tier results on the other tasks.

In \cref{app:boosting} we present results of an experiment where we perform FP and OoD detection jointly on the computed uncertainty scores. The main observations there are that our method outperforms other GAN-based methods and that our method including dropout achieves the overall best detection performance.

\section{Conclusion}\label{sec:conclusion}

In this work, we introduced a GAN-based model yielding a one-vs-all classifier with complete uncertainty quantification. Our model distinguishes uncertainty between classes (in large sample limit approaching aleatoric uncertainty) from OoD uncertainty (approaching epistemic uncertainty). We have demonstrated in numerical experiments that our model sets a new state-of-the-art w.r.t.\ OoD as well as FP detection. The generated OoC examples do not harm the training success in terms of calibration, but even improve it in terms of accuracy. We have seen that incorporating MC dropout to account for model uncertainty can further improve the results.

\subsection*{Broader Impact}

We are contributing generally applicable methodology which can be used in any real world application. The general direction of making the deployment of models more feasible due to improved quality of uncertainties might ultimately result in a reduction of the number of jobs in the respective sectors. However, as of now, it is also important to note that this method cannot give any guarantees on the estimated uncertainties and thus these should be used with caution and never solely relied upon.

\subsection*{Acknowledgment}
M.R.\ acknowledges useful and interesting discussions with Hanno Gottschalk.

\bibliographystyle{plain}
\bibliography{references}

%%%%%%%%%%%%%%%%%%%%%%%%%%%%%%%%%%%%%%%%%%%%%%%%%%%%%%%%%%%%

\newpage
\appendix

\section{Theoretical Consideration of the One-vs-All Classifier} \label{app:theoretic_argument}

Let $C(i|x,y),\,y\in\{1,\ldots,n\}$ be an ensemble of binary classifiers with $C(o|x,y)=1-C(i|x,y)$. Furthermore we define the joint distribution
\begin{equation}
    p_U(x,y)=p(x|y)p_U(y)
\end{equation}
w.r.t.\ a uniform class distribution, i.e., $p_U(y) \equiv \frac{1}{n} \; \forall y\in\mathcal{Y}$. Lastly we define
\begin{equation}
    \tilde{C}(i|x,y)=\frac{\frac{1}{n}C(i|x,y)}{\frac{1}{n}C(i|x,y)+\frac{n-1}{n}C(o|x,y)}\,.
\end{equation}
Additionally, $\hat{p}(y),\,\forall y\in\mathcal{Y}$, are the estimated relative class frequencies. Recalling \eqref{eq:classifier_loss}, our proposed training objective for the ensemble of binary classifiers is defined by
\begin{equation*}
    \min_C\,\frac{1}{|S|}\sum_{(x, y)\in S} \left[-\log(C(i|x, y)) -\frac{1}{n-1}\sum_{y' \in \mathcal{Y} \setminus \{ y \}}\frac{\hat{p}(y)}{\hat{p}(y')}\log(C(o|x, y'))\right]\, .\tag{\ref*{eq:classifier_loss}}
\end{equation*}

\begin{assumption}\label{assumptions:onevsall}
We make the following assumptions for our one-vs-all classifier $C$
\begin{enumerate}
    \item We sample $S \sim p(x,y)$ i.i.d.\ and there is no GAN-generated data involved;
    \item Let $\mathcal{H} = \{ C \}$ the set of all realizable one-vs-all classifiers. We assume, there exists a $C^* \in \mathcal{H}$ such that $\hat{p}(y|x)=p(y|x)$, i.e., $p(y|x)$ is realizable;
     \item We can compute an empirical risk minimizer, i.e., we can determine a $C_S \in \mathcal{H}$ which minimizes \eqref{eq:classifier_loss} for a given sample $S$.
    %\item \MR{$C$ has universal approximation property asymptotically for increasing capacity;}
    %\item \MR{We can increase the capacity of $C$ and the sample size $|S|$ in a suitable ratio, such that we obtain the convergence of $C$ to the theoretical optimum of the loss function \eqref{eq:classifier_loss}.}
\end{enumerate}
\end{assumption}
Note that the above assumptions are typical assumptions in statistical learning theory \cite{bookUML,vanHandel}.
% [Ramon van Handel - High Dimensional Probability Theory; Shalev-Schwartz and Ben David - Understanding Machine Learning]
Under these assumptions, the empirical risk minimizer converges to the desired hypothesis $C^*$ for $|S| \to \infty$. Furthermore, (deep) neural networks have an asymptotic (i.e., for increasing network capacity) universal approximation property \cite{yarotsky2018}
% [Yarotsky 2017 + XXx]
which makes assumption 2 fairly realistic.

\let\assumptionsref\undefined
\onevsall*
\begin{proof}
    Without loss of generality consider a single one-vs-all classifier $C(i|x,y^*)$ with $y^*\in\mathcal{Y}$ fixed and define $\bar{y^*}:=\mathcal{Y}\setminus\{y^*\}$ as the counter-part class of class $y^*$ (class ``not $y^*$'').\\
    If we now sample $ S_{y^*}\sim \tilde{p}_{y^*}(x,y)=p(x|y)\tilde{p}_{y^*}(y)$, with $\tilde{p}_{y^*}(y^*)=\frac{1}{2}$ and $\tilde{p}_{y^*}(y)=\frac{1}{2(n-1)},\,\forall y\in\mathcal{Y}\setminus\{y^*\}$, we are weighting $y^*$ and $\bar{y^*}$ equally. The loss contribution of $C(i|x,y^*)$ in \cref{eq:classifier_loss} then becomes
    \begin{align}
        &\frac{1}{|S_{y^*}|}\sum_{(x,y)\in S_{y^*}} -\mathbbm{1}_{\{y=y^*\}}\log(C(i|x,y^*)) - \frac{1}{n-1}\mathbbm{1}_{\{y\neq y^*\}}\frac{\frac{1}{2}}{\frac{1}{2(n-1)}}\log(C(o|x,y^*))\\
        =&\frac{1}{|S_{y^*}|}\sum_{(x,y)\in S_{y^*}} -\mathbbm{1}_{\{y=y^*\}}\log(C(i|x,y^*)) - \mathbbm{1}_{\{y\neq y^*\}}\log(C(o|x,y^*))\, ,
    \end{align}
    which is the binary cross entropy loss for equal class weights of $y^*$ and $\bar{y^*}$. This shows that our chosen loss function \eqref{eq:classifier_loss} yields a balanced one-vs-all classifier. As the change in sampling from the classes does not affect $p(x|y)$, by assumptions 1--3 we obtain for $|S| \to \infty$ that
    \begin{equation}
        C(i|x,y)\longrightarrow \frac{p(x|y)}{p(x|y)+p(x|\bar{y})},\quad\forall y\in\mathcal{Y}\,.
    \end{equation}
    This implies that the re-weighted classifier $\tilde{C}$ for $|S| \to \infty $ fulfills
    \begin{align}
        \tilde{C}(i|x,y)=\frac{\frac{1}{n}C(i|x,y)}{\frac{1}{n}C(i|x,y)+\frac{n-1}{n}C(o|x,y)} \longrightarrow\, & \frac{\frac{\frac{1}{n}p(x|y)}{p(x|y)+p(x|\bar{y})}}{\frac{\frac{1}{n}p(x|y)+\frac{n-1}{n}p(x|\bar{y})}{p(x|y)+p(x|\bar{y})}}\\
        =\, & \frac{\frac{1}{n}p(x|y)}{\frac{1}{n}p(x|y)+\frac{n-1}{n}p(x|\bar{y})}\\
        =\, & \frac{p(x|y)p_U(y)}{p(x|y)p_U(y)+p(x|\bar{y})p_U(\bar{y})}\\
        =\, & \frac{p(x|y)p_U(y)}{p_U(x)}\\
        =\, & p_U(y|x) \, .
    \end{align}
    By the preceding convergence, we obtain
    \begin{align}
        \frac{\tilde{C}(i|x,y)\hat{p}(y)}{\sum_{y'}\tilde{C}(i|x,y')\hat{p}(y')} \longrightarrow\, & \frac{p_U(y|x)p(y)}{\sum_{y'} p_U(y'|x)p(y')}\\
        =\, & \frac{\frac{p_U(x|y)p_U(y)p(y)}{p_U(x)}}{\sum_{y'}\frac{p_U(x|y')p_U(y')p(y')}{p_U(x)}}\\
        =\, & \frac{p(x|y)p(y)}{\sum_{y'}p(x|y')p(y')}\\
        =\, & p(y|x)\, ,
    \end{align}
    for $|S| \to \infty $, which concludes the proof.
\end{proof}

% \begin{proof}
%     $C(i|x,y)$ is trained as a balanced one-vs-all classifier and thus we can state:

%     \begin{equation}
%         C(i|x,y) \longrightarrow
%         %p(y|x) = \frac{p(x|y)p_U(y)}{p(x|y)p_U(y)+p(x|\bar{y})p_U(\bar{y})} =
%         \frac{p(x|y)}{p(x|y)+p(x|\bar{y})}
%     \end{equation}

%     with $\bar{y} := \mathcal{Y} \setminus \{y\}$ representing the counter-part class of class y (class ``not y''). This implies that

%     \begin{align}
%         \tilde{C}(i|x,y)=\frac{\frac{1}{n}C(i|x,y)}{\frac{1}{n}C(i|x,y)+\frac{n-1}{n}C(o|x,y)} \longrightarrow\, & \frac{\frac{\frac{1}{n}p(x|y)}{p(x|y)+p(x|\bar{y})}}{\frac{\frac{1}{n}p(x|y)+\frac{n-1}{n}p(x|\bar{y})}{p(x|y)+p(x|\bar{y})}}\\
%         =\, & \frac{\frac{1}{n}p(x|y)}{\frac{1}{n}p(x|y)+\frac{n-1}{n}p(x|\bar{y})}\\
%         =\, & \frac{p(x|y)p_U(y)}{p(x|y)p_U(y)+p(x|\bar{y})p_U(\bar{y})}\\
%         =\, & \frac{p(x|y)p_U(y)}{p_U(x)}\\
%         =\, & p_U(y|x) \, .
%     \end{align}
%     which concludes the proof.
% \end{proof}

\section{Low Dimensional Regularizer}\label{sec:regularizer_explained}

\begin{figure}
    \centering
    \includegraphics[width=0.75\textwidth,keepaspectratio]{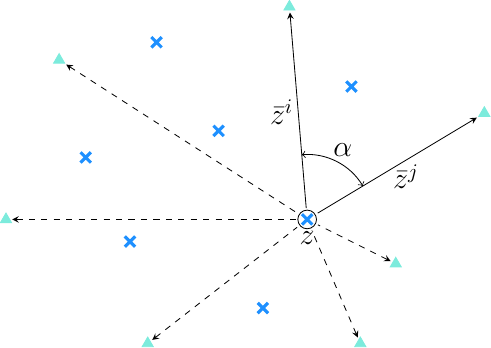}
    \caption{Illustration of the low dimensional regularizer for a single class and example (c.f.\ \cref{eq:singleregloss}).}
    \label{fig:regularizer}
\end{figure}

%In this section we want to make the computation of our low dimensional regularizer more clear.
In this section we explain how our low dimensional regularizer behaves on a simple example, aiming at providing the reader with a clearer intuition on the role of that regularizer.
Recalling \eqref{eq:regloss}, the low dimensional regularizer applied to our GAN training is defined as
\begin{equation}\tag{\ref*{eq:regloss}}
    L_R = \frac{1}{n} \sum_{y\in\mathcal{Y}} \left[\frac{1}{N_y} \sum_{(x,y)\in S} l_R(A_{\textrm{enc}}(x,y),y)\right] \, ,
\end{equation}
with
\begin{equation}\tag{\ref*{eq:singleregloss}}
    l_R(z,y)=\frac{2}{N_y^z\cdot(N_y^z-1)}\cdot \sum_{\substack{\bar{z}^i, \bar{z}^j \in \mathcal{Z}(z,y)\\ i< j}} -\log\left(\arccos\left(\frac{\bar{z}^i\ast\bar{z}^j}{\lVert\bar{z}^i\rVert\cdot\lVert\bar{z}^j\rVert}\right)\cdot\frac{1}{\pi}\right) \, ,
\end{equation}
where for a given example $(x,y)\in S$, $z=A_{\textrm{enc}}(x,y)$ is the latent embedding of $x$, $\mathcal{Z}(z,y)=\{\tilde{z}-z|\tilde{z}=G(e,y),e\sim U(0,1)\}=\left\{\bar{z}^1,\ldots,\bar{z}^{N_y^z}\right\}$ are all, by the GAN, generated latent codes but normalized to origin $z$ and $N_y=|\{(x,y')|y'=y\}|$ are the number of examples $x$ with class label $y$.

As the regularizer loss is class conditional let us consider a simple example with a single class. In \cref{fig:regularizer} you can see an illustration similar to the left Gaussian in \cref{fig:toy_example}. In this example the cosine similarity between $\bar{z}^i$ and $\bar{z}^j$ can be computed as
\begin{equation}
    \cos(\alpha)=\frac{\bar{z}^i\ast\bar{z}^j}{\lVert\bar{z}^i\rVert\cdot\lVert\bar{z}^j\rVert}\in[-1,1]\, ,
\end{equation}
where $\cos(\alpha)=-1$ implies that $\bar{z}^i$ and $\bar{z}^j$ point in opposite directions and $\cos(\alpha)=1$ implies that they point into the same direction. In order to get a distance measure that is taking values in $[0,1]$, we compute the angle $\alpha$ in radians and normalize the result by $1/\pi$, i.e.,
\begin{equation}
    \frac{\alpha}{\pi} = \arccos\left(\frac{\bar{z}^i\ast\bar{z}^j}{\lVert\bar{z}^i\rVert\cdot\lVert\bar{z}^j\rVert}\right)\cdot\frac{1}{\pi}\in[0,1]\, .
\end{equation}

This is now the same quantity as in \cref{eq:singleregloss}. By maximizing the average angular distance between all unique pairs $\bar{z}^i,\bar{z}^j\in\mathcal{Z}(z,y),\,i<j$, we are forcing the generator to spread all generated latent codes on the boundary of the distribution represented by the class-specific data example. Transforming these angular distances via a logarithm and averaging over all $(x,y)\in S$, yields our low dimensional regularizer in \cref{eq:regloss}.

\Cref{fig:samples_different_lambda_reg} shows OoC examples generated by GANs who where trained with different weights for the low dimensional regularizer ($\lambda_\textrm{reg}$). Especially in the MNIST 0-4 example one can see that the GAN trained with $\lambda_\textrm{reg}=32$ produces much more diverse samples compared to no low-dimensional regularization.

\begin{figure}
    \centering
    \setlength{\tabcolsep}{3pt}
    \begin{tabular}{ m{0.5cm} m{16cm} }
        a) & \includegraphics[width=0.9\textwidth]{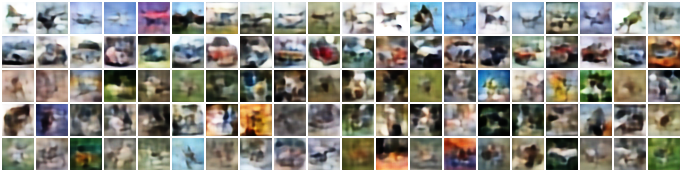} \\
        b) & \includegraphics[width=0.9\textwidth]{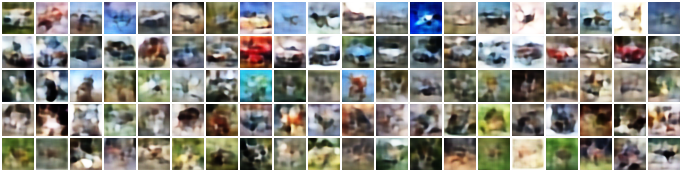}\\
        c) & \includegraphics[width=0.9\textwidth]{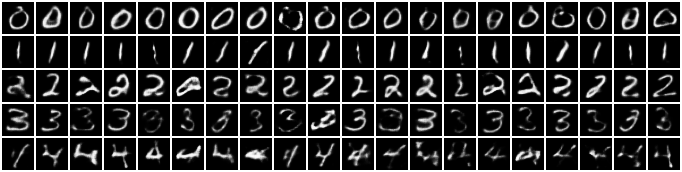}\\
        d) & \includegraphics[width=0.9\textwidth]{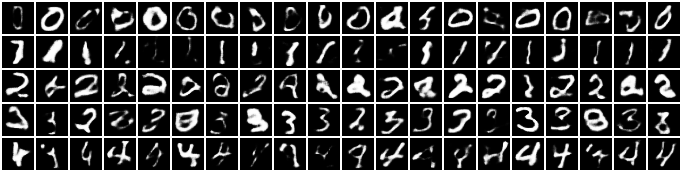}
    \end{tabular}
    \caption{Generated OoC examples by our approach with different weights on the low dimensional regularizer. a) CIFAR10 0-4 samples with $\lambda_\textrm{reg}=0$; b) CIFAR10 0-4 samples with $\lambda_\textrm{reg}=1$; c) MNIST 0-4 samples with $\lambda_\textrm{reg}=0$; d) MNIST 0-4 samples with $\lambda_\textrm{reg}=32$.}
    \label{fig:samples_different_lambda_reg}
\end{figure}

\section{Additional Toy Examples}\label{app:additional_toy}

\begin{figure}[t]
    \centering
    \includegraphics[width=0.32\textwidth]{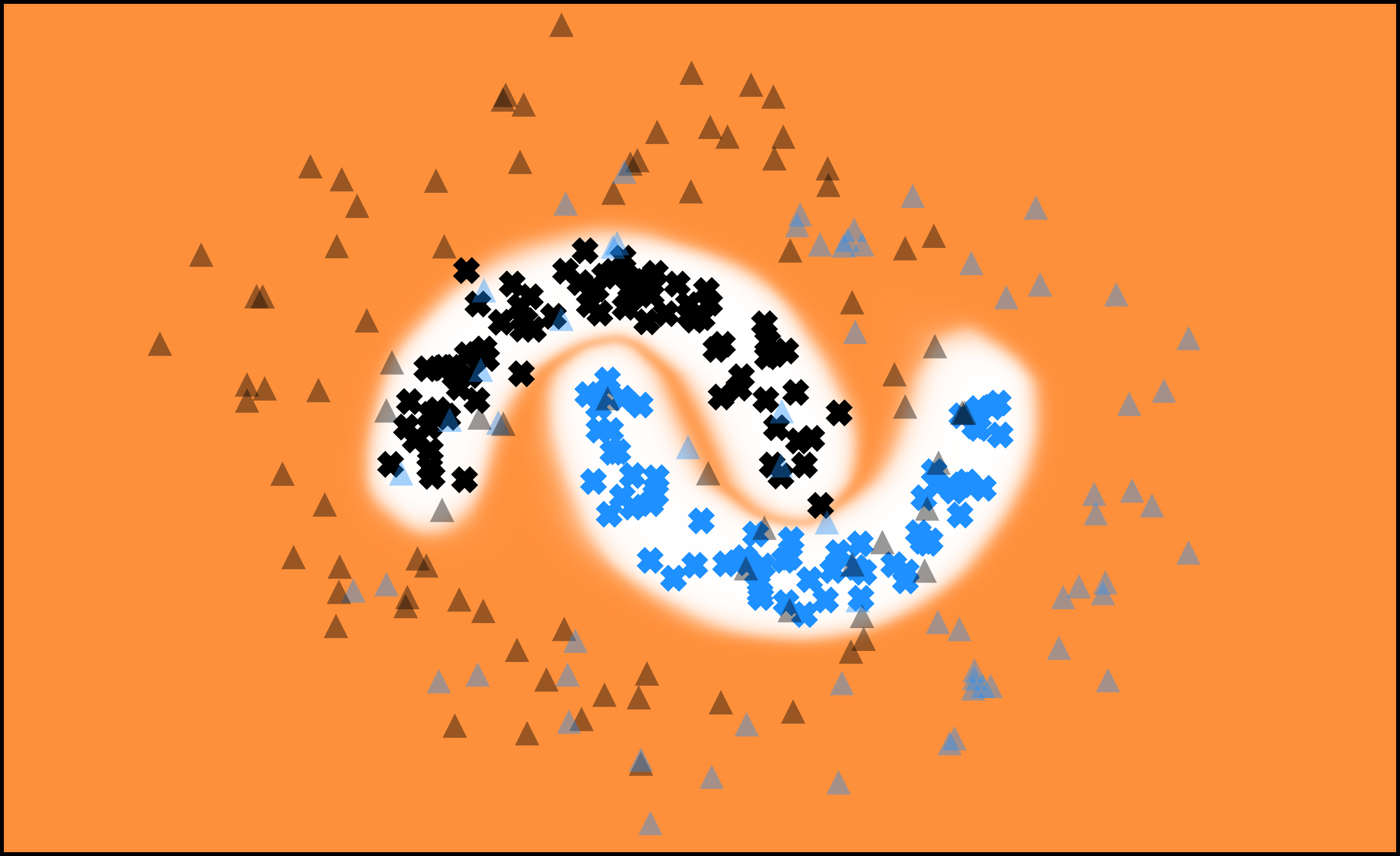}
    \includegraphics[width=0.32\textwidth]{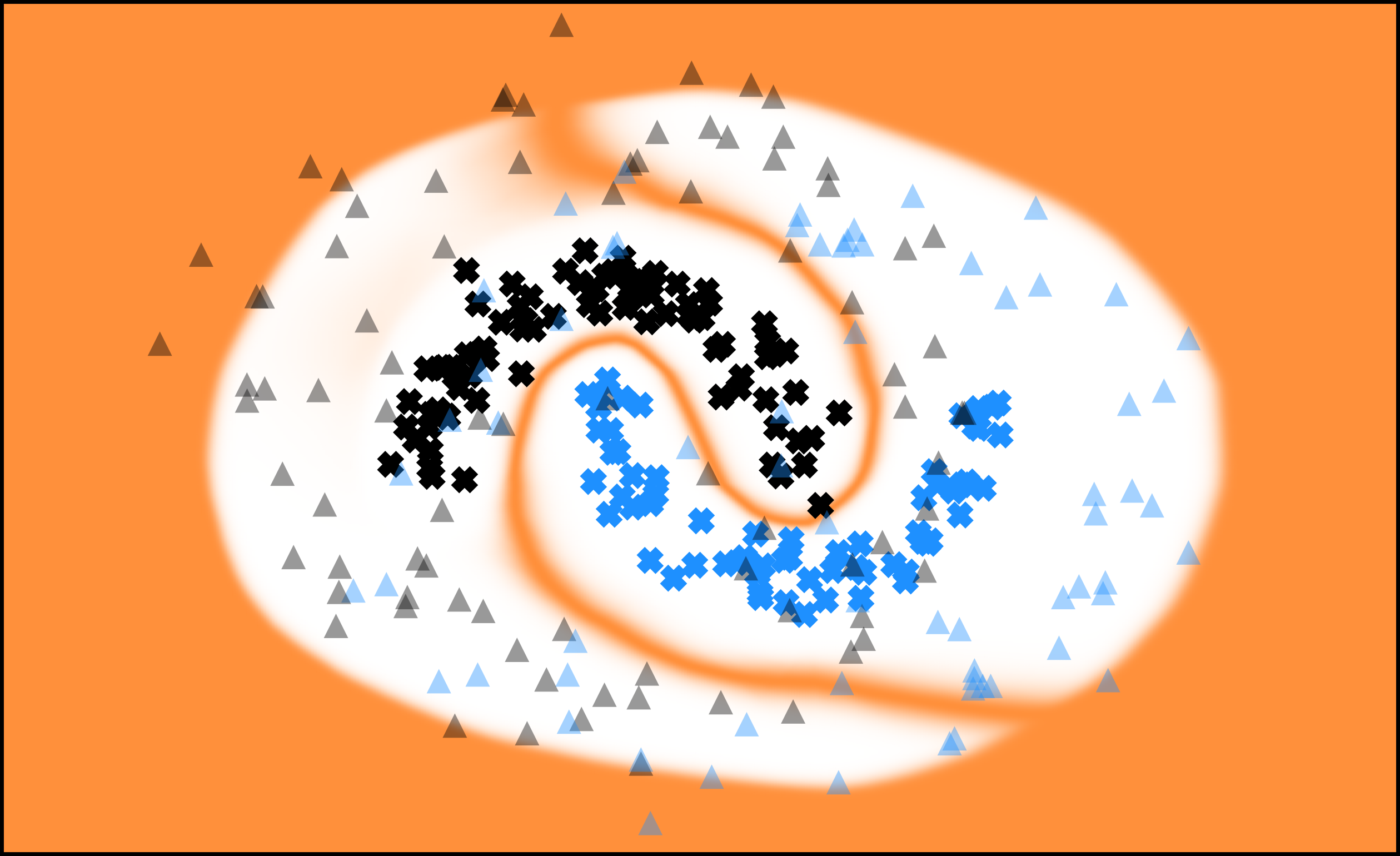}
    \includegraphics[width=0.32\textwidth]{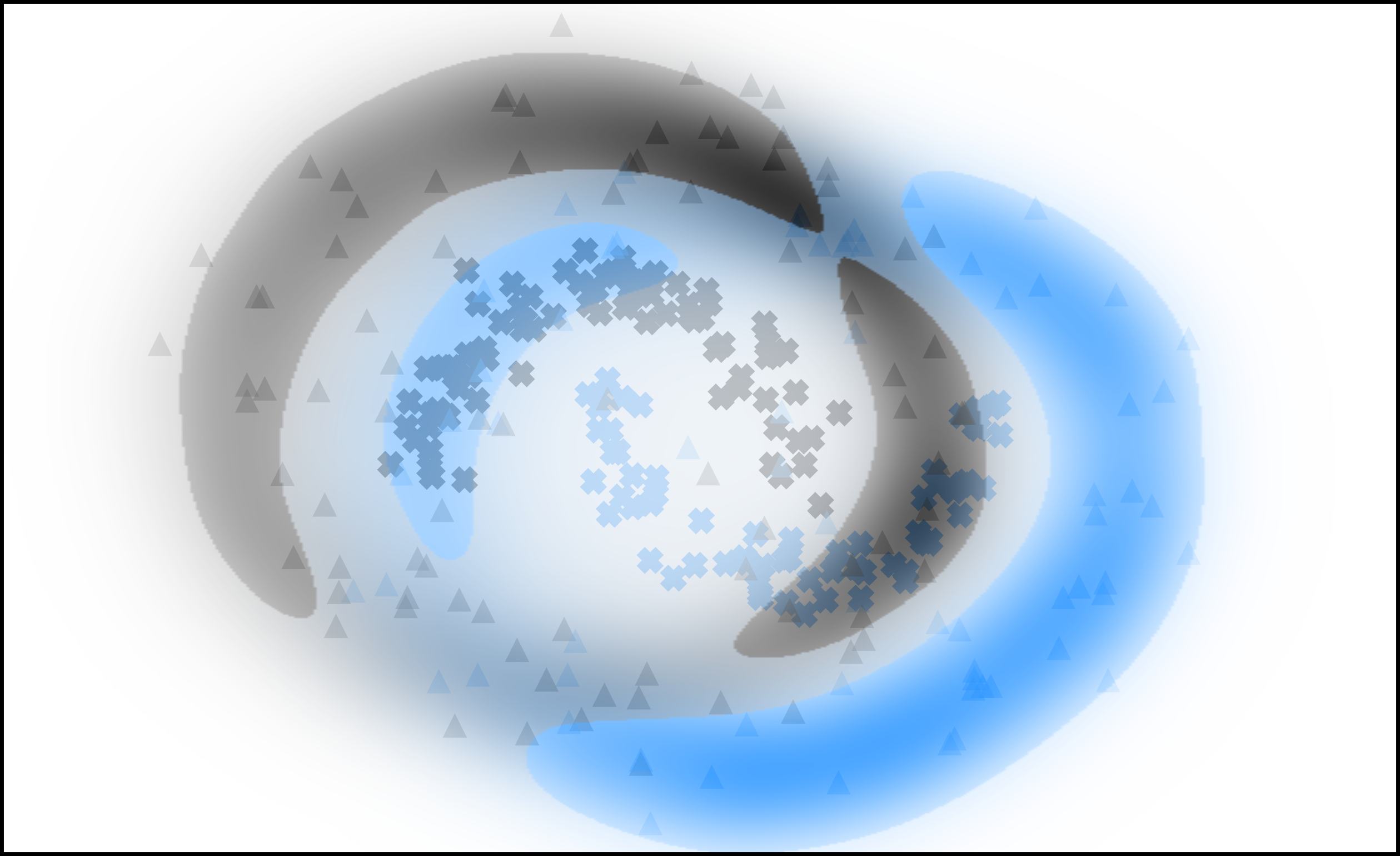}\\
    \includegraphics[width=0.32\textwidth]{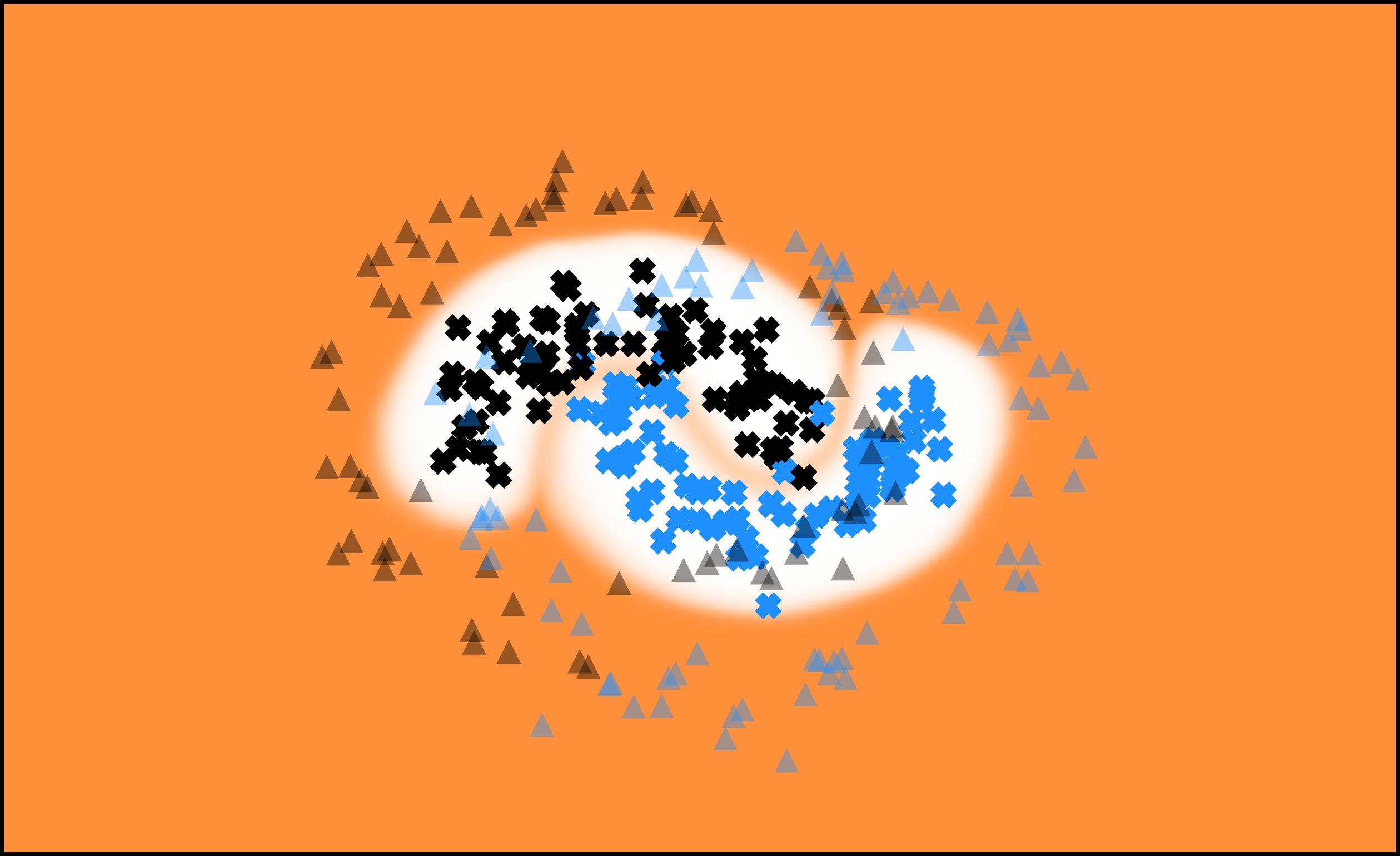}
    \includegraphics[width=0.32\textwidth]{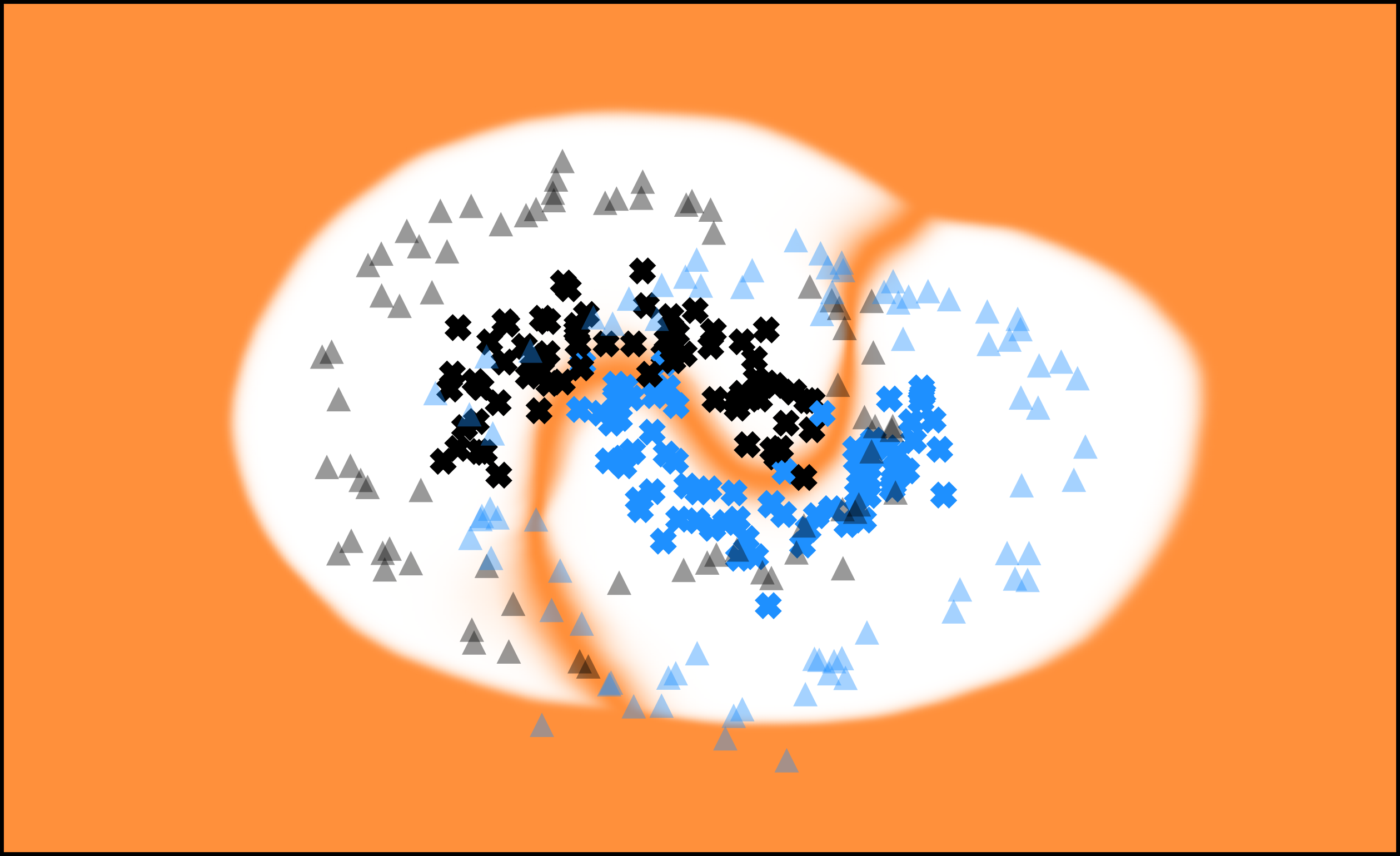}
    \includegraphics[width=0.32\textwidth]{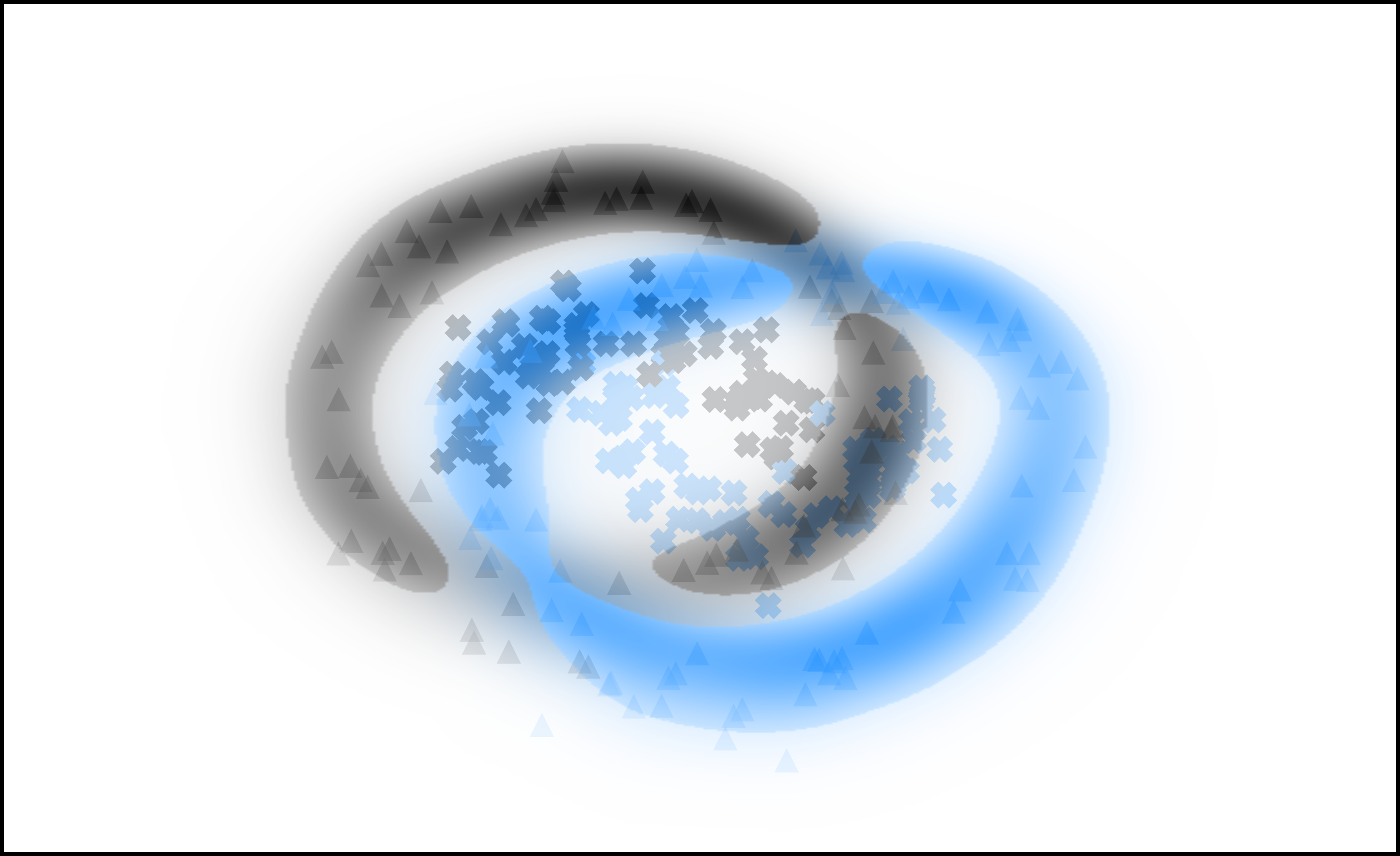}
    \caption{Two toy examples of the two moons dataset with different variance. From left to right: 1. OoD heatmap with orange indicating a high probability of being OoD and white for in-distribution; 2. Aleatoric uncertainty (entropy over \Cref{eq:classifier}) with orange indicating high and white low uncertainty; 3. Gaussian kernel density estimate of the GAN examples. Triangles indicate GAN OoC examples and crosses correspond to the in-distribution data. The data underlying the bottom row has higher variance than the one underlying the top row.}
    \label{fig:toy_example_two_moons}
\end{figure}
\begin{figure}[t]
    \centering
    \includegraphics[width=0.45\textwidth]{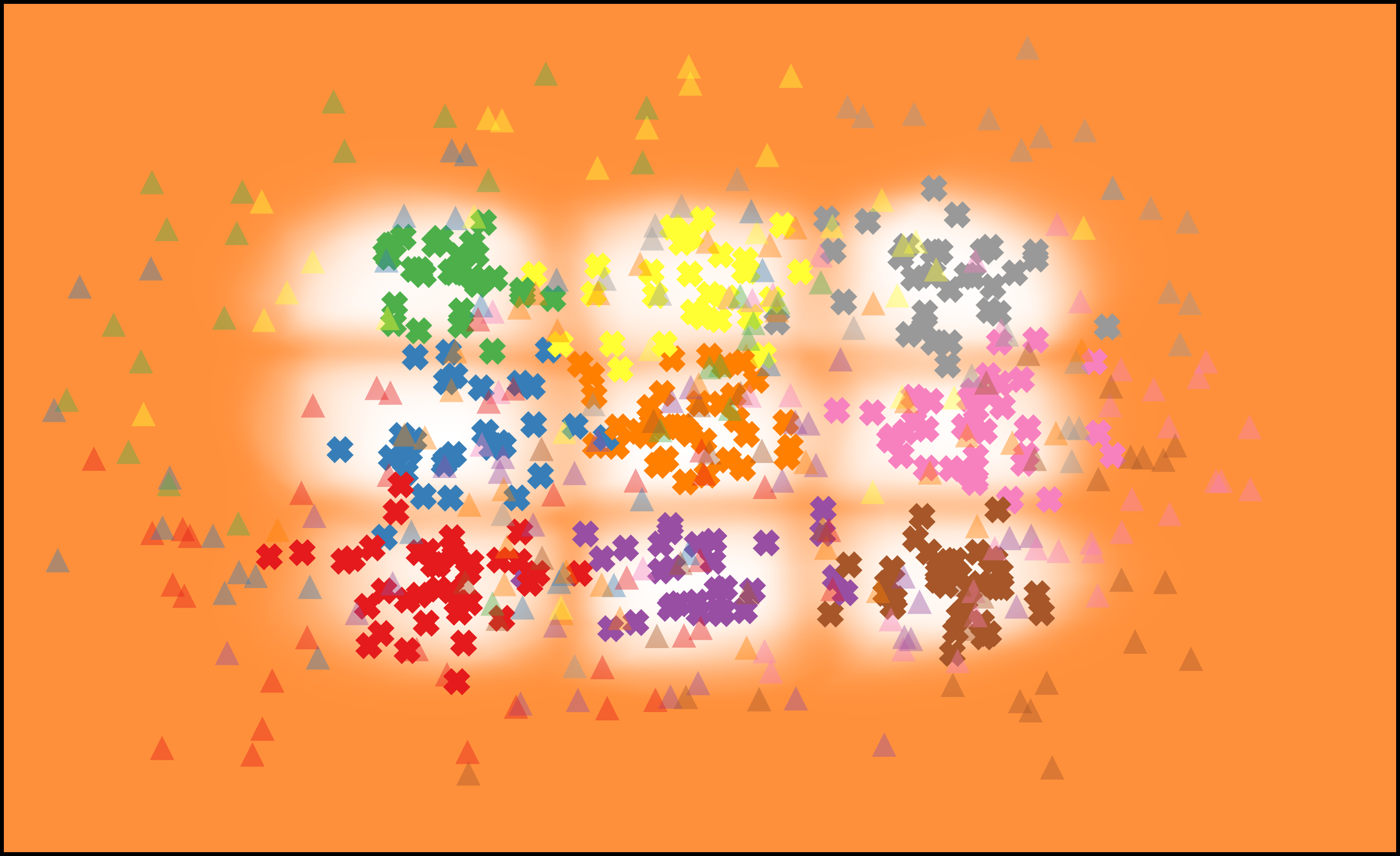}
    \includegraphics[width=0.45\textwidth]{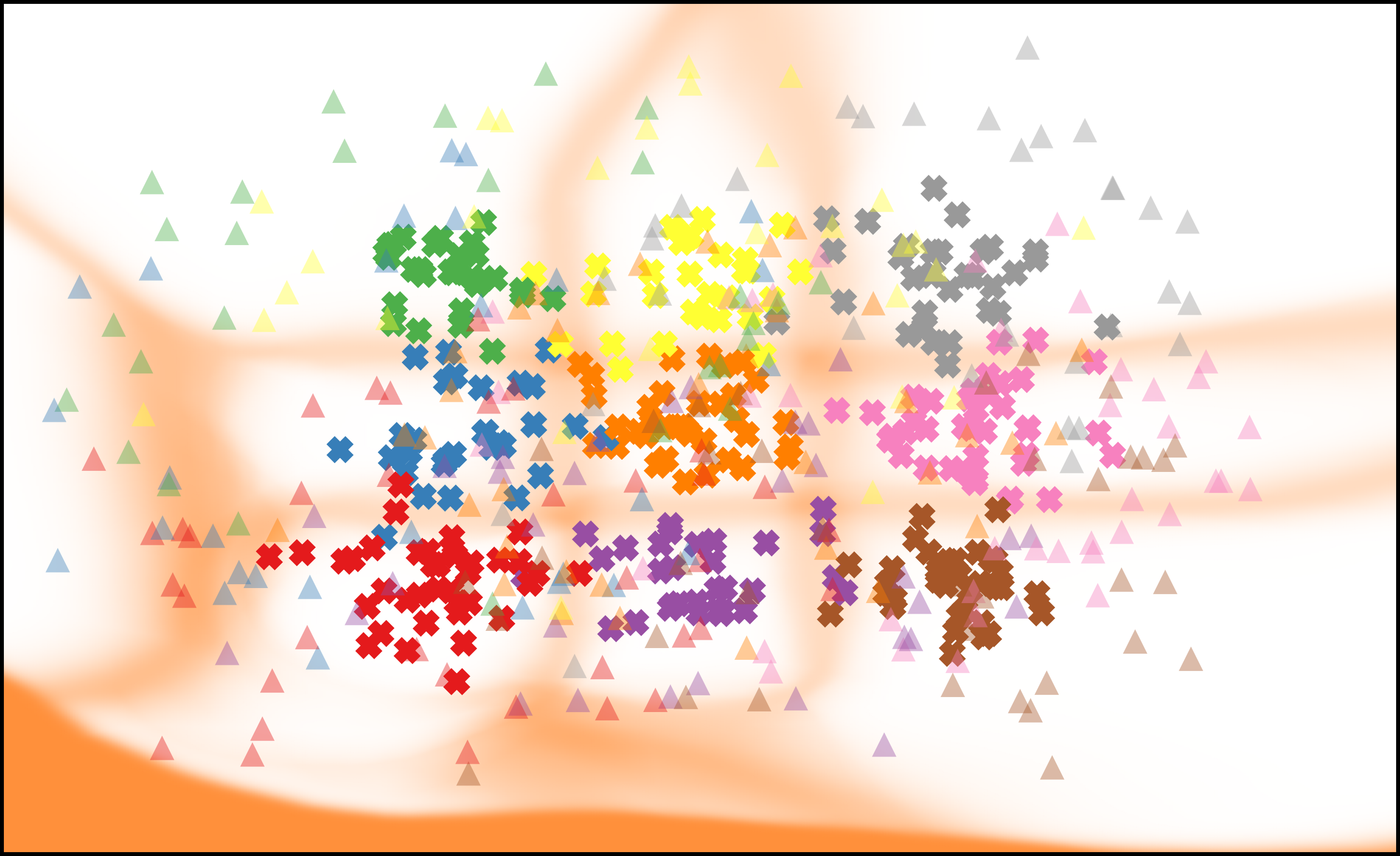}\\
    \includegraphics[width=0.45\textwidth]{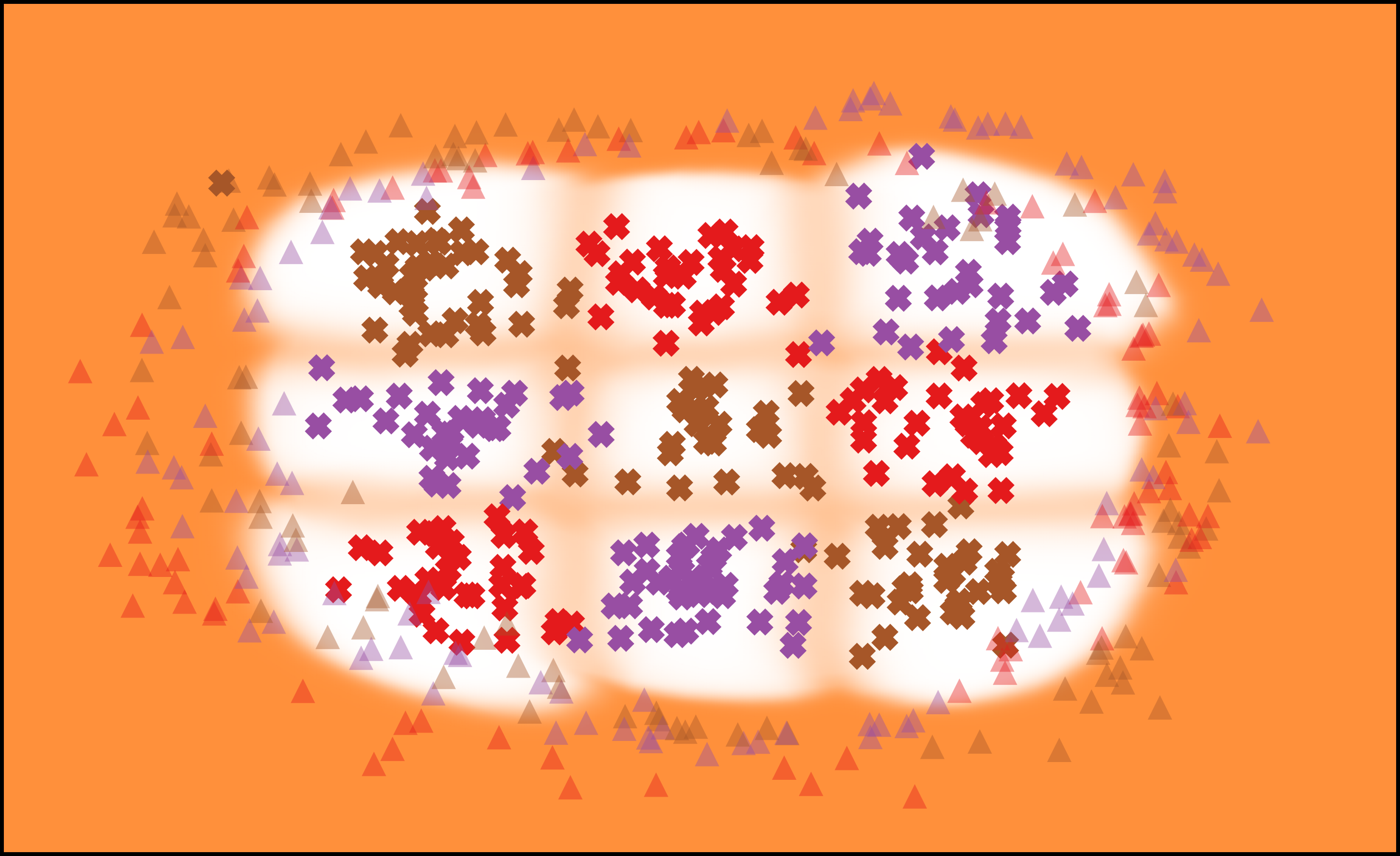}
    \includegraphics[width=0.45\textwidth]{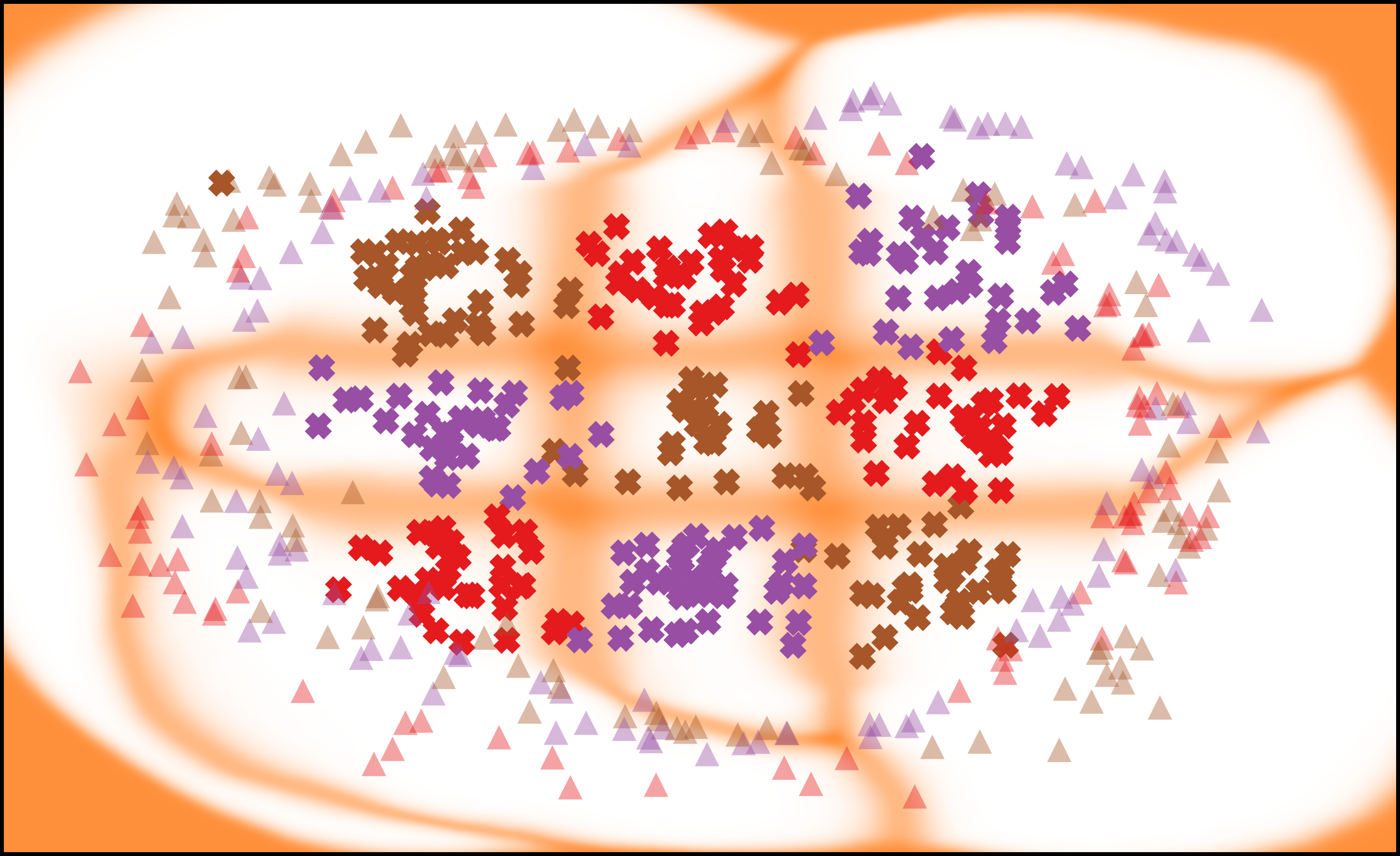}
    \caption{Two toy examples of a 3x3 grid of independent Gaussians. In the top row each Gaussian has its own class assigned, resulting in 9 classes and in the bottom row disconnected Gaussians were assigned to 3 classes in total. From left to right: 1. OoD heatmap with orange indicating a high probability of being OoD and white for in-distribution; 2. Aleatoric uncertainty (entropy over \Cref{eq:classifier}) with orange indicating high and white low uncertainty; Triangles indicate GAN OoC examples and crosses correspond to the in-distribution data.}
    \label{fig:toy_example_3x3}
\end{figure}

As a more challenging 2D example, we also present a result on the two moons dataset. In  \cref{fig:toy_example_two_moons}, the results of an experiment with two separable classes is shown. In the top row of the figure, the training data is class-wise separable. In that case, we observe that the decision boundary also belongs to the OoD regime (top left panel), which is true as there is no in-distribution data present. Our model is able to learn this since the classes are shielded tightly enough such that the generated OoC examples are in part also located in the vicinity of the decision boundary. To better visualize the distribution of generated data, we depict estimated densities of the generated OoC data in the top right panel.
For the aleatoric uncertainty in the top center panel, we observe that due to numerical issues, aleatoric uncertainty increases further away from the in-distribution data. However this can be accounted for by first considering epistemic uncertainty and then the aleatoric one. By this procedure, most of the examples close to the decision boundary would be correctly classified as OoD which is also correct since there is only a minor amount of aleatoric uncertainty involved in this example due to moderate sample size not being reflected by the data.\\
In the bottom row example, an experiment analogous to the top row but with a noisier version of the data is presented. The bottom left panel shows that the epistemic uncertainty on the decision boundary between the two classes clearly decreases in comparison to the top left panel. At the same time the bottom center panel shows the gain in aleatoric uncertainty compared to the top center panel. Note that for data points far away from the in-distribution regime, all $C(i|x,y)$ take values close to zero. In practice this requires the inclusion of a small $\varepsilon>0$ in the denominator to circumvent numerical problems in the logarithmic loss terms. In our experiments this also results in a high aleatoric uncertainty far away from the in-distribution as all estimated probabilities uniformly take the lower bound's value $\varepsilon$. However, a joint consideration of aleatoric and epistemic uncertainty disentangles this, since a high estimated probability of being OoD means that the estimate of aleatoric uncertainty can be neglected. This also becomes evident in the center panels where one can observe high aleatoric uncertainty $H(x)$ outside the in-distribution regime, which can however be masked out by the OoD probability $\tilde{C}(o|x)$.

\Cref{fig:toy_example_3x3} shows a toy example on a $3\times 3$ grid of Gaussians. In the top row each Gaussian belongs to its own class, resulting in $9$ classes in total, while in the bottom row multiple Gaussians belong to one class, resulting in disconnected class regions. In both cases our method is able to predict a high aleatoric uncertainty between class boundaries and high epistemic uncertainty away from the training data. We can also observe the high aleatoric uncertainty far away from the training data as in \cref{fig:toy_example_two_moons}, which can also be disentangled by a joint consideration of aleatoric and epistemic uncertainty.

\section{Hyperparameter Settings for Experiments}\label{app:hyperparams}

For the Bayes-by-Backprop implementation we use \textit{spike-and-slap} priors in combination with diagonal Gaussian posterior distributions as described in \cite{BlundellCKW15}. MC-Dropout uses a 50\% dropout probability on all weight layers. Both mentioned methods average their predictions over 50 forward passes. The deep-ensembles were built by averaging 5 networks. Implementations of Confident Classifier and GEN use the architectures and hyperparameters recommended by the authors and we followed their reference code where possible. Parameter studies showed that our method is mostly stable w.r.t.\ the hyperparameter selection. We used $\lambda_{gp}=10$ as proposed in \cite{GulrajaniAADC17}, $\lambda_{cl}=2$ for MNIST/Tiny Imagenet and $\lambda_{cl}=4$ for CIFAR10/100, $\lambda_{\textrm{real}}=0.6$, $\lambda_R=32$ for MNIST/Tiny ImageNet and $\lambda_R=1$ for CIFAR10/100.

The latent dimension for MNIST was set to $32$ while using $128$ dimensions for CIFAR10/100 and Tiny ImageNet. We used batch stochastic gradient descent with the ADAM \cite{adam} optimizer and a batch size of $256$. The learning rate was initialized to $10^{-3}$ for the classification model and $2\cdot 10^{-4}$ for the GAN while linearly decaying them to $10^{-5}$ over the course of all training iterations. Training on the MNIST dataset required $2\,000$ generator iterations while taking $10\,000$ iterations for the CIFAR10/100 and Tiny ImageNet datasets (one iteration is considered to be one gradient step on the generator). As recommended in \cite{GulrajaniAADC17}, we use batch normalization only in the generator, while the critic as well as the classifier do not use any type of layer normalization. We also adopt the alternating training scheme from the just mentioned work. For each generator iteration the critic as well as classifier are performing $5$ optimization steps on the same batch. We do apply some mild data augmentation by using random horizontal flipping where appropriate. The test set sizes used for computing the numerical results can be found in \cref{tab:testset_sizes}. Besides the already mentioned LeNet and ResNet architectures for the classification model, the cAE uses a small convolutional architecture for MNIST and a ResNet architecture for CIFAR10/100 and Tiny ImageNet. The generator and discriminator are both implemented as fully connected networks with $(1024, 512, 256)$ neurons for the generator and $(512, 512, 512)$ neurons for the discriminator. We experimented with different sizes of the generator and observed low to no influence on the final performance of the classifier. All experiments were performed on a Nvidia RTX 3090, Tesla P100 or A100 GPU but models with less VRAM are also sufficient as the cGAN itself is very small. On the RTX 3090 training takes approximately $50$ minutes, $3$ hours, $3.5$ hours and $5.5$ hours for MNIST, CIFAR10, CIFAR100 and Tiny ImageNet, respectively.
\begin{table}[htb]
    \centering
    \caption{Test set sizes used for computing the numerical results.}
    \label{tab:testset_sizes}
    \scalebox{0.9}{
    \begin{tabular}{cr}
        \toprule
         Dataset & Test-Set Size \\
         \midrule
         MNIST 0-4 & $5\,000$\\
         MNIST 5-9 & $5\,000$\\
         CIFAR10 0-4 & $5\,000$\\
         CIFAR10 5-9 & $5\,000$\\
         Tiny ImageNet 0-99 & $5\,000$\\
         Tiny ImageNet 100-199 & $5\,000$\\
         EMNIST-Letters & $20\,800$\\
         Fashion-MNIST & $10\,000$\\
         SVHN & $26\,032$\\
         Omniglot & $13\,180$\\
         LSUN & $10\,000$\\
         \bottomrule
    \end{tabular}
    }
\end{table}

\section{Parameter Study}\label{app:parameter_study}

\begin{figure}
    \centering
    \includegraphics[width=\textwidth]{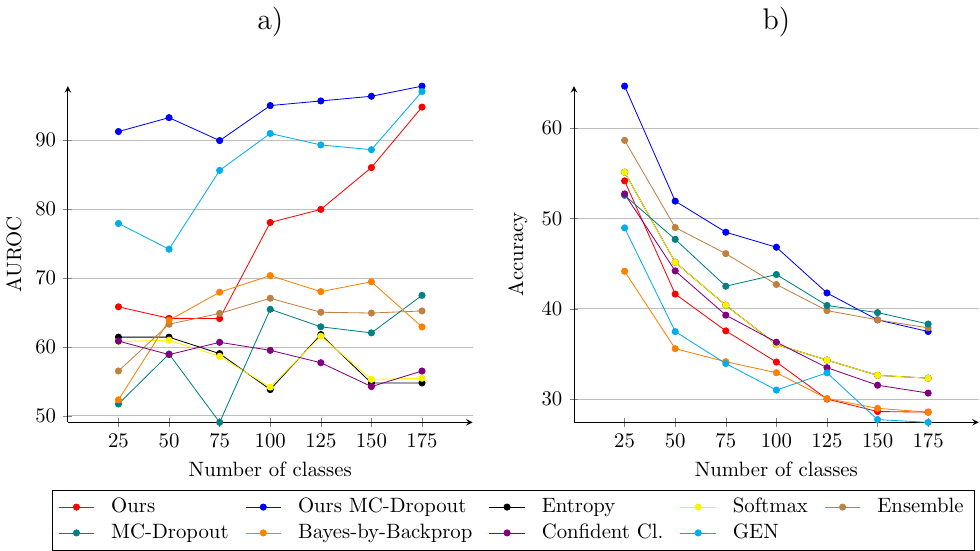}
    \caption{Influence of the number of classes on the OoD detection and accuracy based on the Tiny ImageNet dataset. a) Influence on the AUROC; b) Influence on the accuracy. The number of classes corresponds to the Tiny ImageNet dataset and the rest of the classes are then considered as OoD. Additonally the SVHN, Fashion-MNIST and MNIST datasets were used as OoD. Hyperparameters and models are the same as for the results in \cref{tab:res_tinyimagenet,tab:res_tinyimagenet_breakdown} and kept fixed for different numbers of classes.}
    \label{fig:ablation_number_of_classes}
\end{figure}

In order to examine the influence of hyperparameter selection onto our framework we conducted an extensive experimental study. We displayed all mentioned evaluation metrics from \cref{sec:experiments} while varying $\lambda_\textrm{cl}$, $\lambda_\textrm{real}$, $\lambda_\textrm{reg}$ and the chosen latent dimension for the cAE and cGAN. For $\lambda_\textrm{cl}$ in \cref{eq:gan_complete_gd}, which controls the influence of the classifier predictions onto the generated OoC examples, \cref{fig:lambda_cl} shows clearly that for MNIST $\lambda_\textrm{cl}=2$ and for CIFAR10 $\lambda_\textrm{cl}=4$ are locally optimal values w.r.t.\ maximum performance. While larger $\lambda_\textrm{cl}$ tend to increase the in-distribution accuracy slightly it greatly decreases all other evaluation metrics. A very interesting observation in \cref{fig:lambda_real} about the influence of $\lambda_\textrm{real}$ from \cref{eq:gan_complete_c} is that both extremes ($\lambda_\textrm{real}\in\{0,1\}$) are greatly decreasing the results. This shows clearly that the generated OoC examples have a positive effect on the OoD detection performance and in-distribution separability. In terms of the dimensionality of the latent space, \cref{fig:latent_dim} shows that $32$ and $128$ dimensions are the optimal values for MNIST and CIFAR10, respectively. This is coherent with the visual quality of the examples decoded by the cAE, which does not improve much with higher dimensions. Analysing the influence of $\lambda_\textrm{reg}$ in \cref{fig:lambda_reg} one can observe a positive effect on the results on the MNIST dataset by increasing the value of the parameter up to $\lambda_\textrm{reg}=32$ where we reach a local maximum. It is also very apparent, that for $\lambda_\textrm{reg}=0$ the results are comparatively bad, emphasizing the role of the low dimensional regularizer in our model. For CIFAR10 the effect is not as clear as for the MNIST dataset, but \cref{fig:lambda_reg} also shows a locally optimal setting of $\lambda_\textrm{reg}\in[1,2]$. We believe that the higher latent dimension required for the CIFAR10 dataset and thus the curse of dimensionality is the main factor behind this finding. \Cref{fig:samples_different_lambda_reg} shows qualitatively the influence of $\lambda_\textrm{reg}$ on the generated OoC examples. We can observe that for a training with a higher $\lambda_\textrm{reg}$ we obtain much more diverse examples compared to when using no regularizer. This is especially apparent in the MNIST setting.

\Cref{fig:ablation_number_of_classes} shows the OoD detection performance w.r.t.\ AUROC and the accuracy on the in-distribution set while increasing the number of classes present during training. The hyperparameters as well as all model architectures were kept fixed for this experiment. We used the Tiny ImageNet dataset with varying class-wise subsets as in-distribution data and the remaining Tiny ImageNet classes, SVHN, Fashion-MNIST and MNIST datasets as OoD examples. All methods were trained on our training subset while the figure displays the results on the respective test sets. For GEN we faced numerical problems with exploding evidence values. Due to this, we chose different hyperparameter settings for different numbers of classes, which also results in larger standard deviation values for GEN in \cref{tab:res_tinyimagenet,tab:res_tinyimagenet_breakdown}. The study shows that increasing the number of classes also increases the OoD detection performance of our approach. This is also true for GEN, which is why we argue that the gain in performance can in part be attributed to the use of an autoencoder. As the number of classes increases, so does the number of examples in the training dataset, improving the ability of the autoencoder to produce diverse embeddings. When observing the accuracy on the in-distribution data the most apparent observation is that for all methods the accuracy decreases for a higher number of classes. This is not surprising as we kept the classification model fixed. In general we observe that our approach with MC-dropout has the overall highest performance, being equal to standard MC-Dropout and Ensembles from $150$ classes and above. Our approach without dropout achieves mid-tier results while still being superior to GEN.
\begin{figure}[tb]
    \centering
    \includegraphics[width=\textwidth,height=0.4\textheight,keepaspectratio]{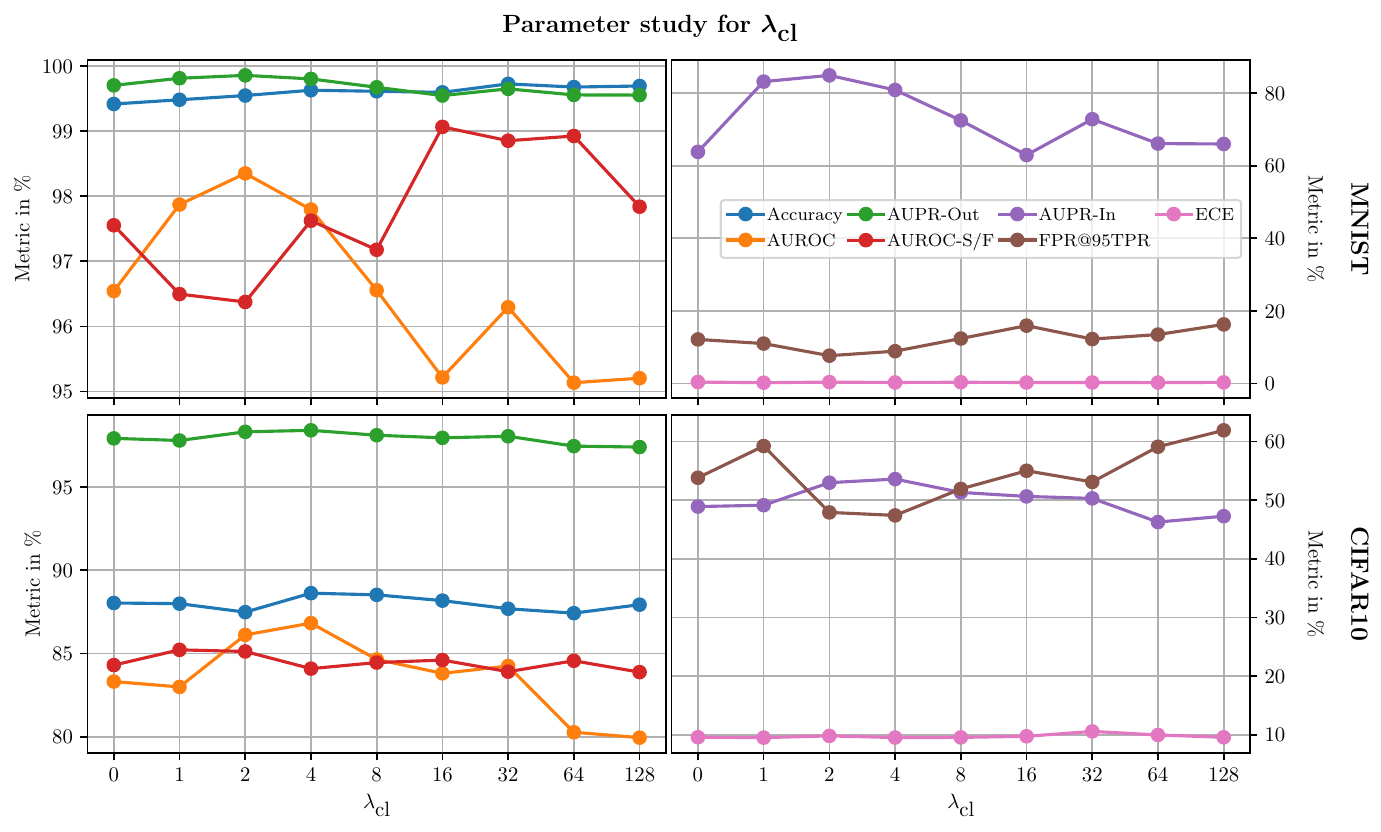}
    \caption{Parameter study over $\lambda_\textrm{cl}$ in \cref{eq:gan_complete_gd}. For MNIST hyperparameters were fixed at $\lambda_\textrm{reg}=14$, $\lambda_\textrm{real}=0.5$, $\text{latent dimension}=16$ and for CIFAR10 at $\lambda_\textrm{reg}=0$, $\lambda_\textrm{real}=0.6$, $\text{latent dimension}=128$. All seeds were also the same for all experiments. All metrics were computed on the validation sets of MNIST 0-4 / CIFAR10 0-4 as in-distribution datasets and the entirety of all assigned OoD datasets as defined in \cref{sec:experiments}.}
    \label{fig:lambda_cl}
\end{figure}
\begin{figure}[tb]
    \centering
    \includegraphics[width=\textwidth,height=0.4\textheight,keepaspectratio]{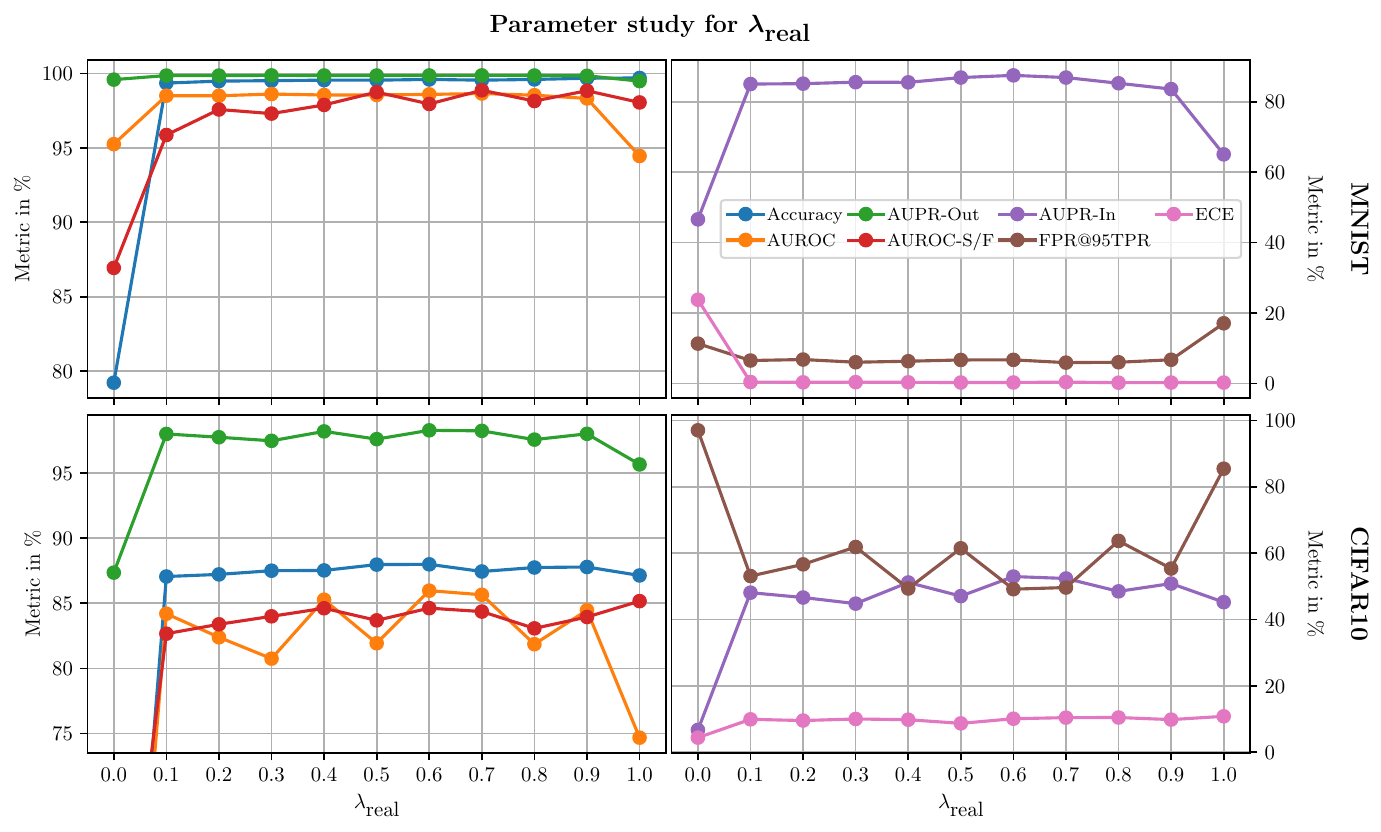}
    \caption{Parameter study over $\lambda_\textrm{real}$ in \cref{eq:gan_complete_c}. For MNIST, hyperparameters were fixed at $\lambda_\textrm{reg}=32$, $\lambda_\textrm{cl}=1$, $\text{latent dimension}=16$ and for CIFAR10 at $\lambda_\textrm{reg}=0$, $\lambda_\textrm{cl}=2$, $\text{latent dimension}=128$. All seeds were also the same for all experiments. All metrics were computed on the validation sets of MNIST 0-4 / CIFAR10 0-4 as in-distribution datasets and the entirety of all assigned OoD datasets as defined in \cref{sec:experiments}.}
    \label{fig:lambda_real}
\end{figure}
\begin{figure}[tb]
    \centering
    \includegraphics[width=\textwidth,height=0.4\textheight,keepaspectratio]{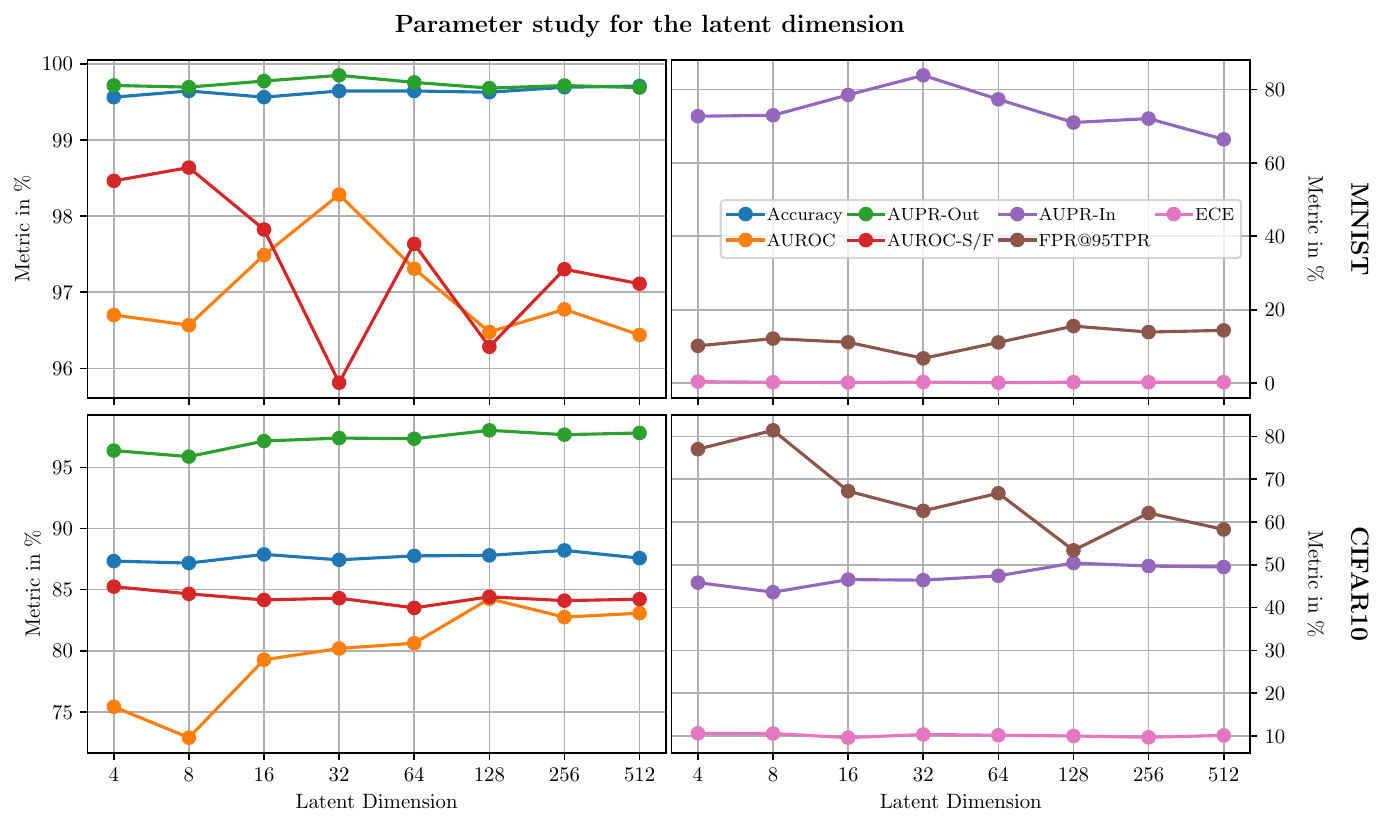}
    \caption{Parameter study over latent dimensions of $z$ in \cref{eq:gan_complete_gd}. For MNIST, hyperparameters were fixed at $\lambda_\textrm{reg}=32$, $\lambda_\textrm{cl}=1$, $\lambda_\textrm{real}=0.5$ and for CIFAR10 at $\lambda_\textrm{reg}=0$, $\lambda_\textrm{cl}=4$, $\lambda_\textrm{real}=0.6$. All seeds were also the same for all experiments. All metrics were computed on the validation sets of MNIST 0-4 / CIFAR10 0-4 as in-distribution datasets and the entirety of all assigned OoD datasets as defined in \cref{sec:experiments}.}
    \label{fig:latent_dim}
\end{figure}
\begin{figure}[tb]
    \centering
    \includegraphics[width=\textwidth,height=0.4\textheight,keepaspectratio]{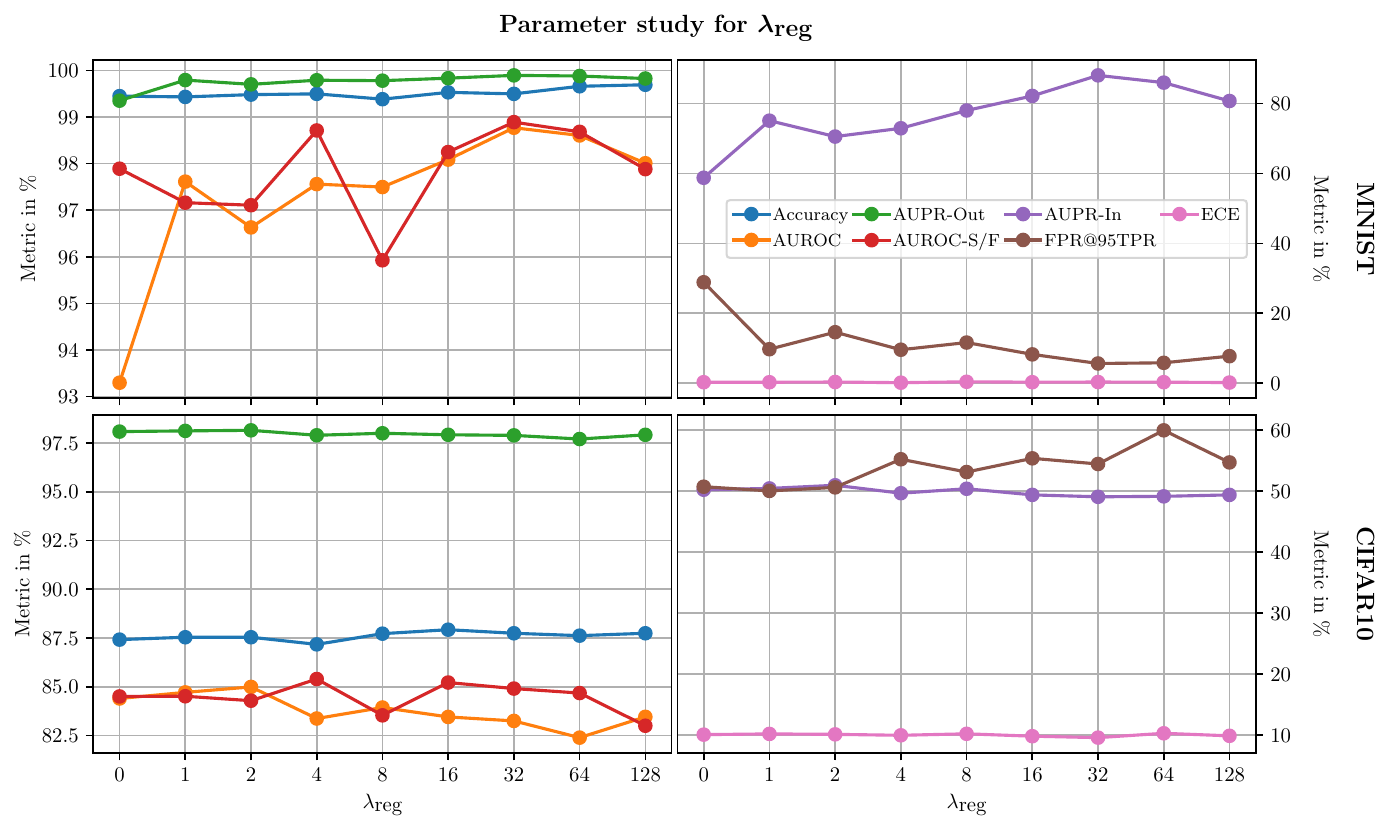}
    \caption{Parameter study over $\lambda_\textrm{reg}$ in \cref{eq:gan_complete_gd}. For MNIST, hyperparameters were fixed at $\lambda_\textrm{cl}=1$, $\lambda_\textrm{real}=0.5$, $\text{latent dimension}=16$ and for CIFAR10 at $\lambda_\textrm{cl}=4$, $\lambda_\textrm{real}=0.6$, $\text{latent dimension}=128$. All seeds were also the same for all experiments. All metrics were computed on the validation sets of MNIST 0-4 / CIFAR10 0-4 as in-distribution datasets and the entirety of all assigned OoD datasets as defined in \cref{sec:experiments}.}
    \label{fig:lambda_reg}
\end{figure}

\section{Joint Detection of OoD and FP in the Wild} \label{app:boosting}

\begin{table}[tb]
    \setlength{\tabcolsep}{3pt}
    \centering
    \caption{Results obtained from aggregating predicted uncertainty estimates in a gradient boosting model which was then trained on a validation set. In distribution dataset is CIFAR10 (0-4) and OoD datasets are CIFAR10 (5-9), LSUN, SVHN, Fashion-MNIST, MNIST.}
    \label{tab:res_boosting}
    \scalebox{\resultstablescaling}{
    \begin{tabular}{l @{\hskip 25pt} ccc @{\hskip 25pt} ccc}
        \toprule
         \multirow{2}{*}[-1em]{Method} & \multicolumn{3}{@{}c@{\hskip 25pt}}{Uncertainty Scores and Predicted Probabilities} & \multicolumn{3}{c}{Uncertainty Scores Only}\\
         & \thead{Accuracy\\TP} & \thead{Accuracy\\FP} & \thead{Accuracy\\OoD} & \thead{Accuracy\\TP} & \thead{Accuracy\\FP} & \thead{Accuracy\\OoD} \\
         \midrule
         Ours & $78.32\,(1.20)$ & $37.33\,(3.05)$ & $75.47\,(0.92)$ & $76.96\,(1.14)$ & $40.88\,(2.65)$ & $66.07\,(0.99)$\\
         Ours with MC-Dropout & $\bm{79.59\,(0.61)}$ & $47.58\,(2.68)$ & $\bm{82.50\,(0.55)}$ & $\bm{77.16\,(1.19)}$ & $41.86\,(2.92)$ & $\bm{70.37\,(0.69)}$\\
         \midrule
         One-vs-All Baseline & $68.54\,(1.65)$ & $46.54\,(1.92)$ & $73.69\,(1.81)$ & $64.78\,(2.25)$ & $33.75\,(6.83)$ & $56.02\,(2.61)$\\
         Max.\ Softmax \cite{HendrycksG17} & $66.04\,(0.33)$ & $49.51\,(1.69)$ & $71.55\,(1.48)$ & $64.98\,(0.58)$ & $33.58\,(4.01)$ & $48.77\,(4.34)$\\
         Entropy & $66.98\,(0.56)$ & $48.47\,(1.74)$ & $72.36\,(1.33)$ & $66.13\,(1.07)$ & $29.90\,(5.67)$ & $51.42\,(4.62)$\\
         Bayes-by-Backprop \cite{BlundellCKW15} & $67.98\,(0.88)$ & $44.65\,(2.05)$ & $73.34\,(0.99)$ & $64.81\,(0.93)$ & $33.62\,(4.31)$ & $52.76\,(4.31)$\\
         MC-Dropout \cite{GalG15a} & $72.20\,(0.41)$ & $\bm{52.53\,(1.29)}$ & $77.38\,(0.49)$ & $66.97\,(1.88)$ & $44.31\,(2.14)$ & $59.91\,(2.10)$\\
         Deep-Ensembles \cite{LakshminarayananPB17} & $72.10\,(0.33)$ & $47.76\,(0.90)$ & $75.30\,(0.41)$ & $67.76\,(0.56)$ & $36.29\,(1.78)$ & $53.56\,(1.01)$\\
         Confident Classifier \cite{LeeLLS18} & $68.15\,(0.77)$ & $50.64\,(1.45)$ & $72.23\,(0.71)$ & $64.69\,(1.47)$ & $39.84\,(3.85)$ & $45.62\,(3.47)$\\
         GEN \cite{SensoyKCS20} & $69.86\,(2.11)$ & $49.13\,(2.43)$ & $76.04\,(1.17)$ & $70.47\,(0.94)$ & $\bm{64.88\,(1.52)}$ & $54.31\,(2.52)$\\
         \midrule
         Entropy Oracle & $76.89\,(0.32)$ & $54.33\,(2.05)$ & $86.59\,(0.37)$ & $71.36\,(1.59)$ & $61.40\,(2.77)$ & $82.35\,(0.62)$\\
         One-vs-All Oracle & $75.35\,(0.90)$ & $58.20\,(2.03)$ & $80.00\,(2.43)$ & $73.60\,(1.74)$ & $60.49\,(0.84)$ & $74.78\,(2.85)$\\
         \bottomrule
    \end{tabular}
    }
    % \scalebox{\smallerresultstablescaling}{
    % \begin{tabular}{c|cccc}
    %      Method & \thead{Accuracy\\TP} & \thead{Accuracy\\FP} & \thead{Accuracy\\OoD}\\\hline
    %      Ours & $\,()$ & $\,()$ & $\,()$\\
    %      Ours + MC-Dropout & $\,()$ & $\,()$ & $\,()$\\\hline
    %      One-vs-All Baseline & $\,()$ & $\,()$ & $\,()$\\
    %      Max. Softmax & $64.98\,(0.58)$ & $33.58\,(4.01)$ & $48.77\,(4.34)$\\
    %      Entropy & $66.13\,(1.07)$ & $29.90\,(5.67)$ & $51.42\,(4.62)$\\
    %      Bayes-by-Backprop & $64.81\,(0.93)$ & $33.62\,(4.31)$ & $52.76\,(4.31)$\\
    %      MC-Dropout & $66.97\,(1.88)$ & $44.31\,(2.14)$ & $59.91\,(2.10)$\\
    %      Deep-Ensembles & $67.76\,(0.56)$ & $36.29\,(1.78)$ & $53.56\,(1.01)$\\
    %      Confident Classifier & $64.69\,(1.47)$ & $39.84\,(3.85)$ & $45.62\,(3.47)$\\
    %      GEN & $70.47\,(0.94)$ & $\bm{64.88\,(1.52)}$ & $54.31\,(2.52)$\\\hline
    %      Entropy Oracle & $71.36\,(1.59)$ & $61.40\,(2.77)$ & $82.35\,(0.62)$\\
    %      One-vs-All Oracle & $\,()$ & $\,()$ & $\,()$
    % \end{tabular}
    % }
\end{table}

In this section, we perform CIFAR10-based OoD and FP detection in the wild, i.e., we perform both tasks jointly while presenting in-distribution and OoD data in the same mix as in \cref{tab:res_cifar10} to the classifier. To this end, we apply gradient boosting to the uncertainty scores provided by the respective methods to predict TP, FP and OoD.
In more detail, we use the following uncertainty scores:
\begin{itemize} \setlength\itemsep{-0.2em}
    \item Ours: OoD uncertainty $C(o|x)$ and entropy $H(x)$ of the estimated class probabilites
    \item Ours with MC dropout: $C(o|x)$, $H(x)$ and the standard deviations of $\hat{p}(y|x)$ summed over all $y=1,\ldots,n$ under MC dropout for $50$ forward passes.
    \item One-vs-All Baseline: Same as ``Ours''.
    \item Max softmax: Maximum softmax probability.
    \item Entropy: Entropy over estimated class probabilities.
    \item Bayes-by-Backprop: For $50$ samples from the posterior we compute $a=\frac{1}{K}\sum_1^K H(\hat{p}(y|x))$ (aleatoric uncertainty) and $b=H(\frac{1}{K}\sum_1^K \hat{p}(y|x))-a$ (epistemic uncertainty) as in \cite{KendallG17}.
    \item MC-Dropout: Entropy of estimated class probabilities averaged over $50$ forward passes, standard deviation of the class probabilities summed over all $y=1,\ldots,n$.
    \item Deep-Ensembles: Entropy of estimated class probabilites averaged over $5$ ensemble members, standard deviation of the class probabilities summed over all $y=1,\ldots,n$.
    \item Confident Classifier: Entropy over estimated class probabilities.
    \item GEN: Entropy over estimated class probabilities resulting of the estimated evidence of the Dirichlet distribution.
\end{itemize}
%For the Bayesian approaches we considered the sample mean's entropy and variance.
The corresponding results are given in \cref{tab:res_boosting} in terms of class-wise accuracy. While the right-hand half of \cref{tab:res_boosting} presents results for gradient boosting applied to the uncertainty scores of each method, aiming to predict TP, FP and OoD, the left half of the table shows analogous results while additionally using the estimated class probabilities $\hat{p}(y|x)$ as inputs for gradient boosting. We do so for the sake of accounting for other possible transformations of $\hat{p}(y|x)$ that are not explicitly constructed. The main observations are that our method outperforms the other GAN-based methods and that our method including dropout achieves the overall best performance. It can be observed that the Entropy Oracle performs very strong while using only a single uncertainty score. At a second glance, this is not surprising since a FP mostly involves the confusion of two classes while training the DNN to output maximal entropy on OoD examples is likely to result in the confusion of up to five classes, therefore yielding different entropy levels. Also in the left part as well as the right part of the table, our method including dropout is fairly close the best oracle, which is the entropy oracle. Apart from the oracle, in both studies including and excluding the estimated class probabilities $\hat{p}(y|x)$, our method including MC-dropout outperforms all other methods. However, reviewing the result in an absolute sense, there still remains plenty of room for improvement.

\section{Detailed Results on Individual OoD Datasets} \label{app:ood_datasets}

\begin{table}[t]
    \centering
    \footnotesize
    \setlength{\tabcolsep}{1pt}
    \caption{An OoD-dataset-wise breakdown of the results given in \cref{tab:res_mnist}.}
    \label{tab:res_mnist_breakdown}
    \scalebox{\resultstablescaling}{
    \begin{tabular}{l|cccc||cccc}
        Method & AUROC $\uparrow$ & AUPR-In $\uparrow$ & AUPR-Out $\uparrow$ & \thead{FPR@\\95\% TPR} $\downarrow$ & AUROC $\uparrow$ & AUPR-In $\uparrow$ & AUPR-Out $\uparrow$ & \thead{FPR@\\95\% TPR} $\downarrow$\\\hline
        & \multicolumn{4}{c||}{MNIST 0-4 vs. MNIST 5-9} & \multicolumn{4}{c}{MNIST 0-4 vs. EMNIST-Letters}\\
        Ours                & $92.34\,(2.31)$ & $93.17\,(2.78)$ & $89.96\,(2.23)$ & $38.68\,(7.09)$ & $96.44\,(0.76)$ & $86.32\,(2.69)$ & $99.11\,(0.20)$ & $13.80\,(3.06)$\\
        Ours + Dropout      & $\bm{95.50\,(0.53)}$ & $\bm{96.22\,(0.59)}$ & $93.66\,(0.83)$ & $24.46\,(2.20)$ & $96.71\,(0.55)$ & $86.99\,(2.49)$ & $99.18\,(0.14)$ & $12.49\,(1.63)$\\\hline
        One-vs-All Baseline & $93.65\,(0.97)$ & $92.32\,(1.36)$ & $94.36\,(0.84)$ & $20.06\,(2.98)$ & $91.32\,(0.31)$ & $70.45\,(1.43)$ & $97.69\,(0.10)$ & $29.30\,(1.12)$\\
        Max.\ Softmax        & $92.77\,(0.55)$ & $91.46\,(1.05)$ & $93.58\,(0.26)$ & $22.17\,(1.04)$ & $91.63\,(0.28)$ & $73.52\,(1.02)$ & $97.72\,(0.07)$ & $29.31\,(0.88)$\\
        Entropy             & $92.80\,(0.54)$ & $91.20\,(1.27)$ & $93.63\,(0.24)$ & $22.11\,(1.04)$ & $91.68\,(0.27)$ & $73.51\,(1.14)$ & $97.73\,(0.07)$ & $29.23\,(0.91)$\\
        Bayes-by-Backprop   & $93.73\,(0.99)$ & $92.74\,(1.58)$ & $93.83\,(0.69)$ & $22.17\,(1.40)$ & $90.59\,(0.80)$ & $71.60\,(2.22)$ & $97.34\,(0.23)$ & $34.28\,(1.54)$\\
        MC-Dropout          & $94.32\,(1.10)$ & $93.05\,(1.79)$ & $\bm{95.19\,(0.70)}$ & $17.27\,(1.60)$ & $92.80\,(0.37)$ & $76.85\,(1.53)$ & $98.09\,(0.09)$ & $26.72\,(1.00)$\\
        Deep-Ensembles      & $94.20\,(0.24)$ & $92.82\,(0.18)$ & $95.00\,(0.28)$ & $\bm{16.89\,(1.09)}$ & $93.08\,(0.10)$ & $77.32\,(0.73)$ & $98.16\,(0.02)$ & $24.65\,(0.37)$\\
        Confident Classifier& $95.33\,(0.74)$ & $95.43\,(1.00)$ & $94.95\,(0.64)$ & $19.74\,(1.46)$ & $94.60\,(0.41)$ & $81.71\,(1.56)$ & $98.53\,(0.11)$ & $22.28\,(1.87)$\\
        GEN                 & $88.91\,(2.64)$ & $86.09\,(4.55)$ & $88.15\,(2.20)$ & $40.49\,(5.53)$ & $\bm{99.27\,(0.30)}$ & $\bm{97.05\,(1.22)}$ & $\bm{99.82\,(0.07)}$ & $\bm{3.61\,(1.48)}$\\\hline
        Entropy Oracle      & $99.76\,(0.04)$ & $99.77\,(0.03)$ & $99.74\,(0.04)$ & $0.93\,(0.12)$ & $99.84\,(0.04)$ & $99.43\,(0.12)$ & $99.96\,(0.01)$ & $0.73\,(0.21)$\\
        One-vs-All Oracle   & $99.65\,(0.04)$ & $99.66\,(0.03)$ & $99.65\,(0.04)$ & $1.21\,(0.13)$ & $99.87\,(0.01)$ & $99.49\,(0.05)$ & $99.97\,(0.00)$ & $0.59\,(0.10)$\\\hline\hline
        %#########################################################
        & \multicolumn{4}{c||}{MNIST 0-4 vs. Omniglot} & \multicolumn{4}{c}{MNIST 0-4 vs. Fashion-MNIST}\\
        Ours                & $96.70\,(0.52)$ & $94.25\,(1.07)$ & $98.32\,(0.26)$ & $18.00\,(3.26)$ & $99.36\,(0.19)$ & $99.04\,(0.32)$ & $99.62\,(0.10)$ & $2.19\,(1.36)$\\
        Ours + Dropout      & $98.45\,(0.24)$ & $96.94\,(0.37)$ & $99.25\,(0.14)$ & $5.95\,(1.08)$ & $99.65\,(0.26)$ & $99.52\,(0.35)$ & $99.78\,(0.17)$ & $0.68\,(0.98)$\\\hline
        One-vs-All Baseline & $98.75\,(0.11)$ & $97.60\,(0.22)$ & $99.41\,(0.05)$ & $4.75\,(0.74)$ & $99.39\,(0.12)$ & $99.15\,(0.16)$ & $99.59\,(0.10)$ & $1.60\,(0.53)$\\
        Max.\ Softmax        & $98.54\,(0.06)$ & $97.10\,(0.24)$ & $99.31\,(0.02)$ & $5.52\,(0.50)$ & $99.29\,(0.23)$ & $99.03\,(0.36)$ & $99.53\,(0.15)$ & $1.79\,(1.34)$\\
        Entropy             & $98.59\,(0.06)$ & $97.15\,(0.24)$ & $99.35\,(0.01)$ & $5.36\,(0.50)$ & $99.35\,(0.22)$ & $99.06\,(0.36)$ & $99.59\,(0.15)$ & $1.74\,(1.31)$\\
        Bayes-by-Backprop   & $96.82\,(0.16)$ & $95.22\,(0.30)$ & $97.46\,(0.23)$ & $14.86\,(0.56)$ & $97.93\,(0.18)$ & $97.64\,(0.24)$ & $98.26\,(0.20)$ & $8.02\,(1.30)$\\
        MC-Dropout          & $\bm{98.96\,(0.09)}$ & $97.56\,(0.27)$ & $\bm{99.58\,(0.03)}$ & $3.85\,(0.43)$ & $99.69\,(0.05)$ & $99.52\,(0.08)$ & $99.83\,(0.03)$ & $0.75\,(0.21)$\\
        Deep-Ensembles      & $98.93\,(0.04)$ & $\bm{97.77\,(0.07)}$ & $99.52\,(0.02)$ & $\bm{3.76\,(0.20)}$ & $99.59\,(0.09)$ & $99.38\,(0.15)$ & $99.76\,(0.05)$ & $1.01\,(0.63)$\\
        Confident Classifier& $98.35\,(0.29)$ & $96.22\,(0.83)$ & $99.29\,(0.11)$ & $6.78\,(1.06)$ & $\bm{99.95\,(0.01)}$ & $\bm{99.91\,(0.03)}$ & $\bm{99.97\,(0.01)}$ & $\bm{0.06\,(0.02)}$\\
        GEN                 & $91.03\,(3.53)$ & $78.34\,(9.87)$ & $94.92\,(1.67)$ & $36.96\,(9.71)$ & $99.92\,(0.02)$ & $99.84\,(0.04)$ & $99.96\,(0.01)$ & $0.23\,(0.11)$\\\hline
        Entropy Oracle      & $99.68\,(0.09)$ & $99.25\,(0.21)$ & $99.87\,(0.04)$ & $1.26\,(0.35)$ & $100.00\,(0.00)$ & $100.00\,(0.00)$ & $100.00\,(0.00)$ & $0.00\,(0.00)$\\
        One-vs-All Oracle   & $99.68\,(0.05)$ & $99.18\,(0.13)$ & $99.88\,(0.02)$ & $1.22\,(0.17)$ & $100.00\,(0.00)$ & $100.00\,(0.00)$ & $100.00\,(0.00)$ & $0.00\,(0.00)$\\\hline\hline
        %#########################################################
        & \multicolumn{4}{c||}{MNIST 0-4 vs. SVHN} & \multicolumn{4}{c}{MNIST 0-4 vs. CIFAR10}\\
        Ours                & $99.83\,(0.09)$ & $99.50\,(0.21)$ & $99.96\,(0.02)$ & $0.17\,(0.06)$ & $99.84\,(0.08)$ & $99.76\,(0.11)$ & $99.91\,(0.05)$ & $0.21\,(0.12)$\\
        Ours + Dropout      & $99.80\,(0.22)$ & $99.47\,(0.53)$ & $99.94\,(0.07)$ & $0.32\,(0.40)$ & $99.84\,(0.19)$ & $99.78\,(0.26)$ & $99.90\,(0.12)$ & $0.28\,(0.47)$\\\hline
        One-vs-All Baseline & $99.76\,(0.07)$ & $99.30\,(0.17)$ & $99.93\,(0.04)$ & $0.36\,(0.12)$ & $99.55\,(0.16)$ & $99.39\,(0.21)$ & $99.69\,(0.14)$ & $0.75\,(0.45)$\\
        Max.\ Softmax        & $99.68\,(0.12)$ & $99.14\,(0.24)$ & $99.92\,(0.04)$ & $0.39\,(0.12)$ & $99.54\,(0.13)$ & $99.39\,(0.17)$ & $99.69\,(0.10)$ & $0.59\,(0.30)$\\
        Entropy             & $99.75\,(0.11)$ & $99.22\,(0.23)$ & $99.94\,(0.03)$ & $0.37\,(0.11)$ & $99.61\,(0.13)$ & $99.45\,(0.17)$ & $99.76\,(0.09)$ & $0.56\,(0.29)$\\
        Bayes-by-Backprop   & $97.20\,(0.32)$ & $95.75\,(0.34)$ & $98.77\,(0.24)$ & $11.46\,(3.55)$ & $97.52\,(0.40)$ & $97.48\,(0.35)$ & $97.11\,(0.73)$ & $7.44\,(2.13)$\\
        MC-Dropout          & $99.94\,(0.01)$ & $99.75\,(0.05)$ & $99.99\,(0.00)$ & $0.15\,(0.03)$ & $99.92\,(0.03)$ & $99.87\,(0.05)$ & $99.96\,(0.02)$ & $0.09\,(0.05)$\\
        Deep-Ensembles      & $99.89\,(0.01)$ & $99.55\,(0.05)$ & $99.98\,(0.00)$ & $0.25\,(0.06)$ & $99.82\,(0.05)$ & $99.73\,(0.07)$ & $99.90\,(0.03)$ & $0.20\,(0.09)$\\
        Confident Classifier& $\bm{100.00\,(0.00)}$ & $\bm{100.00\,(0.00)}$ & $\bm{100.00\,(0.00)}$ & $\bm{0.00\,(0.00)}$ & $\bm{100.00\,(0.00)}$ & $\bm{100.00\,(0.00)}$ & $\bm{100.00\,(0.00)}$ & $\bm{0.00\,(0.00)}$\\
        GEN                 & $\bm{100.00\,(0.00)}$ & $\bm{100.00\,(0.00)}$ & $\bm{100.00\,(0.00)}$ & $\bm{0.00\,(0.00)}$ & $\bm{100.00\,(0.00)}$ & $\bm{100.00\,(0.01)}$ & $\bm{100.00\,(0.00)}$ & $0.01\,(0.01)$\\\hline
        Entropy Oracle      & $100.00\,(0.00)$ & $100.00\,(0.00)$ & $100.00\,(0.00)$ & $0.00\,(0.00)$ & $100.00\,(0.00)$ & $100.00\,(0.00)$ & $100.00\,(0.00)$ & $0.00\,(0.00)$\\
        One-vs-All Oracle   & $100.00\,(0.00)$ & $100.00\,(0.00)$ & $100.00\,(0.00)$ & $0.00\,(0.00)$ & $100.00\,(0.00)$ & $100.00\,(0.00)$ & $100.00\,(0.00)$ & $0.00\,(0.00)$\\
    \end{tabular}
    }
\end{table}

\begin{table}[t]
    \centering
    \footnotesize
    \setlength{\tabcolsep}{1pt}
    \caption{An OoD-dataset-wise breakdown of the results given in \cref{tab:res_cifar10}.}
    \label{tab:res_cifar10_breakdown}
    \scalebox{\resultstablescaling}{
    \begin{tabular}{l|cccc||cccc}
        Method & AUROC $\uparrow$ & AUPR-In $\uparrow$ & AUPR-Out $\uparrow$ & \thead{FPR@\\95\% TPR} $\downarrow$ & AUROC $\uparrow$ & AUPR-In $\uparrow$ & AUPR-Out $\uparrow$ & \thead{FPR@\\95\% TPR} $\downarrow$\\\hline
        & \multicolumn{4}{c||}{CIFAR10 0-4 vs. CIFAR10 5-9} & \multicolumn{4}{c}{CIFAR10 0-4 vs. LSUN}\\
        Ours                & $65.94\,(0.29)$ & $\bm{72.26\,(0.30)}$ & $64.54\,(0.44)$ & $86.94\,(1.12)$ & $76.45\,(0.52)$ & $69.35\,(0.56)$ & $84.35\,(0.42)$ & $80.50\,(1.31)$\\
        Ours + Dropout      & $\bm{71.31\,(0.49)}$ & $71.98\,(0.35)$ & $\bm{68.45\,(0.59)}$ & $\bm{84.07\,(0.76)}$ & $\bm{82.52\,(0.72)}$ & $\bm{76.03\,(0.80)}$ & $\bm{88.01\,(0.53)}$ & $\bm{72.93\,(1.14)}$\\\hline
        One-vs-All Baseline & $65.31\,(0.67)$ & $66.28\,(0.76)$ & $62.99\,(0.44)$ & $88.15\,(0.52)$ & $74.94\,(0.79)$ & $66.33\,(1.06)$ & $82.99\,(0.61)$ & $82.50\,(1.03)$\\
        Max. Softmax        & $64.45\,(0.55)$ & $65.94\,(0.80)$ & $60.77\,(0.55)$ & $90.73\,(0.33)$ & $72.84\,(0.59)$ & $63.69\,(0.93)$ & $80.70\,(0.45)$ & $87.11\,(0.56)$\\
        Entropy             & $64.64\,(0.53)$ & $65.82\,(0.69)$ & $61.36\,(0.50)$ & $89.50\,(0.69)$ & $73.33\,(0.51)$ & $63.95\,(0.90)$ & $81.62\,(0.39)$ & $83.58\,(0.56)$\\
        Bayes-by-Backprop   & $66.78\,(0.34)$ & $67.16\,(1.07)$ & $63.22\,(0.50)$ & $88.50\,(0.62)$ & $75.31\,(0.65)$ & $66.79\,(1.13)$ & $82.75\,(0.68)$ & $82.64\,(1.16)$\\
        MC-Dropout          & $63.52\,(0.33)$ & $64.11\,(0.38)$ & $60.82\,(0.27)$ & $90.01\,(0.39)$ & $77.04\,(0.22)$ & $70.27\,(0.41)$ & $83.75\,(0.12)$ & $81.53\,(0.59)$\\
        Deep-Ensembles      & $66.85\,(0.38)$ & $67.64\,(0.57)$ & $64.07\,(0.31)$ & $87.59\,(0.42)$ & $78.03\,(0.24)$ & $69.56\,(0.47)$ & $85.49\,(0.20)$ & $77.07\,(0.66)$\\
        Confident Classifier& $65.58\,(0.13)$ & $66.94\,(0.18)$ & $62.60\,(0.29)$ & $88.46\,(0.64)$ & $75.25\,(0.33)$ & $67.26\,(0.41)$ & $82.97\,(0.36)$ & $81.66\,(0.99)$\\
        GEN                 & $65.56\,(0.50)$ & $66.67\,(0.51)$ & $61.90\,(0.67)$ & $89.09\,(1.17)$ & $75.82\,(1.22)$ & $67.74\,(1.65)$ & $82.66\,(0.75)$ & $83.19\,(1.91)$\\\hline
        Entropy Oracle      & $73.21\,(0.52)$ & $75.00\,(0.41)$ & $70.70\,(0.73)$ & $81.41\,(0.88)$ & $98.27\,(0.09)$ & $96.81\,(0.14)$ & $99.11\,(0.06)$ & $7.91\,(0.45)$\\
        One-vs-All Oracle   & $69.45\,(0.73)$ & $69.58\,(0.66)$ & $67.38\,(0.80)$ & $84.92\,(0.92)$ & $95.92\,(0.38)$ & $92.38\,(0.65)$ & $97.93\,(0.22)$ & $19.36\,(2.03)$\\\hline\hline
        %#########################################################
        & \multicolumn{4}{c||}{CIFAR10 0-4 vs. SVHN} & \multicolumn{4}{c}{CIFAR10 0-4 vs. Fashion-MNIST}\\
        Ours                & $98.50\,(0.26)$ & $92.05\,(1.07)$ & $99.72\,(0.05)$ & $5.99\,(1.27)$ & $78.78\,(0.88)$ & $72.41\,(0.85)$ & $85.12\,(0.92)$ & $81.09\,(2.84)$\\
        Ours + Dropout      & $\bm{98.93\,(0.12)}$ & $\bm{94.40\,(0.62)}$ & $\bm{99.80\,(0.02)}$ & $\bm{5.09\,(0.44)}$ & $\bm{84.75\,(1.32)}$ & $\bm{80.48\,(1.45)}$ & $\bm{88.20\,(1.38)}$ & $\bm{76.11\,(5.07)}$\\\hline
        One-vs-All Baseline & $70.07\,(4.23)$ & $45.76\,(4.99)$ & $89.65\,(1.92)$ & $91.83\,(3.14)$ & $73.94\,(1.31)$ & $68.28\,(1.45)$ & $80.23\,(1.31)$ & $89.25\,(1.33)$\\
        Max. Softmax        & $70.96\,(1.87)$ & $44.63\,(2.22)$ & $90.72\,(0.66)$ & $88.76\,(1.38)$ & $73.90\,(1.18)$ & $67.14\,(1.35)$ & $80.38\,(0.97)$ & $88.48\,(0.82)$\\
        Entropy             & $71.23\,(1.94)$ & $44.81\,(2.17)$ & $90.87\,(0.72)$ & $87.04\,(2.22)$ & $73.93\,(1.26)$ & $67.18\,(1.38)$ & $80.30\,(1.11)$ & $88.47\,(1.71)$\\
        Bayes-by-Backprop   & $76.21\,(0.58)$ & $50.10\,(1.76)$ & $92.92\,(0.24)$ & $80.85\,(1.42)$ & $74.51\,(1.34)$ & $68.52\,(1.75)$ & $80.67\,(1.13)$ & $88.11\,(1.87)$\\
        MC-Dropout          & $76.73\,(2.77)$ & $58.97\,(4.10)$ & $92.23\,(0.86)$ & $84.98\,(1.87)$ & $81.85\,(0.75)$ & $77.62\,(0.79)$ & $85.83\,(0.62)$ & $80.75\,(1.85)$\\
        Deep-Ensembles      & $72.02\,(1.11)$ & $45.95\,(2.24)$ & $90.96\,(0.30)$ & $88.13\,(0.61)$ & $72.82\,(1.20)$ & $66.38\,(2.02)$ & $79.25\,(0.63)$ & $89.97\,(0.84)$\\
        Confident Classifier& $73.60\,(0.61)$ & $48.68\,(1.08)$ & $91.71\,(0.23)$ & $85.45\,(1.08)$ & $74.47\,(0.52)$ & $68.40\,(0.77)$ & $80.85\,(0.35)$ & $87.47\,(0.60)$\\
        GEN                 & $98.43\,(0.42)$ & $91.36\,(1.82)$ & $99.71\,(0.08)$ & $5.62\,(1.68)$ & $72.44\,(3.94)$ & $66.54\,(4.86)$ & $79.12\,(2.47)$ & $89.38\,(2.57)$\\\hline
        Entropy Oracle      & $96.85\,(0.62)$ & $88.28\,(1.42)$ & $99.29\,(0.21)$ & $14.26\,(2.88)$ & $97.89\,(0.16)$ & $96.65\,(0.28)$ & $98.71\,(0.10)$ & $9.50\,(0.61)$\\
        One-vs-All Oracle   & $89.47\,(1.47)$ & $70.44\,(2.54)$ & $97.36\,(0.51)$ & $48.76\,(7.42)$ & $96.63\,(0.40)$ & $94.34\,(0.71)$ & $98.06\,(0.22)$ & $17.37\,(2.48)$\\\hline\hline
        %#########################################################
        & \multicolumn{4}{c||}{CIFAR10 0-4 vs. MNIST}\\
        Ours                & $83.24\,(3.79)$ & $75.49\,(4.07)$ & $90.43\,(2.76)$ & $58.75\,(13.88)$\\
        Ours + Dropout      & $86.64\,(1.18)$ & $\bm{82.03\,(1.28)}$ & $91.49\,(1.02)$ & $61.32\,(5.75)$\\\cline{1-5}
        One-vs-All Baseline & $78.65\,(0.75)$ & $75.69\,(0.57)$ & $83.12\,(1.09)$ & $86.70\,(2.27)$\\
        Max. Softmax        & $78.89\,(1.79)$ & $74.29\,(2.40)$ & $84.41\,(1.37)$ & $83.07\,(2.10)$\\
        Entropy             & $79.59\,(1.81)$ & $74.66\,(2.38)$ & $85.33\,(1.55)$ & $77.57\,(3.12)$\\
        Bayes-by-Backprop   & $71.46\,(4.50)$ & $62.19\,(6.23)$ & $79.41\,(3.14)$ & $87.01\,(3.35)$\\
        MC-Dropout          & $82.98\,(1.21)$ & $79.05\,(1.68)$ & $87.44\,(0.83)$ & $73.99\,(1.97)$\\
        Deep-Ensembles      & $81.33\,(1.54)$ & $76.97\,(1.67)$ & $85.98\,(1.26)$ & $78.85\,(3.47)$\\
        Confident Classifier& $73.41\,(2.43)$ & $64.90\,(4.16)$ & $81.47\,(1.62)$ & $83.21\,(2.58)$\\
        GEN                 & $\bm{87.63\,(3.97)}$ & $80.69\,(5.53)$ & $\bm{93.31\,(2.29)}$ & $\bm{45.16\,(11.84)}$\\\cline{1-5}
        Entropy Oracle      & $97.59\,(0.26)$ & $97.10\,(0.30)$ & $97.82\,(0.30)$ & $10.16\,(2.03)$\\
        One-vs-All Oracle   & $97.56\,(0.20)$ & $96.52\,(0.40)$ & $98.36\,(0.07)$ & $13.25\,(1.95)$\\
    \end{tabular}
    }
\end{table}

\begin{table}[t]
    \centering
    \footnotesize
    \setlength{\tabcolsep}{1pt}
    \caption{An OoD-dataset-wise breakdown of the results given in \cref{tab:res_cifar100}.}
    \label{tab:res_cifar100_breakdown}
    \scalebox{\resultstablescaling}{
    \begin{tabular}{l|cccc||cccc}
        Method & AUROC $\uparrow$ & AUPR-In $\uparrow$ & AUPR-Out $\uparrow$ & \thead{FPR@\\95\% TPR} $\downarrow$ & AUROC $\uparrow$ & AUPR-In $\uparrow$ & AUPR-Out $\uparrow$ & \thead{FPR@\\95\% TPR} $\downarrow$\\\hline
        & \multicolumn{4}{c||}{CIFAR100 0-49 vs. CIFAR100 50-99} & \multicolumn{4}{c}{CIFAR100 0-49 vs. LSUN}\\
        Ours                 & $64.52\,(0.17)$ & $65.00\,(0.35)$ & $61.84\,(0.21)$ & $89.57\,(0.47)$ & $65.30\,(0.56)$ & $52.94\,(0.30)$ & $75.99\,(0.42)$ & $90.36\,(0.61)$\\
        Ours + Dropout       & $\bm{66.97\,(0.30)}$ & $65.14\,(0.37)$ & $\bm{64.41\,(0.41)}$ & $\bm{87.74\,(0.48)}$ & $68.60\,(0.62)$ & $56.80\,(0.81)$ & $78.09\,(0.46)$ & $88.77\,(0.65)$\\\hline
        One-vs-All Baseline  & $61.62\,(0.31)$ & $61.30\,(0.42)$ & $58.80\,(0.34)$ & $91.49\,(0.37)$ & $64.09\,(1.09)$ & $51.10\,(1.14)$ & $75.24\,(0.84)$ & $90.81\,(0.92)$\\
        Max. Softmax         & $62.43\,(0.72)$ & $62.87\,(1.41)$ & $59.39\,(0.61)$ & $90.92\,(0.48)$ & $65.21\,(1.34)$ & $53.47\,(1.45)$ & $76.01\,(1.09)$ & $89.96\,(1.00)$\\
        Entropy              & $63.53\,(0.62)$ & $63.45\,(1.36)$ & $60.59\,(0.61)$ & $90.23\,(0.76)$ & $66.62\,(1.51)$ & $54.37\,(1.65)$ & $77.22\,(1.16)$ & $88.76\,(1.16)$\\
        Bayes-by-Backprop    & $64.16\,(0.36)$ & $64.56\,(0.61)$ & $60.64\,(0.41)$ & $90.44\,(0.47)$ & $67.02\,(0.84)$ & $55.60\,(1.09)$ & $76.79\,(0.62)$ & $90.05\,(0.57)$\\
        MC-Dropout           & $62.97\,(0.22)$ & $62.21\,(0.41)$ & $60.02\,(0.32)$ & $90.34\,(0.37)$ & $67.19\,(1.00)$ & $56.12\,(0.89)$ & $77.92\,(0.92)$ & $87.38\,(1.22)$\\
        Deep-Ensembles       & $66.95\,(0.20)$ & $\bm{65.94\,(0.39)}$ & $63.82\,(0.10)$ & $87.90\,(0.47)$ & $\bm{71.34\,(0.64)}$ & $\bm{59.76\,(0.85)}$ & $\bm{80.73\,(0.62)}$ & $\bm{84.60\,(1.38)}$\\
        Confident Classifier & $62.39\,(0.67)$ & $62.31\,(0.12)$ & $59.78\,(0.71)$ & $90.10\,(0.54)$ & $64.24\,(0.83)$ & $51.61\,(0.74)$ & $75.73\,(0.72)$ & $89.53\,(0.94)$\\
        GEN                  & $62.66\,(0.40)$ & $61.82\,(0.50)$ & $60.39\,(0.48)$ & $89.89\,(0.69)$ & $62.59\,(1.01)$ & $51.97\,(0.51)$ & $72.87\,(1.06)$ & $93.96\,(1.29)$\\\hline
        Entropy Oracle       & $64.42\,(0.31)$ & $65.50\,(0.31)$ & $61.35\,(0.37)$ & $89.87\,(0.86)$ & $70.00\,(0.33)$ & $58.01\,(0.24)$ & $79.95\,(0.34)$ & $85.32\,(0.88)$\\
        One-vs-All Oracle    & $67.55\,(1.07)$ & $66.49\,(1.24)$ & $65.17\,(1.02)$ & $86.70\,(0.53)$ & $78.10\,(0.99)$ & $64.04\,(1.12)$ & $86.95\,(0.92)$ & $70.25\,(2.55)$\\\hline\hline
        %#########################################################
        & \multicolumn{4}{c||}{CIFAR100 0-49 vs. SVHN} & \multicolumn{4}{c}{CIFAR100 0-49 vs. Fashion-MNIST}\\
        Ours                 & $\bm{96.47\,(1.26)}$ & $\bm{81.59\,(5.55)}$ & $\bm{99.25\,(0.26)}$ & $\bm{11.95\,(3.93)}$ & $62.98\,(1.71)$ & $51.89\,(2.88)$ & $73.95\,(1.21)$ & $92.37\,(1.04)$\\
        Ours + Dropout       & $95.63\,(1.27)$ & $80.03\,(4.04)$ & $99.17\,(0.26)$ & $15.47\,(4.21)$ & $64.68\,(2.02)$ & $57.46\,(1.99)$ & $74.57\,(1.48)$ & $91.50\,(1.15)$\\\hline
        One-vs-All Baseline  & $60.63\,(3.90)$ & $26.37\,(6.04)$ & $87.22\,(1.37)$ & $93.11\,(1.92)$ & $59.80\,(2.67)$ & $51.35\,(3.14)$ & $70.68\,(1.29)$ & $95.13\,(0.65)$\\
        Max. Softmax         & $66.32\,(2.15)$ & $37.61\,(3.13)$ & $89.38\,(0.78)$ & $89.73\,(1.60)$ & $68.46\,(0.56)$ & $60.24\,(0.85)$ & $77.66\,(0.47)$ & $89.36\,(0.46)$\\
        Entropy              & $68.09\,(2.54)$ & $38.90\,(3.51)$ & $89.84\,(0.88)$ & $89.65\,(1.44)$ & $68.95\,(0.48)$ & $60.59\,(0.68)$ & $78.35\,(0.45)$ & $87.66\,(0.75)$\\
        Bayes-by-Backprop    & $72.62\,(1.07)$ & $42.79\,(1.62)$ & $92.21\,(0.50)$ & $81.59\,(2.61)$ & $66.22\,(2.02)$ & $57.55\,(3.02)$ & $75.36\,(1.32)$ & $91.91\,(1.20)$\\
        MC-Dropout           & $65.65\,(2.51)$ & $35.78\,(3.03)$ & $88.94\,(1.09)$ & $91.07\,(1.95)$ & $\bm{70.50\,(1.49)}$ & $\bm{63.75\,(1.52)}$ & $\bm{78.70\,(1.54)}$ & $88.43\,(2.58)$\\
        Deep-Ensembles       & $75.01\,(1.06)$ & $49.77\,(2.19)$ & $91.86\,(0.46)$ & $87.10\,(1.26)$ & $67.42\,(1.14)$ & $60.17\,(1.35)$ & $77.95\,(0.83)$ & $\bm{85.46\,(1.12)}$\\
        Confident Classifier & $68.73\,(0.49)$ & $39.41\,(0.95)$ & $90.33\,(0.17)$ & $87.73\,(0.71)$ & $67.04\,(0.89)$ & $56.88\,(0.96)$ & $78.16\,(0.38)$ & $86.27\,(1.46)$\\
        GEN                  & $92.83\,(2.72)$ & $71.54\,(8.20)$ & $98.58\,(0.56)$ & $23.57\,(7.35)$ & $59.12\,(6.18)$ & $49.44\,(5.14)$ & $70.39\,(4.78)$ & $94.58\,(3.06)$\\\hline
        Entropy Oracle       & $86.97\,(0.75)$ & $63.53\,(1.22)$ & $96.97\,(0.21)$ & $50.09\,(2.55)$ & $97.47\,(0.45)$ & $94.63\,(0.85)$ & $98.83\,(0.22)$ & $10.94\,(1.96)$\\
        One-vs-All Oracle    & $98.17\,(0.22)$ & $90.82\,(0.89)$ & $99.65\,(0.04)$ & $8.11\,(0.98)$ & $99.62\,(0.05)$ & $99.13\,(0.12)$ & $99.83\,(0.02)$ & $1.65\,(0.23)$\\\hline\hline
        %#########################################################
        & \multicolumn{4}{c||}{CIFAR100 0-49 vs. MNIST}\\
        Ours                 & $77.28\,(5.81)$ & $70.28\,(6.06)$ & $84.38\,(4.72)$ & $78.32\,(10.93)$\\
        Ours + Dropout       & $77.13\,(3.87)$ & $73.64\,(4.41)$ & $81.30\,(2.83)$ & $90.07\,(2.55)$\\\cline{1-5}
        One-vs-All Baseline  & $71.94\,(3.24)$ & $64.45\,(3.29)$ & $78.80\,(2.69)$ & $90.07\,(2.67)$\\
        Max. Softmax         & $75.57\,(4.00)$ & $68.72\,(4.65)$ & $83.09\,(2.95)$ & $81.22\,(5.60)$\\
        Entropy              & $79.14\,(4.38)$ & $72.10\,(5.53)$ & $85.59\,(3.17)$ & $76.56\,(8.39)$\\
        Bayes-by-Backprop    & $71.27\,(2.11)$ & $63.84\,(3.33)$ & $78.08\,(1.57)$ & $91.43\,(1.92)$\\
        MC-Dropout           & $73.41\,(1.90)$ & $66.48\,(3.12)$ & $80.45\,(1.38)$ & $87.39\,(2.49)$\\
        Deep-Ensembles       & $\bm{85.92\,(1.62)}$ & $\bm{82.02\,(1.88)}$ & $\bm{89.77\,(1.50)}$ & $\bm{68.03\,(5.45)}$\\
        Confident Classifier & $77.66\,(1.41)$ & $69.86\,(1.33)$ & $85.11\,(0.82)$ & $76.68\,(1.55)$\\
        GEN                  & $77.88\,(6.84)$ & $71.93\,(7.67)$ & $83.30\,(5.81)$ & $80.26\,(14.55)$\\\cline{1-5}
        Entropy Oracle       & $99.79\,(0.06)$ & $99.56\,(0.12)$ & $99.90\,(0.03)$ & $1.02\,(0.28)$\\
        One-vs-All Oracle    & $99.99\,(0.01)$ & $99.97\,(0.01)$ & $99.99\,(0.00)$ & $0.03\,(0.03)$\\
    \end{tabular}
    }
\end{table}

\begin{table}[t]
    \centering
    \footnotesize
    \setlength{\tabcolsep}{1pt}
    \caption{An OoD-dataset-wise breakdown of the results given in \cref{tab:res_tinyimagenet}.}
    \label{tab:res_tinyimagenet_breakdown}
    \scalebox{\resultstablescaling}{
    \begin{tabular}{l|cccc||cccc}
        Method & AUROC $\uparrow$ & AUPR-In $\uparrow$ & AUPR-Out $\uparrow$ & \thead{FPR@\\95\% TPR} $\downarrow$ & AUROC $\uparrow$ & AUPR-In $\uparrow$ & AUPR-Out $\uparrow$ & \thead{FPR@\\95\% TPR} $\downarrow$\\\hline
        & \multicolumn{4}{c||}{Tiny ImageNet 0-99 vs. Tiny ImageNet 100-199} & \multicolumn{4}{c}{Tiny ImageNet 0-99 vs. SVHN}\\
        Ours                 & $59.16\,(0.52)$ & $61.14\,(0.44)$ & $56.15\,(0.40)$ & $93.10\,(0.35)$ & $98.98\,(0.44)$ & $93.46\,(2.87)$ & $99.81\,(0.08)$ & $3.49\,(1.31)$\\
        Ours + Dropout       & $\bm{62.01\,(0.22)}$ & $\bm{64.70\,(0.27)}$ & $\bm{58.16\,(0.24)}$ & $\bm{91.97\,(0.40)}$ & $\bm{99.39\,(0.41)}$ & $\bm{96.18\,(2.67)}$ & $\bm{99.89\,(0.07)}$ & $\bm{2.19\,(1.36)}$\\\hline
        One-vs-All Baseline  & $58.53\,(0.45)$ & $60.28\,(0.33)$ & $55.59\,(0.43)$ & $93.29\,(0.27)$ & $59.65\,(4.57)$ & $32.66\,(4.11)$ & $85.21\,(1.86)$ & $97.02\,(0.95)$\\
        Max. Softmax         & $58.16\,(0.25)$ & $60.98\,(0.30)$ & $55.48\,(0.28)$ & $93.22\,(0.52)$ & $63.33\,(1.39)$ & $32.44\,(2.05)$ & $88.51\,(0.54)$ & $90.93\,(1.05)$\\
        Entropy              & $58.69\,(0.30)$ & $61.42\,(0.31)$ & $55.68\,(0.12)$ & $93.30\,(0.27)$ & $65.35\,(1.48)$ & $34.25\,(1.80)$ & $88.73\,(0.76)$ & $91.90\,(1.66)$\\
        Bayes-by-Backprop    & $57.99\,(0.47)$ & $60.44\,(0.42)$ & $55.25\,(0.35)$ & $93.18\,(0.60)$ & $74.18\,(1.78)$ & $38.32\,(1.70)$ & $93.23\,(0.81)$ & $73.54\,(5.25)$\\
        MC-Dropout           & $61.12\,(0.40)$ & $64.16\,(0.39)$ & $57.24\,(0.49)$ & $93.11\,(0.62)$ & $63.54\,(4.81)$ & $37.05\,(3.30)$ & $87.00\,(2.21)$ & $95.33\,(1.73)$\\
        Deep-Ensembles       & $60.70\,(0.23)$ & $64.15\,(0.23)$ & $56.84\,(0.24)$ & $93.16\,(0.34)$ & $73.51\,(0.81)$ & $48.67\,(1.31)$ & $90.92\,(0.42)$ & $90.76\,(1.18)$\\
        Confident Classifier & $58.45\,(0.23)$ & $61.26\,(0.31)$ & $55.36\,(0.17)$ & $93.22\,(0.53)$ & $61.86\,(1.66)$ & $31.25\,(1.56)$ & $87.62\,(0.61)$ & $92.79\,(0.76)$\\
        GEN                  & $56.32\,(0.80)$ & $58.82\,(0.48)$ & $53.80\,(0.79)$ & $94.07\,(0.60)$ & $91.40\,(6.55)$ & $71.24\,(16.70)$ & $98.24\,(1.40)$ & $24.80\,(15.25)$\\\hline
        Entropy Oracle       & $58.57\,(0.73)$ & $61.46\,(0.75)$ & $55.49\,(0.65)$ & $93.23\,(0.68)$ & $84.04\,(2.72)$ & $57.38\,(4.13)$ & $96.14\,(0.82)$ & $57.04\,(8.27)$\\
        One-vs-All Oracle    & $60.16\,(0.31)$ & $61.23\,(0.23)$ & $57.46\,(0.35)$ & $91.92\,(0.57)$ & $99.39\,(0.19)$ & $95.80\,(1.31)$ & $99.89\,(0.03)$ & $2.25\,(0.70)$\\\hline\hline
        %#########################################################
        & \multicolumn{4}{c||}{Tiny ImageNet 0-99 vs. FMNIST} & \multicolumn{4}{c}{Tiny ImageNet 0-99 vs. MNIST}\\
        Ours                 & $55.53\,(4.69)$ & $42.81\,(4.04)$ & $69.36\,(3.95)$ & $93.93\,(3.42)$ & $61.69\,(3.34)$ & $49.40\,(3.00)$ & $73.63\,(3.26)$ & $91.39\,(3.62)$\\
        Ours + Dropout       & $\bm{95.06\,(1.45)}$ & $\bm{89.18\,(2.88)}$ & $\bm{97.75\,(0.68)}$ & $\bm{17.29\,(4.71)}$ & $\bm{99.78\,(0.18)}$ & $\bm{99.52\,(0.39)}$ & $\bm{99.90\,(0.09)}$ & $\bm{1.04\,(0.94)}$\\\hline
        One-vs-All Baseline  & $45.63\,(4.00)$ & $41.60\,(3.30)$ & $59.66\,(2.04)$ & $99.70\,(0.21)$ & $51.48\,(10.97)$ & $45.52\,(10.95)$ & $64.34\,(7.06)$ & $97.71\,(2.51)$\\
        Max. Softmax         & $61.16\,(2.40)$ & $51.69\,(2.62)$ & $71.73\,(1.96)$ & $94.55\,(1.42)$ & $58.91\,(2.05)$ & $48.66\,(2.13)$ & $70.89\,(2.03)$ & $94.19\,(1.70)$\\
        Entropy              & $60.40\,(2.88)$ & $52.07\,(2.77)$ & $69.67\,(2.27)$ & $97.10\,(1.16)$ & $58.77\,(2.97)$ & $49.18\,(2.26)$ & $69.79\,(3.04)$ & $95.64\,(2.23)$\\
        Bayes-by-Backprop    & $61.83\,(2.86)$ & $50.29\,(2.99)$ & $73.37\,(2.38)$ & $91.78\,(2.26)$ & $63.37\,(5.57)$ & $48.91\,(6.19)$ & $76.10\,(4.20)$ & $87.10\,(5.19)$\\
        MC-Dropout           & $56.34\,(6.24)$ & $53.15\,(4.84)$ & $65.90\,(3.83)$ & $98.79\,(0.84)$ & $71.00\,(3.85)$ & $66.16\,(4.66)$ & $76.05\,(2.72)$ & $95.63\,(2.55)$\\
        Deep-Ensembles       & $58.77\,(1.34)$ & $55.05\,(2.29)$ & $66.48\,(0.83)$ & $99.35\,(0.25)$ & $65.32\,(1.95)$ & $59.31\,(1.74)$ & $72.49\,(1.96)$ & $96.40\,(1.83)$\\
        Confident Classifier & $58.59\,(2.13)$ & $49.75\,(1.96)$ & $68.94\,(1.61)$ & $97.04\,(0.84)$ & $57.31\,(5.67)$ & $49.00\,(5.73)$ & $68.20\,(3.76)$ & $96.97\,(1.45)$\\
        GEN                  & $72.10\,(11.72)$ & $60.00\,(14.00)$ & $83.71\,(7.27)$ & $67.91\,(17.06)$ & $93.78\,(4.14)$ & $87.74\,(6.11)$ & $96.64\,(2.87)$ & $22.91\,(18.67)$\\\hline
        Entropy Oracle       & $91.63\,(2.01)$ & $85.61\,(3.03)$ & $95.54\,(1.20)$ & $35.67\,(8.17)$ & $96.29\,(1.14)$ & $93.32\,(2.04)$ & $98.11\,(0.61)$ & $17.98\,(5.19)$\\
        One-vs-All Oracle    & $99.81\,(0.11)$ & $99.50\,(0.27)$ & $99.92\,(0.05)$ & $0.74\,(0.43)$ & $100\,(0.00)$ & $100\,(0.00)$ & $100\,(0.00)$ & $0.00\,(0.00)$
    \end{tabular}
    }
\end{table}

In \cref{sec:experiments} we have presented results on the in-distribution datasets MNIST 0-4, CIFAR10 0-4 and CIFAR100 0-49 versus the entirety of all respective OoD datasets. To give more detailed insights, this appendix section contains comparisons of the respective in-distribution dataset and each of the single corresponding OoD datasets.

The comparison by dataset for the MNIST 0-4 as in-distribution set in the OoD-dataset-wise breakdown given in \cref{tab:res_mnist_breakdown} shows that we achieve mid-tier results compared to the other methods, except for when MNIST 5-9 is considered as the OoD dataset. In that challenging case, we achieve stronger results compared to the other methods.
Examining the results of our OoD experiments with CIFAR10 0-4 being the in-distribution dataset in \cref{tab:res_cifar10_breakdown}, it becomes apparent that we consistently outperform all other methods on each single OoD dataset. The performance gains are particularly pronounced when using the very similar datasets CIFAR10 5-9 and LSUN as OoD datasets. For CIFAR100 0-49 we have a similar situation as for MNIST, where our approach performs particularly well on the difficult task of CIFAR100 0-49 vs. CIFAR100 50-99 while achieving mid-tier results on the other tasks. This consistency supports the finding that our method -- especially in the case of OoD datasets very similar to the in-distribution dataset -- shows the most improvement compared to other methods. This finding might be to some extent attributable to our tight class shielding.

\end{document}